\DeclareMathOperator*{\argmax}{arg\,max}
\DeclareMathOperator*{\argmin}{arg\,min}
\Crefname{assumption}{Assumption}{Assumptions}
\newcommand{\pa}{\mathrm{\pa}}
\renewcommand{\eqref}[1]{(\ref{#1})}
\newcommand{\RN}[1]{%
  \textup{\uppercase\expandafter{\romannumeral#1}}%
}
\def\boxit#1{\vbox{\hrule\hbox{\vrule\kern6pt\vbox{\kern6pt#1\kern6pt}\kern6pt\vrule}\hrule}}
\newcommand{\Pbb}{\mathbb{P}}
\newcommand{\Ebb}{\mathbb{E}}
\newcommand{\Ecal}{\mathcal{E}}
\newcommand{\KL}{\mathrm{KL}}
\newcommand{\kl}{\mathrm{kl}}
\newcommand{\Alt}{\mathrm{Alt}}
\algnewcommand{\IfThenElse}[3]{
  \State \algorithmicif\ #1\ \algorithmicthen\ #2\ \algorithmicelse\ #3}
\algnewcommand{\Foro}[2]{
  \State \algorithmicfor\ #1\ \algorithmicdo\ #2}
\newcommand{\ep}{\hfill $\Box$}
\renewcommand{\vec}{\boldsymbol}
\DeclareMathOperator{\EXP}{\mathbb{E}}
\renewcommand{\tilde}{\widetilde}
\renewcommand{\Pr}{\mathbb{P}}
\newcommand{\set}[1]{\mathcal{#1}}
\newcommand{\indicator}{\mathds{1}}
\def\maketitle{\par
 \begingroup
   \def\thefootnote{\fnsymbol{footnote}}
   \def\@makefnmark{\hbox to 0pt{$^{\@thefnmark}$\hss}}
   \deffootnote[1.7em]{1.6em}{2em}{$^\thefootnotemark$}
   \@maketitle \@thanks
 \endgroup
\setcounter{footnote}{0}
 \let\maketitle\relax \let\@maketitle\relax
 \gdef\@thanks{}\gdef\@author{}\gdef\@title{}\let\thanks\relax}
\theoremstyle{plain}
\newtheorem{theorem}{Theorem}[section]
\newtheorem{lemma}[theorem]{Lemma}
\newtheorem{corollary}[theorem]{Corollary}
\newtheorem{proposition}[theorem]{Proposition}
\newtheorem{definition}[theorem]{Definition}
\begin{document}

\title{The Role of Contextual Information in Best Arm Identification}

\newcommand*{\affaddr}[1]{#1} 
\newcommand*{\affmark}[1][*]{\textsuperscript{#1}}
\newcommand*{\equalcontribution}[1][*]{\textsuperscript{*}}
\newcommand*{\email}[1]{\texttt{#1}}

\author{%
Masahiro Kato\affmark[1]\thanks{Equal contributions. Masahiro Kato: \email{mkato-csecon@g.ecc.u-tokyo.ac.jp}. Kaito Ariu: \email{ariu@kth.se}.}\ \ \ \ \ \ Kaito Ariu\affmark[2]\footnotemark[1]\\
\affaddr{\affmark[1] University of Tokyo}\\
\affaddr{\affmark[2] CyberAgent Inc}
}

\maketitle

\begin{abstract}
We study the best-arm identification problem with fixed confidence when contextual (covariate) information is available in stochastic bandits. In each round, we observe contextual information before selecting an arm. The distribution of the reward associated with the selected arm depends on the observed contextual information. We are interested in finding the arm with the maximum mean reward marginalized over the contextual distribution and not the mean reward conditioned on contexts. Our goal is to identify the best arm with a minimal number of samplings under a given value of the error rate. First, we derive the instance-specific sample-complexity lower bounds under the contextual information. Then, we propose a context-aware version of the ``Track-and-Stop'' strategy, wherein the proportion of the arm draws tracks the set of optimal allocations, and prove that the expected number of arm draws asymptotically matches the lower bound. We demonstrate that the contextual information can be used to improve the efficiency of the identification of the best marginalized mean reward when compared with the results of \citet{Garivier2016}. Furthermore, we experimentally confirm that context information contributes to faster best-arm identification.
\end{abstract}

\section{Introduction}
This paper studies best-arm identification (BAI) with contextual information in stochastic multi-armed bandit (MAB) problems. 
We define the best arm as the arm with the maximum marginalized mean reward, where the expectation is defined over the context distribution, not on a specific context.
We call this setting contextual BAI. The goal is to identify the best arm with a fixed confidence level and a smaller sample complexity defined by the probably approximately correct (PAC) framework. The instance-specific sample complexity of BAI without contextual information is now well understood. There exists an instance-specific lower bound \citep{Kaufman2016complexity,Garivier2016} and optimal algorithms whose performance guarantee matches the lower bound \citep{Kaufman2016complexity, Garivier2016, degenne2019non}; however, that of contextual BAI has never been elucidated. 

Formally, we consider the following setting. 
At each time $t=1,2,\dots$, an agent observes a context (covariate) $X_t\in\mathcal{X}$ and chooses an arm $A_t \in [K] = \{1,\dots, K\}$, where $\mathcal{X}$ denotes the context space. Then, the agent immediately receives a reward (or outcome) $R_t$ linked to the arm $A_t$. This setting is called the bandit feedback or Rubin causal model \citep{Neyman,Rubin1974}; that is, a reward in round $t$ is $R_t=\sum^K_{a=1}\mathbbm{1}[A_t = a]R_{t, a}$, where $R_{t, a}$ is a potential independent (random) reward. 
We assume that $X_t$ is independent and identically distributed (i.i.d.) over $[T]$ and denote the distribution of $X_t$ by $\zeta$. 
Given the context $x \in \mathcal{X}$, we denote the reward distributions of the potential outcomes as $\vec{p} = (p_{1, x}, p_{2, x}, \ldots, p_{K, x})$ and their means as $\vec{\mu} = (\mu_{1, x}, \mu_{2, x}, \ldots, \mu_{K, x})$. 
Let $\mathcal{V} = (\vec{p}, \zeta)$ (this can be written as $  \nu = ((\mu_{a,x}), (\zeta_x))$ when the rewards follow a distribution that belongs to a single parameter exponential family, and the contexts are finite) be a bandit problem. 
Let $\mathbb{P}_{\set{V}}$ (resp. $\mathbb{P}_{\nu}$ ) and $\mathbb{E}_{\set{V}}$ (resp. $(\mathbb{E}_{\nu})$) be the probability and expectations under model $\set{V}$ (resp. $\nu$), respectively. 
Then, $\mu_a = \mathbb{E}_{X \sim \zeta}[\mu_{a, X}] = \mathbb{E}_{X \sim \zeta}[\mathbb{E}_{\set{V}}[R_{t, a} | X]] = \mathbb{E}_{\set{V}}[R_{t, a}]$ is the average reward marginalized over $\set{X}$. We assume that $\mathcal{V}$ belongs to a class $\Omega = \{ (\vec{p}, \zeta): \exists a^* \in [K] \;s.t.\; \forall a \neq a^*, \mu_{a^*} > \mu_{a} \}$; that is, the best arm $a^*(\set{V}) = \argmax_a \mu_a$ is uniquely defined. 
 Let $p_{a, x}$ and $q_{a, x}$ be two absolutely continuous probability distributions (w.r.t. the Lebesgue measure) of $R_{t, a}$, given $X_{t} = x$. We define the Kullback--Leibler (KL) divergence from $p_{a, x}$ to $ q_{a, x}$ as
\begin{align*}
&\KL(p_{a, x}, q_{a, x}) := \begin{cases}\int_{\mathbb{R}} \log\left(\frac{p_{a, x}}{
q_{a, x}}(r)\right) \mathrm{d}p_{a, x}(r)\ \mathrm{if}\ q_{a,x}\ll p_{a,x},\\
+\infty \quad \text{otherwise}.
\end{cases}
\end{align*}
We assume that for all $(\vec{p}, \zeta),  (\vec{q}, \zeta) \in \Omega$, if $p_{a, x}\neq  q_{a, x}$, then $0 < \KL(p_{a, x}, q_{a, x}) < +\infty$.
For distributions that belong to the single parameter exponential family, we introduce the KL divergence from the distribution with mean $\mu$ to the distribution with mean $\nu$ as $\kl(\mu, \nu)$. Furthermore, for the Bernoulli distributions, we denote the KL divergence by $d(\mu, \nu) =\mu\log (\mu/\nu) + (1-\mu)\log((1-\mu)/(1-\nu))$ with the convention that $d(0, 0) = d(1,1) = 0$.

Let $\mathcal{F}_t = \sigma(X_1, A_1, R_1, \ldots, X_t, A_t, R_t, X_{t+1})$ and $\mathcal{G}_t = \sigma(X_1, A_1, R_1, \ldots, X_t, A_t, R_t)$ be the sigma-algebras generated by the observations up to immediately before the selection of the arm at time $t+1$ and all observations up to time $t$, respectively. The strategy or algorithm of the best arm identification consists of the following three elements: a sampling, stopping, and decision rules. A sampling rule selects from which arm we collect the sample each time based on past observation ($ A_t$ is $\mathcal{F}_{t-1}$-measurable). The stopping rule determines when to stop sampling based on the past observation. We denote $\tau$ as this time; $\tau$ is the stopping time with respect to the filtration $(\mathcal{G}_t)_{t\ge1}$. The decision rule estimates the best arm $\hat{a}_\tau$ based on observation up to time $\tau$ ($\hat{a}_\tau$ is $\mathcal{G}_\tau$-measurable). 

We focus on the fixed confidence setting; that is, with a given admissible failure probability $\delta \in (0, 1)$, the algorithm is guaranteed to have $\mathbb{P}( \hat{a}_\tau \neq \argmax_a \mu_a) \leq \delta$. We define $\delta$-PAC to formalize this property:
\begin{definition}
An algorithm is $\delta$-PAC if for all $\set{V} \in \Omega$, $\mathbb{P}_\set{V}( \hat{a}_\tau^* \neq {a}^*(\set{V}) ) \leq \delta $ and $ \Pr_{\set{V}}(\tau < \infty) = 1$.
\end{definition}
Later, we propose algorithms that are $\delta$-PAC.

We reemphasize that although we can use contextual information, our primary interest is not in the mean reward conditioned on each context. Similar problems are frequently considered in the literature on causal inference that mainly discusses the efficient estimation of causal parameters. The assigned treatment (chosen arm) and observed outcomes for each treatment (reward) and covariate (context) are given therein. Here, we are not interested in the distribution of the covariate; rather we are interested in the estimation of the expected value of the outcome of the treatment marginalized over the covariate distribution; that is, the average treatment effect (ATE) \citep{imbens_rubin_2015}. For this setting, \citet{Laan2008TheCA} and \citet{Hahn2011} proposed experimental design methods to estimate the ATE more efficiently by assigning treatments based on the covariate. According to their results, even if the covariates are marginalized, the variance of the estimator can be reduced with the help of the covariate information. \citet{Karlan2014} applied the method of \citet{Hahn2011} to test how donors respond to new information about the effectiveness of charity. These studies have been attempted to be improved by \citet{Meehan2018} and \citet{kato2020efficienttest}. 

For each $x\in\mathcal{X}$, we define allocations for each arm with the context $x$ as $\Sigma^K_x = \{w_{a, x}\in\mathbb{R}^k_+: w_{1, x} + \cdots + w_{K, x} = 1\}$. Let $\set{W}$ be all possible such allocations.
We denote by $N_x(t)$ and $N_{a, x}(t)$ the number of times we observe context $x$, and we choose arm $a$ given context $x$; that is, $N_x(t) = \sum_{s=1}^t \indicator\{ X_s = x\}$ and $N_{a, x}(t) = \sum_{s=1}^t \indicator\{X_s = x, A_s = a\}$, respectively.

\paragraph{Main results.} We briefly summarize our contributions. 

First, we establish the instance-specific lower bound on contextual BAI for both continuous and finite context cases. The derived lower bound formula has smaller sample complexity than that of lower bound formula in \citet{Garivier2016}, suggesting that a faster BAI may be possible.

Then, we propose optimal algorithms for two cases: (i) two-armed Gaussian bandits where the arms and context jointly follow the multivariate normal distribution; and  
(ii) MAB with reward distributions belonging to the single parameter exponential family and finite contexts. 
We prove that the sample complexity upper bounds of the proposed algorithms asymptotically match the lower bounds.

\paragraph{Organization.} This paper is organized as follows. In Section~\ref{sec:lower_bounds}, we derive the general instance-specific lower bounds for contextual BAI for a case with continuous contexts. Then, in Section~\ref{sec:cont_2arm}, we discuss an optimal algorithm for two-armed Gaussian bandits with continuous contexts.
Section~\ref{sec:lower_bound_finite} focuses on the lower bound when the number of contexts is finite, and the reward distributions are from the single parameter exponential family. In Section~\ref{sec:optimal}, for the finite context case, we obtain the optimal allocations for each pair of contexts and actions by simplifying the lower bound formula.
In Section~\ref{sec:trak_stop}, in the same setting of Section~\ref{sec:lower_bound_finite}, we show an optimal algorithm.
We describe details of the sampling, stopping, and decision rules that are the core of the proposed algorithm and demonstrate that the algorithm is $\delta$-PAC. 
We further confirm that the sample complexity of the proposed algorithm is asymptotically optimal. Section~\ref{sec:experiments} presents the results of our numerical experiments.

\paragraph{Related work.} The stochastic MAB problem is a classical abstraction of the sequential decision-making problem \citep{Thompson1933,Robbins1952,Lai1985}. BAI is a paradigm of the MAB problem, where we consider pure exploration to find the best arm. Several strategies and efficiency metrics have been proposed for BAI \citep{bechhofer1968sequential,Paulson1964,Mannor2004,EvanDar2006,Bubeck2011,Gabillon2012,Karnin2013,Garivier2016,Jamieson2014}. BAI with linear bandits \citep{Soare2014,Xu2018,Tao2018,Fiez2019,jedra2020optimal}, BAI with multiple queries, and the partition identification problem \citet{Juneja2019} are different directions for the generalization of BAI. 

Our setting is a generalization of BAI without contextual information. We can use the side information (explicitly or implicitly) at each round. There have been limited studies that address pure exploration in contextual bandits. \citet{Tekin2015}, \citet{GuanJiang2018}, and \citet{Deshmukh2018} also consider BAI with contextual information; however, they do not discuss the instance-specific optimality. After this study, \citet{Qin2022} also considers a related topic.

From the causal inference perspective, contextual BAI is closely related to a (semiparametric) experimental design for efficient ATE estimation \citep{Laan2008TheCA,Hahn2011,Karlan2014,Athey2016,Meehan2018}. The goal of efficient ATE estimation by adaptive experimentation is often in choosing the best treatment (arm) via hypothesis testing. Therefore, it can be considered as a case where the proposed method should be applied, especially when there are multiple treatments (arms).

\citet{Russac2021} also addresses a similar problem independently of us. Their problem setting is the same as ours in that they can observe discrete contexts. However, they are considering a slightly different problem than best arm identification, i.e., A/B/n testing, where they consider the comparison with a designated control arm. In that problem setting, optimal allocation is uniquely obtained, and they do not have to consider multiple candidates of optimal allocations as we do. Besides, we also derive the result for the case of continuous contexts, which they do not address. On the other hand, they discuss the problem more generally by considering four situations, (a) active mode, (b) proportional mode, (c) agnostic mode, and (d) oblivious mode, depending on how the decision is made. The (b) proportional mode discussed by them is closer to the setting discussed in this paper. In these senses, our results and theirs, while similar, are independent and parallel, and correspond to complementary studies.

\section{General Non-Asymptotic Lower Bounds}\label{sec:lower_bounds}
In this section, we provide the instance-specific sample complexity lower bounds for general contextual BAI.  The proof is based on standard change-of-measure arguments \citep{Kaufman2016complexity}. However, the derivations must consider the possibly continuous context distributions, which are non-trivial.

 Based on the lower bound, we find that the contextual information either helps or does not harm the BAI. Our result is the same as those of existing studies on fixed-confidence BAI without contextual information, except that we can obtain help from the existence of the contextual information. At first glance, it does not necessarily seem advantageous to use contextual information as the marginalized mean reward is not directly related to the contextual information. However, the lower bound with contextual information (see Section~\ref{sec:lower_bounds}) is strictly lower than the sample complexity derived by \citet{Kaufman2016complexity} and \citet{Garivier2016}.

Assume $\set{X} = \mathbb{R}$. Then, we present the non-asymptotic sample complexity lower bound. 
\begin{theorem}
\label{thm:lower_bound_continuous} Let $\delta \in (0, 1/2)$. Assume that for all $x \in \mathbb{R}$, distributions $p_{1,x}, \ldots, p_{K,x}$ are absolutely continuous with respect to the Lebesgue measure.
Let $\delta\in(0,1/2)$. Then, for any $\delta$-PAC strategy, for any $\mathcal{V} = (\vec{p}, \zeta) \in \Omega$,
\begin{align*}
&\Ebb_{\set{V}}[\tau_\delta]\geq T^\star(\mathcal{V}) d(\delta, 1-\delta),
\end{align*}
where 
\begin{align*}
 &T^\star(\mathcal{V}):= \left(\sup_{\bm{w} \in\set{W}} \inf_{(\vec{q}, \zeta)\in\Alt(\mathcal{V})}\sum^K_{a=1} \int_{\mathbb{R}}w_{a, x}\KL(p_{a, x}, q_{a, x}) \zeta(x) \mathrm{d}x \right)^{-1}.
 \end{align*}
\end{theorem}
We provide the proof of Theorem~\ref{thm:lower_bound_continuous} in Appendix~\ref{appdx:proof_lower_bound_continuous}.

\paragraph{Efficiency gains from the context use.} 
In Figure~\ref{fig:efficiency_gain}, we illustrate the efficiency gain by using contextual information. We consider a two-armed, one-dimensional context $X_t\in\mathbb{R}$. Suppose that $(R_{t,1} \  R_{t,2} \  X_t)^\top$ follows a multivariate normal distribution with mean vector $(1\ 0\ 0)^\top$. We assume that the variances of $R_{t,1}$, $R_{t,2}$, and $X_t$ are $1$. We investigate the variation in the theoretical sample complexity by varying the correlation coefficients between $X_t$ and $R_{t,1}$ and $X_t$ and $R_{t,2}$, which are denoted as $\rho_{1\mathcal{X}}\in[0,1]$ and $\rho_{2\mathcal{X}}\in[0,1]$, respectively. Note that we omit the other domains due to symmetry with the current domain. Note that when ignoring (marginalizing) the context, arm $1$ follows $\mathcal{N}(1, 1)$ and arm $2$ follows $\mathcal{N}(0, 1)$, where $\mathcal{N}(\mu, \sigma^2)$ denotes a normal distribution with a mean $\mu$ and variance $\sigma^2$.  Here, for $\delta=0.05$, we calculate the sample complexity lower bounds of the standard setting of BAI from the result of \citet{Garivier2016} and those of the contextual case from our results. We denote the former as $\ell$ and the latter as $\tilde{\ell}$. Then, we compute the sample complexity gain ($1 - \tilde{\ell}/ \ell$) for different pairs of $(\rho_{1\mathcal{X}}, \rho_{2\mathcal{X}})$ and illustrate it in Figure~\ref{fig:efficiency_gain}.
\begin{figure}[htb]
  \begin{center}
    \includegraphics[height=52mm]{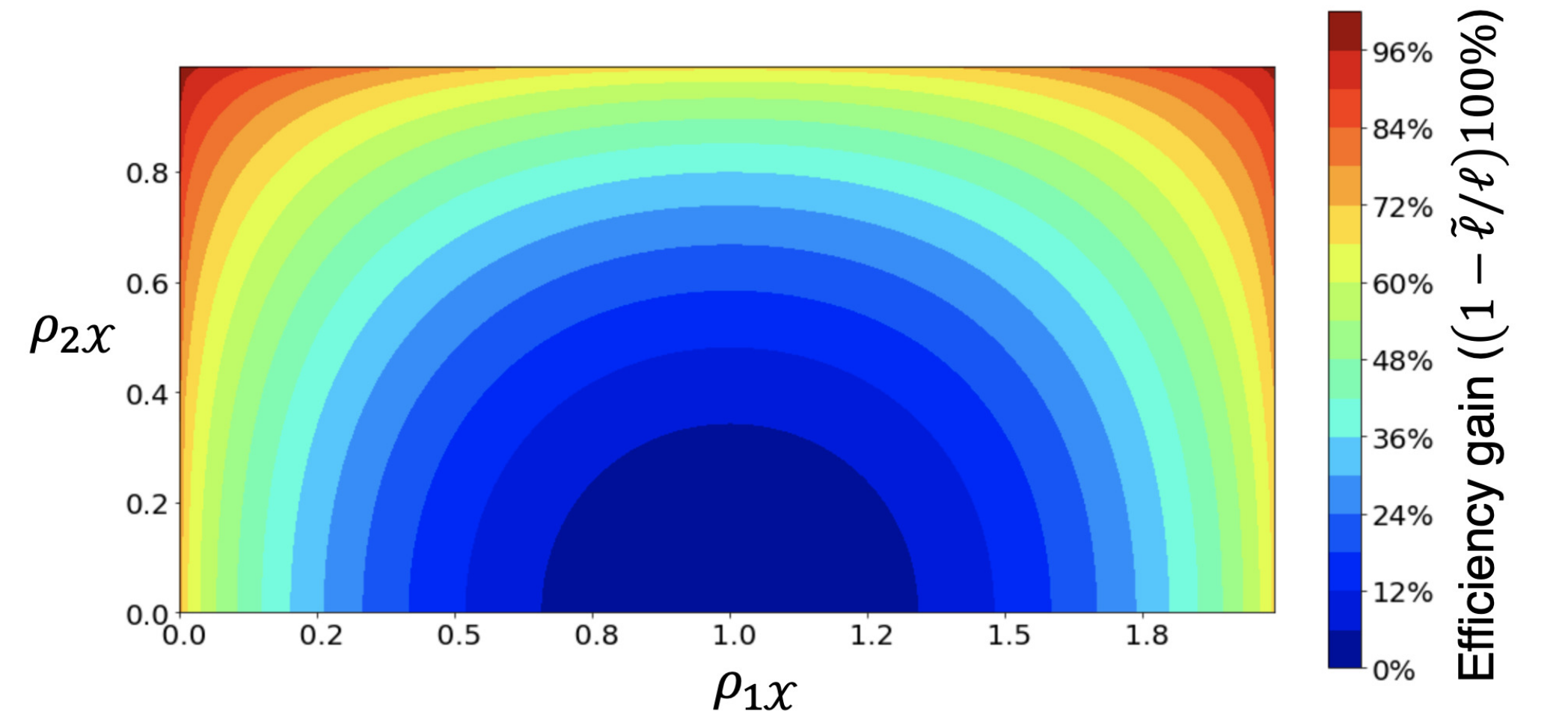}
  \end{center}
  \vspace{-0.3cm}
  \caption{Sample complexity gains through context. The $x$ axis denotes $\rho_{1\mathcal{X}}\in[0,1]$ and the $y$ axis denotes $\rho_{2\mathcal{X}}\in[0,1]$. The contour lines indicate the sample complexity gains: $(1 - \tilde{\ell}/ \ell)100\%$.
  }
  \label{fig:efficiency_gain}
\end{figure}

\section{Two-armed Gaussian Bandits with Continuous Context}\label{sec:cont_2arm} 
In this section, we provide an example for the case of continuous contexts and prove the upper bound of the sample complexity. 
We consider the following two-armed bandit problem. For each $t$, $R_{t,1}$, $R_{t,2}$, and $X_t\in\mathbb{R}$ are drawn from the following Gaussian distributions $\set{N}(\mu_1, \sigma^2_1)$, $\set{N}(\mu_2, \sigma^2_2)$, and $\set{N}(\mu_{\set{X}}, \sigma_{\set{X}}^2)$, respectively ($\mu_1 > \mu_2$). Assume that the vector $(R_{t,1}, R_{t,2}, X_t)$ forms a multivariate Gaussian distribution. We denote $ \textnormal{Cov}(R_{t,1}, X_t) = \sigma_{\set{X}1}$ and $ \textnormal{Cov}(R_{t,2}, X_t) = \sigma_{\set{X}2}$. Suppose the algorithm knows that $(R_{t,1}, R_{t,2}, X_t)$ form a multivariate Gaussian distribution, knows the values of $\sigma^2_1$, $\sigma^2_2$, $ \mu_{\set{X}}$, $ \sigma_{\set{X}}$, $\sigma_{\set{X}1}$, and $ \sigma_{\set{X}2}$, and does not know the values of $\mu_1$ and $\mu_2$. 
Let $ \tilde{\Omega}$ be a set of all such problems. Given an observation $X_t = x$, we have conditional distributions of $R_{t,1}$ and $R_{t,2}$ where for each $a \in \{1, 2\}$, $
 R_{t,a} \sim \set{N}\left(\mu_a + \frac{\sigma_{\set{X}a}}{\sigma_{\set{X}}^2}(x - \mu_{\set{X}}), \sigma^2_a - \frac{\sigma_{\set{X}1}^2}{\sigma_{\set{X}}^2}\right) =\set{N}\left(\mu_a + \frac{\rho_{\set{X}a} \sigma_a}{\sigma_{\set{X}}}(x - \mu_{\set{X}}), \sigma^2_a(1 - \rho^2_{\set{X}a})\right)$. Here, $\rho_{\set{X}a}$ is the correlation coefficient between the context and arm $a\in\{1,2\}$. 
We denote $\sigma_1'^2 = \sigma^2_1 - \frac{\sigma_{\set{X}1}^2}{\sigma_{\set{X}}^2}$ and $ \sigma_2'^2 = \sigma^2_2 - \frac{\sigma_{\set{X}2}^2}{\sigma_{\set{X}}^2}$. From our lower bound in Theorem~\ref{thm:lower_bound_continuous}, we can derive the following lower bound for this specific problem. We give the proof in Appendix~\ref{appdx:thm:cont_2arm_lowerbound}.
\begin{theorem}\label{thm:cont_2arm_lowerbound}
Let $\delta\in(0,1/2)$. For any $\delta$-PAC strategy and $\set{V} \in \tilde{\Omega}$, we have 
$$
\Ebb_{\set{V}}[\tau_\delta]\geq \frac{2(\sigma'_1+ \sigma'_2)^2}{(\mu_1 - \mu_2)^2}d(\delta, 1-\delta).
$$
\end{theorem}

Note that when $\sigma_{\set{X}1}^2>0$ or $\sigma_{\set{X}2}^2>0$, $ \sigma_1 + \sigma_2> \sigma_1' + \sigma_2'$; that is, the value of the lower bound derived in Theorem~\ref{thm:cont_2arm_lowerbound} is strictly smaller than that of the lower bound derived by \citet{Kaufman2016complexity}, $\frac{2(\sigma_1+ \sigma_2)^2}{(\mu_1 - \mu_2)^2}\kl(\delta, 1-\delta)$. Let $\alpha = {\sigma_1'}/{(\sigma_1' + \sigma_2')}$. We also note that the simple $\alpha$-elimination algorithm by \citet{Kaufman2016complexity} with $\alpha$ achieves the lower bound as well as a strictly better sample complexity than that given in \citet{Kaufman2016complexity}. We give the proof in Appendix~\ref{appdx:thm:cont_2arm_upperbound}.
\begin{theorem}\label{thm:cont_2arm_upperbound}
 If $\alpha = {\sigma_1'}/{(\sigma_1' + \sigma_2')}$, then the $\alpha$-elimination strategy using the exploration rate $\beta(t,\delta)=\log\frac{t}{\delta} + 2 \log\log(6t)$ is $\delta$-PAC on $\tilde{\Omega}$ and for every $\set{V} \in \tilde{\Omega}$ and $\epsilon>0$, satisfies
$$\EXP_\set{V}[\tau_\delta] \leq (1+\epsilon)\frac{2(\sigma_1' + \sigma_2')^2}{(\mu_1-\mu_2)^2}\log\left(\frac{1}{\delta}\right) + {o}\left(\log\left(\frac{1}{\delta}\right)\right).$$
\end{theorem}
Hence, $\alpha$-elimination is optimal for this problem. The details of $\alpha$-elimination with contextual information is shown in Appendix~\ref{appdx:alpha_elim}. The pseudo-code is shown in Algorithm~\ref{AlgoBox:Elimination}. Thus, apparently irrelevant contextual information improves optimal sample complexity.

\begin{algorithm}[t]
\SetAlgoLined
\KwIn{Confidence level $\delta$, threshold $\beta(t,\delta)$, $\sigma_{a}$, $\sigma_{\set{X}}$, $\rho_{\set{X}a}$,  $\mu_{\set{X}}$.}
 {\bf Initialization:}  $t=0$, $N_x(0) = 0$, $N_{a,x}(0) = 0$. $\hat{\mu}_1(0)=\hat{\mu}_2(0)=0$,
    $\sigma^2_{0}(\alpha)=1$.\\
$\sigma_1'^2 \gets \sigma^2_1 - \frac{\sigma_{\set{X}1}^2}{\sigma_{\set{X}}^2}$, $ \sigma_2'^2 \gets \sigma^2_2 - \frac{\sigma_{\set{X}2}^2}{\sigma_{\set{X}}^2}$.
\\
$\alpha \gets  {\sigma_1'}/{(\sigma_1' + \sigma_2')}$
\\
 \DontPrintSemicolon
 \While{$|\hat{\mu}_1(t) - \hat{\mu}_2(t)|
      \leq \sqrt{2\sigma^2_{t}(\alpha)\beta(t,\delta)}$}{
 $t\leftarrow t+1$.\\
 Observe $X_t$.\\
 \eIf{$\lceil\alpha t \rceil = \lceil \alpha(t-1) \rceil$}{$A_t\leftarrow 2$}{$A_t\leftarrow 1$}
 Observe $R_t$.\\
 $\hat{\mu}_1(t) = \frac{1}{\sum^t_{s=1} \mathbbm{1}[A_s = 1]}\sum^t_{s=1}\left( R_{s, 1} - \frac{\rho_{\set{X}1} \sigma_1}{\sigma_{\set{X}}}(X_s - \mu_{\set{X}})\right)\mathbbm{1}[A_s = 1]$. \\
    $\hat{\mu}_2(t) = \frac{1}{\sum^t_{s=1} \mathbbm{1}[A_s = 2]}\sum^t_{s=1}\left( R_{s, 2} - \frac{\rho_{\set{X}2} \sigma_2}{\sigma_{\set{X}}}(X_s - \mu_{\set{X}})\right)\mathbbm{1}[A_s = 2]$.\\ 
 Compute $\sigma_t^2(\alpha)=\sigma_1^2/\lceil\alpha
    t\rceil + \sigma_2^2/(t-\lceil \alpha t \rceil)$.\\
  }
 \Return { $\argmax_{a=1,2} \ \hat{\mu}_a(t)$}
 \caption{$\alpha$-elimination with contextual information}
 \label{AlgoBox:Elimination}
\end{algorithm}

\section{Lower Bound with Finite Contexts}
\label{sec:lower_bound_finite}
Although we derived the optimal algorithm for BAI with continuous contexts in the previous sections, it requires some assumptions that may not be practical, e.g., multivariate normal distribution and known variance. We also consider a more practical algorithm by considering BAI with finite contexts. In this section, we consider a lower bound when the number of contexts is finite. For $\nu = ((\mu_{a,x})_{a,x}, (\zeta_x))$, we suppose that $\set{X}$ is finite ($\zeta$ follows the multinomial distribution), and for each arm $a$ and context $x$, arm distribution belongs to the canonical one-parameter exponential family  \citep{Cappe2013,Kaufman2016complexity,Garivier2016,Juneja2019}:
\begin{align}
\label{eq:spef}
    \mathcal{P} = \left\{(p_\pi)_{\pi\in\Pi}: \frac{\mathrm{d}p_\pi}{\mathrm{d}\lambda}=\exp(\pi u - b(\pi))\right\},
\end{align}
where $\lambda$ is some reference measure on $\mathbb{R}$, $b:\Pi\mapsto \mathbb{R}$ is a convex, twice differential function, and $\Pi \subset \mathbb{R} $ is a parameter space. Note that a distribution $p_\pi \in \mathcal{P}$ can be parameterized by its mean $\dot{b}(\pi)$. As discussed in \citet{Cappe2013,Garivier2016}, the KL divergence from $p_\pi$ to $p_{\pi'}$ is given by
\begin{align*}
    \KL(p_\pi, p_{\pi'}) = \kl (\dot{b}(\pi), \dot{b}(\pi')) = b(\pi') - b(\pi) - \dot{b}(\pi)(\pi' - \pi).
\end{align*}
For each arm $a$ and context $x$ pair, we represent the unique distribution in $\mathcal{P}$ by $(\mu_{a,x})$. We further  write the multinomial contextual distribution by $(\zeta_x)$.

We denote by $\Theta$ a set of BAI problems with finite contexts and the single parameter (canonical) exponential family. The lower bound is given in the following theorem.
\begin{theorem}
\label{thm:lower_bound}  Let $\delta\in(0,1/2)$. For any $\delta$-PAC strategy and any $\mathcal{V} = ((\mu_{a,x}), (\zeta_x))\in \Theta$,
\begin{align*}
\Ebb_{\nu}[\tau_\delta] &\geq T^\star(\nu ) d(\delta, 1-\delta),
\end{align*}
where 
\begin{align*}
T^\star(\nu)^{-1}  \coloneqq \sup_{\bm{w}\in\set{W}}\inf_{((\lambda_{a,x}), (\zeta_x))\in\Alt(\nu)}\sum_{x \in \set{X}}\zeta_x\sum^K_{a=1}w_{a, x}\kl(\mu_{a,x}, \lambda_{a,x}).
\end{align*}
\end{theorem}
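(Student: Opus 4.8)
The plan is to follow the standard change-of-measure (transportation) argument for sample complexity lower bounds in bandits, as pioneered by \citet{Kaufman2016complexity} and \citet{Garivier2016}, but carefully adapting it to account for the context distribution $\zeta$. The key tool is a data-processing inequality applied to the log-likelihood ratio between the true instance $\nu = ((\mu_{a,x}), (\zeta_x))$ and any alternative instance $\nu' = ((\lambda_{a,x}), \zeta)\in\Alt(\nu)$ sharing the same context distribution. Specifically, for any $\mathcal{G}_\tau$-measurable event $E$ with $\delta$-PAC guarantees, the fundamental inequality states that
\begin{align*}
\Ebb_\nu\bracks{\log\frac{\rd\Pbb_\nu}{\rd\Pbb_{\nu'}}} \ge \kl\prns{\Pbb_\nu(E), \Pbb_{\nu'}(E)}.
\end{align*}
First I would compute the expected log-likelihood ratio explicitly. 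Since the contexts $X_t$ are i.i.d. from the common $\zeta$ under both $\nu$ and $\nu'$, the context terms cancel in the likelihood ratio, leaving only the reward contributions. Using Wald's identity together with the tower property and the fact that $A_t$ is $\mathcal{F}_{t-1}$-measurable, this expectation decomposes as $\sum_{x\in\set{X}}\sum_{a=1}^K \Ebb_\nu[N_{a,x}(\tau)]\,\kl(\mu_{a,x},\lambda_{a,x})$, where $N_{a,x}(\tau)$ counts how many times arm $a$ is pulled under context $x$ up to the stopping time.

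Next I would choose the event $E = \{\hat a_\tau = a^*(\nu)\}$. By the $\delta$-PAC property, $\Pbb_\nu(E)\ge 1-\delta$, and since $\nu'\in\Alt(\nu)$ has a different best arm, $\Pbb_{\nu'}(E)\le\delta$. Monotonicity of $\kl$ then yields $\kl(\Pbb_\nu(E),\Pbb_{\nu'}(E))\ge\kl(1-\delta,\delta)=\kl(\delta,1-\delta)$, giving the lower bound
\begin{align*}
\sum_{x\in\set{X}}\sum_{a=1}^K \Ebb_\nu[N_{a,x}(\tau)]\,\kl(\mu_{a,x},\lambda_{a,x}) \ge \kl(\delta,1-\delta).
\end{align*}
The crucial step is to convert the counts $\Ebb_\nu[N_{a,x}(\tau)]$ into the allocation weights $w_{a,x}$. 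I would write $\Ebb_\nu[N_{a,x}(\tau)] = \Ebb_\nu[\tau]\cdot\frac{\Ebb_\nu[N_{a,x}(\tau)]}{\Ebb_\nu[\tau]}$ and identify the normalized proportions as a feasible allocation. Because the context arrivals are i.i.d., the fraction of rounds with context $x$ concentrates at $\zeta_x$, so the proportions naturally factorize through $\zeta_x w_{a,x}$ where $w_{a,x}\in\Sigma^K_x$; this is exactly where the context distribution enters the lower bound formula and distinguishes it from the non-contextual case.

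Finally, since the inequality holds for every alternative $\nu'\in\Alt(\nu)$, I would take the infimum over $\Alt(\nu)$ on the left-hand side, and then observe that the best lower bound is obtained by the algorithm whose induced allocation maximizes this inner infimum, yielding the supremum over $\bm w\in\set{W}$ in $T^\star(\nu)^{-1}$. Dividing through gives $\Ebb_\nu[\tau_\delta]\ge T^\star(\nu)\,\kl(\delta,1-\delta)$. The main obstacle, and the place where the contextual setting requires genuine care beyond \citet{Garivier2016}, is the rigorous handling of the separation between the expected counts and the allocation weights: one must argue that the empirical context frequencies can be replaced by $\zeta_x$ without loss, and that the normalized pull proportions $\Ebb_\nu[N_{a,x}(\tau)]/(\zeta_x\Ebb_\nu[\tau])$ indeed lie in the simplex $\Sigma^K_x$ so that the supremum over $\set{W}$ is a valid relaxation. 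This requires either an exact combinatorial identity relating the counts or a careful limiting argument as $\delta\to 0$; since the theorem is stated non-asymptotically, I expect an exact decomposition using the tower property over the $\mathcal{G}_\tau$-measurable structure to be the cleanest route.
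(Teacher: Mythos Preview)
Your proposal is correct and follows essentially the same route as the paper: apply the change-of-measure/data-processing inequality (Lemma~\ref{lem:kauf_lemma_extnd_finite}) with the event $E=\{\hat a_\tau=a^*(\nu)\}$, use the $\delta$-PAC guarantee and monotonicity of $\kl$ to get $\kl(\delta,1-\delta)$ on the right, then factorize $\Ebb_\nu[N_{a,x}(\tau)]$ and relax to the supremum over $\set{W}$. The ``exact combinatorial identity'' you are looking for in the last paragraph is precisely Wald's lemma applied to $N_x(\tau)=\sum_{t=1}^\tau\mathbbm{1}[X_t=x]$: since $\tau$ is a stopping time and the contexts are i.i.d., $\Ebb_\nu[N_x(\tau)]=\zeta_x\,\Ebb_\nu[\tau]$ exactly, so setting $w_{a,x}:=\Ebb_\nu[N_{a,x}(\tau)]/\Ebb_\nu[N_x(\tau)]$ gives a bona fide element of $\Sigma^K_x$ (because $\sum_a N_{a,x}(\tau)=N_x(\tau)$ pathwise) and yields the non-asymptotic factorization $\Ebb_\nu[N_{a,x}(\tau)]=\Ebb_\nu[\tau]\,\zeta_x\,w_{a,x}$ without any concentration or limiting argument.
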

As an intuition behind $T^\star(\nu)$, the probability of misidentification is roughly $\exp (-\tau \left(T^\star(\nu)\right)^{-1})$; that is, larger $\left(T^\star(\nu)\right)^{-1}$ means a strategy with smaller sample complexity.

We note specific properties of this lower bound. From the results in \citet{Garivier2016}, we know that when the optimal arm is unique, the expected value of the sampling budget of the optimal BAI algorithm does not diverge; rather it is less than or equal to the order of $\log(1/\delta)$. Therefore, from the assumption of the proposed model, $T^\star(\nu)$ is finite under certain regularity conditions; for example, the context marginalized distribution of the reward $ R_t$ is sub-Gaussian.
 
To derive the lower bound, we show the following lemma, which is an extension of Lemma~1 of \citet{Kaufman2016complexity}. 

\begin{lemma}\label{lem:kauf_lemma_extnd_finite}
Let $N_{a, x}(\tau) = \sum_{t=1}^\tau \indicator\{X_t = x, A_t = a\}$. Let $\nu = ((\mu_{a, x}), \zeta),  \nu' = ((\lambda_{a,x}), \zeta) \in \Theta$. 
For any almost surely finite stopping time $\tau$ with respect to $(\mathcal{G}_{t})_{t \ge 1}$,
\begin{align*}
& \sum_{x \in \set{X}}\sum_{a \in [K]} \EXP_{\nu}[N_{a, x}(\tau)] \kl(\mu_{a, x}, \lambda_{a, x}) \geq \sup_{\Ecal\in\mathcal{G}_\tau} d(\Pbb_{\nu}(\Ecal), \Pbb_{\nu'}(\Ecal)).
\end{align*}
\end{lemma}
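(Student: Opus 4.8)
The plan is to run the standard change-of-measure argument, but to track carefully the extra randomness introduced by the contexts. First I would write down the likelihood ratio of the observed trajectory $(X_1,A_1,R_1,\dots,X_\tau,A_\tau,R_\tau)$ under the two instances $\nu$ and $\nu'$, restricted to $\Gcal_\tau$. A trajectory is generated by three ingredients: the i.i.d.\ context draw $\zeta(X_t)$, the (possibly randomized) sampling rule $\pi_t(A_t\mid\mathcal{F}_{t-1})$, and the conditional reward law $p_{A_t,X_t}(R_t)$. Because $\nu$ and $\nu'$ share the \emph{same} context distribution $\zeta$ and are probed by the \emph{same} algorithm, the factors $\zeta(X_t)$ and $\pi_t(A_t\mid\cdot)$ cancel in the ratio. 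This is the one place where the contextual setting departs from \citet{Kaufman2016complexity}, and it is exactly what makes the argument go through: the log-likelihood ratio collapses to
\begin{align*}
L_\tau \;=\; \log\frac{\rd\Pbb_\nu}{\rd\Pbb_{\nu'}}\Big|_{\Gcal_\tau} \;=\; \sum_{t=1}^\tau \log\frac{p_{A_t,X_t}}{q_{A_t,X_t}}(R_t),
\end{align*}
with the corresponding Bernoulli means $\mu_{a,x},\lambda_{a,x}$.

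Next I would evaluate $\EXP_\nu[L_\tau]$ by a Wald / optional-stopping argument. Conditioning on $\mathcal{F}_{t-1}$ fixes both $X_t$ and $A_t$ (the sampling rule is $\mathcal{F}_{t-1}$-measurable and $X_t$ is $\mathcal{F}_{t-1}$-measurable by construction), so $\EXP_\nu[\log(p_{A_t,X_t}/q_{A_t,X_t})(R_t)\mid\mathcal{F}_{t-1}] = \kl(\mu_{A_t,X_t},\lambda_{A_t,X_t})$. Hence $M_t := \sum_{s=1}^t\big(\log(p_{A_s,X_s}/q_{A_s,X_s})(R_s) - \kl(\mu_{A_s,X_s},\lambda_{A_s,X_s})\big)$ is an $(\mathcal{F}_t)$-martingale, and since $\Gcal_t\subseteq\mathcal{F}_t$ the $\Gcal$-stopping time $\tau$ is also an $\mathcal{F}$-stopping time. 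Applying optional stopping and then grouping the summands by the pair $(a,x)$ yields
\begin{align*}
\EXP_\nu[L_\tau] \;=\; \EXP_\nu\Big[\sum_{s=1}^\tau \kl(\mu_{A_s,X_s},\lambda_{A_s,X_s})\Big] \;=\; \sum_{x\in\set{X}}\sum_{a\in[K]} \EXP_\nu[N_{a,x}(\tau)]\,\kl(\mu_{a,x},\lambda_{a,x}),
\end{align*}
which is the left-hand side of the claim.

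Finally I would pass to the right-hand side by the data-processing (contraction) inequality for KL divergence. The quantity $\EXP_\nu[L_\tau]$ is precisely $\KL(\Pbb_\nu|_{\Gcal_\tau},\Pbb_{\nu'}|_{\Gcal_\tau})$, and for any event $\Ecal\in\Gcal_\tau$ the coarsening $\omega\mapsto\indicator_\Ecal(\omega)$ is $\Gcal_\tau$-measurable, so contraction gives $\KL(\Pbb_\nu|_{\Gcal_\tau},\Pbb_{\nu'}|_{\Gcal_\tau}) \ge \kl(\Pbb_\nu(\Ecal),\Pbb_{\nu'}(\Ecal))$. Taking the supremum over $\Ecal\in\Gcal_\tau$ then completes the argument.

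The main obstacle is the rigour of the optional-stopping step rather than any clever idea: $\tau$ is random and only assumed almost surely finite, so I must justify $\EXP_\nu[M_\tau]=0$ (equivalently, that the stopped log-likelihood ratio is uniformly integrable) before concluding. This is where the standing assumption $0<\kl(\mu_{a,x},\lambda_{a,x})<+\infty$ from Section~\ref{subsec:prelim} enters, together with a controlled-increments argument (either truncation at $\tau\wedge n$ followed by monotone/dominated convergence, or Wald's identity in its stopping-time form). I would also need to check that the cancellation of the context and policy factors holds $\Pbb_\nu$-almost surely, so that $L_\tau$ is well defined on the support. Everything else is bookkeeping.
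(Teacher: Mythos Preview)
Your proposal is correct and follows essentially the same route as the paper: write the log-likelihood ratio (noting that the shared context law $\zeta$ and the common policy cancel), compute its expectation via Wald's identity to obtain $\sum_{x,a}\EXP_\nu[N_{a,x}(\tau)]\kl(\mu_{a,x},\lambda_{a,x})$, and then apply the data-processing inequality to bound it below by $\kl(\Pbb_\nu(\Ecal),\Pbb_{\nu'}(\Ecal))$. If anything, your treatment of the optional-stopping step is more careful than the paper's, which simply invokes ``Wald's lemma for each $x,a$ pair'' without further comment.
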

The proof is provided in Appendix~\ref{appdx:kauf_lemma_extnd_finite}. Here, we offer the proof sketch of Theorem~\ref{thm:lower_bound} as follows.
\paragraph{Proof sketch.} 
From Lemma~\ref{lem:kauf_lemma_extnd_finite} with $\set{E} =\{\hat{a}_\tau = {a}^*(\nu)\}$, for each $\nu \in \Theta$ and $\nu' \in \Alt(\nu)$, we have
\begin{align*}
\sum_{x \in \set{X}} \sum_{a \in [K]} \EXP_{\nu} [N_{a, x}(\tau)] \kl(\mu_{a, x}, \lambda_{a, x}) & \ge d(\Pr_{\nu}(\set{E}), \Pr_{\nu'}(\set{E})) \ge d(\delta, 1- \delta),
\end{align*}
where, for the last inequality, we use the definition of the $\delta$-PAC algorithm and monotonicity of the KL divergence. Then, for each $\nu \in \Theta$, for some $(w_{a, x})_{a \in [K], x \in \set{X}} \in \set{W}$, we can obtain $\kl(\delta, 1- \delta) \le \EXP_{\nu}[\tau_\delta] \sup_{w \in \set{W}} \inf_{((\lambda_{a,x}), \zeta) \in \Alt(\nu)} \sum_{x \in \set{X}} \zeta_x \sum_{a \in [K]} w_{a, x} \kl(\mu_{a, x}, \lambda_{a, x})$.

In Section~\ref{sec:efficiency_gain}, we explain that the lower bound with contextual information is smaller than or equal to the lower bound without contextual information shown by \citet{Garivier2016}.

\section{Optimal Allocation in Contextual BAI with Finite Contexts}
\label{sec:optimal}
In this section, we first provide a simplification of the lower bound derived in Section~\ref{sec:lower_bounds}. Then, we examine the characteristics of the optimal allocations used in the proof. It becomes apparent that the set of optimal allocations is, in general, not unique. Therefore, we define the notion of convergence to the set and prove that the estimated optimal allocations converge to the set of optimal allocations (even though they might not converge to a point). 

\subsection{Simplification of the Lower Bound}
Without loss of generality, let $a^*(\nu) = 1$. First, we show a simpler equivalence form for the optimization problem $T^\star(\nu)^{-1}$ in the following theorem. 
\begin{lemma}\label{lem:CDinterpretation} 
For each $ w \in \set{W}$, we have
\begin{align}
& \inf_{((\lambda_{a,x}), \zeta)\in\Alt(\nu)}\sum_{x\in \set{X}}\zeta_x\sum^K_{a=1}w_{a, x}\kl(\mu_{a,x}, \lambda_{a,x}) \nonumber\\
\label{eq:mininf_problem}
& = \min_{a\neq 1} \inf_{ \sum_{x \in \set{X}} \zeta_x \lambda_{a, x} > \sum_{x \in \set{X}} \zeta_x \lambda_{1, x}} \sum_{x \in \set{X}}\zeta_x\Big(w_{1, x}\kl(\mu_{1,x}, \lambda_{1,x}) + w_{a, x}\kl(\mu_{a,x}, \lambda_{a,x})\Big)
\end{align}
\end{lemma}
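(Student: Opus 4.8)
The plan is to exploit the additive (arm-separable) structure of the objective together with the fact that membership in $\Alt(\nu)$ constrains only the \emph{marginalized} means of the alternative instance. First I would rewrite $\Alt(\nu)$ as a union over the identity of the new best arm. Since $a^*(\nu)=1$, an instance $((\lambda_{a,x}),\zeta)$ lies in $\Alt(\nu)$ precisely when arm $1$ is no longer the unique maximizer of the marginalized mean, i.e.\ when there exists some $a\neq 1$ with $\sum_{x\in\set{X}}\zeta_x\lambda_{a,x}\ge \sum_{x\in\set{X}}\zeta_x\lambda_{1,x}$. Writing $\set{C}_a=\{(\lambda,\zeta):\sum_{x}\zeta_x\lambda_{a,x}\ge\sum_{x}\zeta_x\lambda_{1,x}\}$, we have $\Alt(\nu)=\bigcup_{a\neq 1}\set{C}_a$ up to the boundary where the maximizer fails to be unique, which I handle in the last step. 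Because an infimum over a union is the minimum of the infima over the pieces,
\[
\inf_{\Alt(\nu)}\sum_{x}\zeta_x\sum_{b=1}^K w_{b,x}\kl(\mu_{b,x},\lambda_{b,x})
=\min_{a\neq 1}\ \inf_{\set{C}_a}\sum_{x}\zeta_x\sum_{b=1}^K w_{b,x}\kl(\mu_{b,x},\lambda_{b,x}).
\]

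Second, for a fixed competing arm $a$ I would minimize the inner objective coordinate-by-coordinate. The constraint defining $\set{C}_a$ involves only the coordinates $(\lambda_{1,x})_x$ and $(\lambda_{a,x})_x$; every other arm $b\notin\{1,a\}$ enters the objective solely through the unconstrained, nonnegative term $\sum_{x}\zeta_x w_{b,x}\kl(\mu_{b,x},\lambda_{b,x})$. Since $\kl(\mu_{b,x},\cdot)\ge 0$ with equality at $\lambda_{b,x}=\mu_{b,x}$, the minimizing choice sets $\lambda_{b,x}=\mu_{b,x}$ and annihilates all such terms. Only the $b\in\{1,a\}$ contributions survive, giving exactly
\[
\inf_{\sum_{x}\zeta_x\lambda_{a,x}\ge\sum_{x}\zeta_x\lambda_{1,x}}\sum_{x}\zeta_x\Big(w_{1,x}\kl(\mu_{1,x},\lambda_{1,x})+w_{a,x}\kl(\mu_{a,x},\lambda_{a,x})\Big),
\]
which is the inner expression on the right-hand side except for the non-strict constraint.

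Third, I would reconcile the non-strict constraint $\ge$ with the strict constraint $>$ in the statement, and simultaneously enforce that the alternative be a genuine element of $\Omega$ (hence have a unique best arm). The map $(\lambda_{1,x},\lambda_{a,x})_x\mapsto\sum_{x}\zeta_x\big(w_{1,x}\kl(\mu_{1,x},\lambda_{1,x})+w_{a,x}\kl(\mu_{a,x},\lambda_{a,x})\big)$ is continuous, so any point on the boundary $\sum_{x}\zeta_x\lambda_{a,x}=\sum_{x}\zeta_x\lambda_{1,x}$ can be approached by feasible points with $\sum_{x}\zeta_x\lambda_{a,x}>\sum_{x}\zeta_x\lambda_{1,x}$ and with arm $a$ promoted to the \emph{unique} marginalized maximizer. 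Hence the infimum over the open region equals the infimum over its closure, which justifies replacing $\ge$ by $>$ and guarantees the infimizing sequence remains inside $\Alt(\nu)\subseteq\Omega$. Combining the three steps identifies the left-hand side with the right-hand side of \eqref{eq:mininf_problem}.

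The main obstacle I anticipate is this last step: the feasible region is cut out by $\zeta$-weighted averages of per-context means, so although each $\kl(\mu_{a,x},\cdot)$ is convex, the coupling across contexts together with the requirement that the alternative be a valid, uniquely-optimal instance in $\Omega$ forces a careful continuity/limiting argument rather than a direct substitution at the boundary. Everything else is the standard reduction---union decomposition plus free minimization over non-competing arms---familiar from the single-context analysis of \citet{Garivier2016}.
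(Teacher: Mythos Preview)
Your proposal is correct and follows essentially the same route as the paper: decompose $\Alt(\nu)$ as a union over the identity of the challenging arm, use that an infimum over a union is the minimum of the infima, and then set $\lambda_{b,x}=\mu_{b,x}$ for every non-competing arm $b\notin\{1,a\}$ to kill those terms. If anything, your third step is more careful than the paper, which writes the decomposition directly with the strict inequality and does not explicitly argue the continuity/approximation needed to ensure the infimum is unchanged when one insists the alternative lie in $\Theta$ (i.e., have a unique best arm).
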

We provide the proof in Appendix~\ref{appdx:CDinterpretation}.

Moreover, we can further simplify the constraint in the minimization problem. We define 
\[f_a((\lambda_{x,a})) = \sum_{x \in \set{X}}\zeta_x\Bigg\{ w_{1, x}\kl(\mu_{1, x}, \lambda_{1, x}) + w_{a, x}\kl(\mu_{a, x}, \lambda_{a, x})\Bigg\}.\]
Then, we show the following lemma.
\begin{lemma}\label{lem:allEqual}
For each $w \in \set{W}$, suppose that $(\lambda_{a,x}^*)$ satisfies:
\[\min_{a\neq 1} \inf_{ \sum_{x \in \set{X}} \zeta_x \lambda_{a, x} > \sum_{x \in \set{X}} \zeta_x \lambda_{1, x}} f_a((\lambda_{x,a})) = \min_{a\neq 1} f_a((\lambda_{x,a}^*)).\]
For all $a^* \in \argmin_{a \in [K]}\inf_{ \sum_{x \in \set{X}} \zeta_x \lambda_{a, x} > \sum_{x \in \set{X}} \zeta_x \lambda_{1, x}} f_{a}((\lambda_{x,a}))$, we have 
\[\sum_{x \in \set{X}}\zeta_x \lambda^*_{a,x} =\sum_{x \in \set{X}}\zeta_x \lambda^*_{1,x}.\]
Consequently, we can equivalently write the optimization problem as
\begin{align}
\label{eq:inner_opt}
T^\star(\nu)^{-1} &=\max_{\vec{w} \in \set{W}}\min_{a\neq 1}
L_{1,a}((\mu_{1,x}, \mu_{a,x}, \zeta_x, w_{1,x}, w_{a,x})_{x\in\set{X}}),
\end{align}
where for $a, b\in[K]$,
\begin{align*}
    &L_{a,b}((\mu_{a,x}, \mu_{b,x}, \zeta_x, w_{a,x}, w_{b,x})_{x\in\set{X}})\\
    &= \min_{ \sum_{x \in \set{X}} \zeta_x \lambda_{b, x} = \sum_{x \in \set{X}} \zeta_x \lambda_{a, x}} \sum_{x \in \set{X}}\zeta_x\Bigg\{ w_{a, x}\kl(\mu_{a, x}, \lambda_{a, x}) + w_{b, x}\kl(\mu_{b, x}, \lambda_{b, x})\Bigg\}
\end{align*}
\end{lemma}
We provide the proof in Appendix~\ref{appdx:allEqual}.

\subsection{Characteristics of the Lower Bound}
\label{sec:characteristic}

Let $2^{\set{W}}$ be a power set of ${\set{W}}$. We define a point-to-set map $\Phi: \Theta \to 2^{\set{W}}$; that is, the set of all optimal allocations for the bandit problem $\nu$ as 
$$
\Phi(\nu) = \Big\{\bm w \in \mathcal{W} \mid m(\bm w, \nu) = \max_{\vec{w}' \in \set{W}} m(\vec{w}', \nu)\Big\},
$$
where 
\begin{align*}
&m(\bm w, \nu) = \min_{a \neq 1}\min_{ \sum_{x\in\mathcal{X}} \zeta_x \lambda_{a,x} = \sum_{x\in\mathcal{X}} \zeta_x \lambda_{1,x}}\sum_{x\in\mathcal{X}}\zeta_x\Bigg\{ w_{1, x}\kl(\mu_{1, x}, \lambda_{1, x}) + w_{a, x}\kl(\mu_{a, x}, \lambda_{a, x})\Bigg\}.
\end{align*}
The interpretation of $m(\bm w, \nu)$ is that, unlike the corresponding part in \citet{Garivier2016}, we can further minimize the lower bound by choosing an optimal allocation from a wider domain than the case without contextual information as long as the constraints are satisfied. For example, let us consider a case where two arms $a$ and $b$, and two contexts $1$ and $2$ are given. Here, under certain circumstances, one needs to think about saving the allocations to arm $a$ in context $1$, allocating more to arm $a$ in context $2$, and get more budget to arm $b$ in context $1$. Thus, solving $m(\bm w, \nu)$ is inherently different from optimizing the allocations separately for each context; that is, a case where we apply a BAI algorithm without contextual information for each discrete context such as \citet{Garivier2016}.

From this simplified formula of the lower bound, we obtain the following lemmas. We provide the proofs in Appendix~\ref{appdx:conti_V}-\ref{appdx:continui_w}.
\begin{lemma}\label{lem:conti_V}
Fix $\vec{w} \in \set{W}$. We regard $\nu$ as a point in $\mathbb{R}^{|\set{X}|(K + 1)}$: $\nu= ((\mu_{a,x}), \zeta) \in \mathbb{R}^{|\set{X}|(K + 1)}$. Then, $m(\bm w, \nu) $ is continuous at every $\nu \in \Theta$. 
\end{lemma}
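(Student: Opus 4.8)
The plan is to establish continuity of $m(\bm w, \nu)$ in $\nu$ for fixed $\bm w$ by decomposing the map into two parts: an inner minimization over the alternative parameters $(\lambda_{a,x})$, and an outer minimization over the arms $a \neq 1$. Since a minimum of finitely many continuous functions is continuous, the outer $\min_{a \neq 1}$ preserves continuity automatically; the crux is therefore to show that for each fixed arm $a$, the inner value
\begin{align*}
g_a(\nu) := \min_{\sum_{x} \zeta_x \lambda_{a,x} = \sum_{x} \zeta_x \lambda_{1,x}} \sum_{x \in \set{X}} \zeta_x \Bigl\{ w_{1,x} \kl(\mu_{1,x}, \lambda_{1,x}) + w_{a,x} \kl(\mu_{a,x}, \lambda_{a,x}) \Bigr\}
\end{align*}
is continuous in $\nu = ((\mu_{a,x}), \zeta)$. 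I would first observe that, with $\bm w$ fixed, the objective is a jointly continuous function of the parameters $(\mu_{1,x}, \mu_{a,x}, \zeta_x)$ and the decision variables $(\lambda_{1,x}, \lambda_{a,x})$, because $\kl(\cdot, \cdot)$ for Bernoulli distributions is continuous on the interior of its domain and the regularity assumption in Section~\ref{subsec:prelim} keeps the relevant means bounded away from $0$ and $1$.

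First I would set up the problem as a parametric optimization and invoke a Berge-type maximum theorem. The key hypotheses to verify are: (i) the objective is jointly continuous in the parameter $\nu$ and the optimization variable $\vec\lambda = (\lambda_{1,x}, \lambda_{a,x})_{x}$; and (ii) the feasible-set correspondence $\nu \mapsto C(\nu) = \{\vec\lambda : \sum_x \zeta_x \lambda_{a,x} = \sum_x \zeta_x \lambda_{1,x}\}$ is continuous (both upper and lower hemicontinuous) and admits a compact-valued restriction near any fixed $\nu_0$. Because we may restrict $\lambda_{a,x}, \lambda_{1,x} \in [0,1]$, the effective feasible set is compact; the equality constraint defines a closed set depending continuously on $\zeta$, so hemicontinuity of the constraint correspondence should follow from standard arguments once one checks that the constraint set is nonempty and that small perturbations of $\zeta$ perturb the hyperplane continuously.

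The main obstacle I anticipate is twofold. First, continuity of $\kl(\mu, \lambda)$ can fail at the boundary $\lambda \in \{0,1\}$ (the divergence blows up), so I must argue that the optimizer stays in a compact subset of the open interval; here the assumption $0 < \KL(p_{a,x}, q_{a,x}) < +\infty$ and the sub-Gaussian/boundedness regularity on $\Omega$ do the work, ensuring the minimizing $\vec\lambda$ never approaches the singular boundary for $\nu$ in a neighborhood of $\nu_0 \in \Theta$. Second, lower hemicontinuity of the constraint correspondence requires care when $\zeta_x \to 0$ for some $x$, since the constraint hyperplane can degenerate; I would handle this by noting that at any $\nu_0 \in \Theta$ the context distribution has strictly positive mass (or, failing that, by a direct limiting argument that a feasible sequence of near-optimal $\vec\lambda$ for perturbed $\nu$ can be constructed from an optimizer at $\nu_0$).

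With these two points secured, the Berge maximum theorem yields that $g_a(\nu)$ is continuous at $\nu_0$, and then
\begin{align*}
m(\bm w, \nu) = \min_{a \neq 1} g_a(\nu)
\end{align*}
is continuous as a finite minimum of continuous functions, completing the argument. I expect the boundary-blowup issue to be the genuinely delicate step, and I would devote the bulk of the rigorous work to pinning down a uniform (over a neighborhood of $\nu_0$) separation of the optimal $\vec\lambda$ from $\{0,1\}$, after which the topological machinery is routine.
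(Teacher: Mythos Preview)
Your approach is essentially the same as the paper's: both arguments establish continuity by casting $m(\bm w,\nu)$ as a parametric optimization problem, verifying continuity of the objective and (upper and lower) hemicontinuity of the constraint correspondence, and then invoking a Berge/Hogan-type stability theorem. The paper works directly with the strict-inequality $\Alt(\nu)$ formulation as a single constraint set rather than first splitting over arms and passing to the equality constraint via Lemma~\ref{lem:allEqual}, and it is far terser about the boundary and degeneracy issues you flag, but the core mechanism is identical.
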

Note that the reason why $\nu$ is in $\mathbb{R}^{|\set{X}|(K + 1)}$ is that we include $\zeta\in\mathbb{R}^{|\mathcal{X}|}$ in $\nu$ with $(\mu_{a,x})\in\mathbb{R}^{|\mathcal{X}|K}$.

\begin{lemma}\label{lem:continui_w}
We fix $\nu \in \Theta$. Then, $m(\bm w, \nu) $ is continuous at every $\vec{w} \in \set{W}$. 
\end{lemma}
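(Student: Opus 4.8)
The plan is to prove that $m(\bm{w}, \nu)$ is continuous in $\bm{w}$ for each fixed $\nu \in \Theta$ by exploiting the structure revealed in \Cref{eq:inner_opt}: the outer expression is a finite minimum over $a \neq 1$ of inner minimization problems, and a finite minimum of continuous functions is continuous. So it suffices to establish, for each fixed $a \neq 1$, the continuity in $(w_{1,x}, w_{a,x})_{x \in \set{X}}$ of the value function
\begin{align*}
L_{1,a}(\bm{w}) = \min_{\sum_{x} \zeta_x \lambda_{a,x} = \sum_{x} \zeta_x \lambda_{1,x}} \sum_{x \in \set{X}}\zeta_x\Big\{ w_{1, x}\kl(\mu_{1, x}, \lambda_{1, x}) + w_{a, x}\kl(\mu_{a, x}, \lambda_{a, x})\Big\}.
\end{align*}
Since the objective is jointly continuous in $(\bm{w}, \bm{\lambda})$ — it is linear in $\bm{w}$ with coefficients $\kl(\mu_{\cdot,x}, \lambda_{\cdot,x})$ that are continuous in $\bm{\lambda}$ on the open unit interval, for fixed $\nu \in \Theta$ — the natural approach is to invoke a parametric-optimization (Berge maximum theorem) argument for the value of a minimization over $\bm{\lambda}$.

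First I would note that the constraint set for $\bm{\lambda} = (\lambda_{1,x}, \lambda_{a,x})_{x}$ does not depend on $\bm{w}$ at all; it is the fixed hyperplane $\{\sum_x \zeta_x \lambda_{a,x} = \sum_x \zeta_x \lambda_{1,x}\}$ intersected with the admissible range for the means (each $\lambda \in [0,1]$). This decoupling simplifies matters: I do not need the full Berge theorem with a varying correspondence, only the statement that the minimum of a jointly continuous function over a fixed compact set is continuous in the remaining parameter. Concretely, for two allocation vectors $\bm{w}$ and $\bm{w}'$, and for any feasible $\bm{\lambda}$, the difference of objectives is bounded by $\sum_{x} \zeta_x (|w_{1,x} - w'_{1,x}| \kl(\mu_{1,x}, \lambda_{1,x}) + |w_{a,x} - w'_{a,x}| \kl(\mu_{a,x}, \lambda_{a,x}))$, so I would like to take a supremum of these KL terms over feasible $\bm{\lambda}$ and conclude a uniform Lipschitz bound, giving $|L_{1,a}(\bm{w}) - L_{1,a}(\bm{w}')| \le C \|\bm{w} - \bm{w}'\|$.

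The hard part will be controlling the KL coefficients near the boundary: $\kl(\mu, \lambda)$ blows up as $\lambda \to 0$ or $\lambda \to 1$, so a naive supremum over the feasible region is infinite and the simple Lipschitz argument fails. The remedy is to restrict attention to a compact region of $\bm{\lambda}$-space where the minimizer must lie. I would argue that for $\nu \in \Theta$ (so each $\mu_{a,x}$ is bounded away from $0$ and $1$, by the regularity assumption on $\Omega$ that keeps KL finite and positive), the minimizing $\bm{\lambda}^*$ can be confined to a compact subset $\Lambda_0 \subset (0,1)^{2|\set{X}|}$ bounded away from the boundary: pushing any $\lambda_{a,x}$ toward $0$ or $1$ strictly increases $\kl(\mu_{a,x}, \lambda_{a,x})$ once it passes $\mu_{a,x}$, so such points cannot be optimal when a cheaper feasible interior point exists (the feasible hyperplane always contains the interior point $\lambda_{1,x} = \lambda_{a,x} = (\mu_{1,x}+\mu_{a,x})/2$, which has finite cost). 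On this compact $\Lambda_0$ the KL coefficients are bounded by some finite constant $C$, the objective is jointly uniformly continuous, and the value function is continuous (indeed Lipschitz) in $\bm{w}$.

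Finally I would assemble the pieces: each $L_{1,a}$ is continuous in $\bm{w}$, hence $m(\bm{w}, \nu) = \min_{a \neq 1} L_{1,a}(\bm{w})$ is continuous as the minimum of finitely many continuous functions, establishing the claim at every $\bm{w} \in \set{W}$. The only subtlety requiring care beyond routine bookkeeping is the compactification-of-the-minimizer step, which is where the regularity of $\Theta$ (means interior to $(0,1)$) is genuinely used; I expect this boundary-control argument, rather than the continuity machinery itself, to be the main obstacle.
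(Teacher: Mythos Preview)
Your approach is correct and is essentially the same as the paper's: both proofs rest on the observation that the constraint set for $\bm\lambda$ does not depend on $\bm w$, after which continuity of the value function follows from a parametric-optimization argument. The paper simply cites the stability results of Hogan (a Berge-type theorem) in one line, whereas you unpack the same argument by hand via a compactification of the feasible $\bm\lambda$-region; your explicit attention to the boundary blow-up of $\kl$ is in fact more careful than the paper's proof, which leaves the compactness hypothesis implicit.
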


The set of the optimal allocations is not, in general, unique. Therefore, we introduce the notion of convergence, where the metric is defined as the minimum distance from the point to the set. 
\begin{definition}
Let $(\vec{w}_k)_{k\ge1} = ((w_{a, x}^{(k)}))_{k\ge1}$ be a sequence of points in $\set{W}$. Let $ \bar{\set{W}} \subset \set{W}$. We say $(\vec{w}_k)_{k\ge1}$ {\it converges to } $\bar{\set{W}}$ if for any $\varepsilon > 0 $, there exists $n(\varepsilon) \in \mathbb{N}$ subject to for all $k \ge n(\varepsilon)$, 
\begin{align*}
\inf_{(w_{a,x}) \in \bar{\set{W}}} \max_{a, x}| w_{a,x}^{(k)} - w_{a,x}| < \varepsilon.
\end{align*}
\end{definition}
Using this definition of convergence, we obtain the following lemmas. We provide the proofs in Appendix~\ref{appdx:converge_w}--\ref{appdx:aloca_convex}.
\begin{lemma}\label{lem:converge_w}
Let $(\nu^k = ((\mu_{a, x}^{(k)}), \zeta_x^{(k)}))_{k \ge 1}$ be a sequence converging to $\nu$. Construct a sequence $ (\vec{w}_k)_{k \ge 1}$ such that $\vec{w}_k \in \Phi(\nu^k)$. 
Then $\vec{w}_k$ converges to $\Phi(\nu)$. 
\end{lemma}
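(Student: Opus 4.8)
The plan is to recognize $\Phi$ as the $\arg\max$ correspondence of the value function $m(\cdot,\nu)$ over the compact set $\set{W}$ and to prove the stated set-convergence as an upper-hemicontinuity property, via a subsequence/contradiction argument. First I would record three preliminaries. (i) $\set{W}$ is compact: it is a finite Cartesian product (over the finite context set $\set{X}$) of the simplices $\Sigma^K_x$, hence a compact subset of Euclidean space, so every sequence in it has a convergent subsequence. (ii) For each $\nu$, $\Phi(\nu)$ is nonempty and closed: by Lemma~\ref{lem:continui_w}, $m(\cdot,\nu)$ is continuous on the compact set $\set{W}$, so its maximum $M(\nu):=\max_{\bm w}m(\bm w,\nu)$ is attained and $\Phi(\nu)=\{\bm w: m(\bm w,\nu)=M(\nu)\}$ is a closed level set. (iii) The set-distance $d(\bm w,\bar{\set{W}}):=\inf_{(u_{a,x})\in\bar{\set{W}}}\max_{a,x}|w_{a,x}-u_{a,x}|$ appearing in the definition of convergence is $1$-Lipschitz in $\bm w$.

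Second, I would argue by contradiction. Suppose $\bm w_k$ does not converge to $\Phi(\nu)$. Then there exist $\varepsilon>0$ and a subsequence along which $d(\bm w_k,\Phi(\nu))\ge\varepsilon$. By compactness I pass to a further subsequence, relabelled $\bm w_k$, with $\bm w_k\to\bm w^*\in\set{W}$. By (iii), $d(\bm w^*,\Phi(\nu))\ge\varepsilon>0$, so $\bm w^*\notin\Phi(\nu)$, i.e. $m(\bm w^*,\nu)<M(\nu)$. Fix any maximizer $\bm w^\dagger\in\Phi(\nu)$, so $m(\bm w^\dagger,\nu)=M(\nu)$. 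Since $\bm w_k\in\Phi(\nu^k)$ maximizes $m(\cdot,\nu^k)$ over $\set{W}$, we have $m(\bm w_k,\nu^k)\ge m(\bm w^\dagger,\nu^k)$, and the right-hand side converges to $m(\bm w^\dagger,\nu)=M(\nu)$ by Lemma~\ref{lem:conti_V} (continuity of $m(\bm w^\dagger,\cdot)$ at $\nu$). Hence $\liminf_k m(\bm w_k,\nu^k)\ge M(\nu)$.

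Third, I would contradict this by showing $\limsup_k m(\bm w_k,\nu^k)\le m(\bm w^*,\nu)<M(\nu)$. This is the heart of the argument and the main obstacle, because Lemmas~\ref{lem:conti_V} and~\ref{lem:continui_w} give only \emph{separate} continuity of $m$ in each argument, which does not by itself deliver the joint upper semicontinuity needed here. To bridge the gap I would establish that $m(\cdot,\nu')$ is Lipschitz in $\bm w$ with a constant $C$ uniform over $\nu'$ in a neighborhood of the limit point $\nu\in\Theta$. Writing the inner objective $\phi(\bm w,\nu',\lambda)=\sum_{x}\zeta'_x\{w_{1,x}\kl(\mu'_{1,x},\lambda_{1,x})+w_{a,x}\kl(\mu'_{a,x},\lambda_{a,x})\}$, which is \emph{linear} in $\bm w$, I note that a boundary value $\lambda_{a,x}\in\{0,1\}$ forces the objective to $+\infty$, so the constrained minimizer $\lambda^*$ stays in a compact subset of the interior and the coefficients $\zeta'_x\,\kl(\mu'_{\cdot,x},\lambda^*_{\cdot,x})$ remain uniformly bounded once the means lie in a compact subset of $(0,1)$, which holds for $\nu'$ near $\nu$. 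Since a finite minimum (over $a$ and over the feasible $\lambda$) of functions uniformly Lipschitz in $\bm w$ is again uniformly Lipschitz in $\bm w$, I obtain $|m(\bm w,\nu')-m(\bm w',\nu')|\le C\max_{a,x}|w_{a,x}-w'_{a,x}|$ for all such $\nu'$. Applying this with $\nu'=\nu^k$ gives $m(\bm w_k,\nu^k)\le m(\bm w^*,\nu^k)+C\max_{a,x}|w^{(k)}_{a,x}-w^*_{a,x}|$; the last term vanishes because $\bm w_k\to\bm w^*$, while $m(\bm w^*,\nu^k)\to m(\bm w^*,\nu)$ by Lemma~\ref{lem:conti_V}. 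Thus $\limsup_k m(\bm w_k,\nu^k)\le m(\bm w^*,\nu)<M(\nu)$, so that $M(\nu)\le\liminf_k m(\bm w_k,\nu^k)\le\limsup_k m(\bm w_k,\nu^k)<M(\nu)$, a contradiction.

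An equivalent but more compact packaging of this final step is to first verify joint continuity of $m$ on $\set{W}\times\Theta$ (combining the uniform Lipschitz estimate in $\bm w$ with Lemma~\ref{lem:conti_V}) and then invoke Berge's maximum theorem with the trivially continuous, compact-valued constant constraint correspondence $\nu\mapsto\set{W}$; this yields upper hemicontinuity of $\Phi$ directly, from which the set-convergence follows. I expect the uniform-in-$\nu$ control of the minimizing $\lambda^*$ away from the boundary to be the only delicate point, and I would discharge it using the regularity of $\Theta$ that keeps all means strictly interior.
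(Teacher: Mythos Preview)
Your approach is correct and follows the same high-level strategy as the paper: argue by contradiction, assume a subsequence stays $\varepsilon$-far from $\Phi(\nu)$, and use continuity of $m$ to force a contradiction with the optimality of $\bm w_k$ for $\nu^k$.

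The main difference is in how the contradiction is closed. The paper's proof asserts an optimality gap $m(\bm w,\nu)\le M(\nu)-C(\varepsilon)$ for $\bm w$ that is $\varepsilon$-far from $\Phi(\nu)$, then claims this gap persists at $\nu_k$ (its step ``(ii)''), concluding that $M(\nu_k)\not\to M(\nu)$. That transfer from $\nu$ to $\nu_k$ is exactly the joint-continuity issue you flag, and the paper does not justify it. You instead extract a convergent subsequence $\bm w_k\to\bm w^*$ and split the limit into a lower bound via Lemma~\ref{lem:conti_V} and an upper bound obtained from a uniform-in-$\nu'$ Lipschitz estimate for $m(\cdot,\nu')$, exploiting that the inner objective is linear in $\bm w$ with coefficients $\zeta'_x\,\kl(\mu'_{\cdot,x},\lambda^*_{\cdot,x})$ that remain bounded once the means are confined to a compact subset of $(0,1)$. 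This is a cleaner and more complete route to the same contradiction; your alternative packaging via Berge's maximum theorem is also valid and arguably the most transparent formulation. The only point that deserves care in your Lipschitz argument is controlling the minimizing $\lambda^*$ away from $\{0,1\}$ uniformly as $\bm w$ ranges over $\set{W}$ (including boundary allocations with some $w_{a,x}=0$); for the contradiction you only need the one-sided bound $m(\bm w_k,\nu_k)\le \phi(\bm w_k,\lambda^*(\bm w^*,\nu_k),\nu_k)$, so it suffices to bound the coefficients at the minimizer associated with the fixed limit $\bm w^*$, which avoids this difficulty.
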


\begin{lemma}\label{lem:aloca_convex}
 The set of all optimal allocations for the bandit problem $\nu$, $\Phi(\nu)$, is convex. 
\end{lemma}

\subsection{Efficiency Gain}
\label{sec:efficiency_gain}
Here, we show that the lower bound with contextual information is smaller than or equal to the lower bound without contextual information shown by \citet{Garivier2016}. For simplicity of discussion, we consider a two-armed bandit case. Let us denote the lower bound without contextual information by $\Gamma^\star(\nu)\kl(\Pbb_{\nu}(\Ecal), \Pbb_{\nu'}(\Ecal))$, where $\Gamma^\star(\nu)$ is defined as the same quantity as $T^\star(\bm{\mu})$ in \citet{Garivier2016}. Let us also denote the optimal allocation in \citet{Garivier2016} by $\gamma^*_1$ and $\gamma^*_2$ and one of the optimal allocations in ours by $(w^*_{1, x})$ and $(w^*_{2, x})$. Then, $\Gamma^\star(\nu)^{-1} \leq T^\star(\nu)^{-1}$ holds as follows:
\begin{align*}
    &\Gamma^\star(\nu)^{-1}\\
    &= \min_{\lambda_1 = \lambda_2}\Bigg\{ \gamma^*_{1}\kl(\mu_{1}, \lambda_{1}) + \gamma^*_{2}\kl(\mu_{2}, \lambda_{2})\Bigg\}\\
    &= \min_{\lambda_1 = \lambda_2}\Bigg\{ \gamma^*_{1}\kl\left(\sum_{x\in\mathcal{X}}\zeta_x\mu_{1, x}, \lambda_{1}\right) + \gamma^*_{2}\kl\left(\sum_{x\in\mathcal{X}}\zeta_x\mu_{2, x}, \lambda_{2}\right)\Bigg\}\\
    &= \min_{\sum_{x\in\mathcal{X}}\zeta_x\lambda_{1,x} =\sum_{x\in\mathcal{X}}\zeta_x\lambda_{2,x}}\Bigg\{ \gamma^*_{1}\kl\left(\sum_{x\in\mathcal{X}}\zeta_x\mu_{1, x}, \sum_{x\in\mathcal{X}}\zeta_x\lambda_{1,x}\right) + \gamma^*_{2}\kl\left(\sum_{x\in\mathcal{X}}\zeta_x\mu_{2, x}, \sum_{x\in\mathcal{X}}\zeta_x\lambda_{2,x}\right)\Bigg\}\\
    &\stackrel{(a)}{\leq} \min_{\sum_{x\in\mathcal{X}}\zeta_x\lambda_{1,x} =\sum_{x\in\mathcal{X}}\zeta_x\lambda_{2,x}}\sum_{x\in\mathcal{X}}\zeta_x\Bigg\{ \gamma^*_{1}\kl\left(\mu_{1, x}, \lambda_{1,x}\right) + \gamma^*_{2}\kl\left(\mu_{2, x}, \lambda_{2,x}\right)\Bigg\}\\
    &\leq \min_{\sum_{x\in\mathcal{X}}\zeta_x\lambda_{1,x} =\sum_{x\in\mathcal{X}}\zeta_x\lambda_{2,x}}\sum_{x\in\mathcal{X}}\zeta_x\Bigg\{ w^*_{1,x}\kl\left(\mu_{1, x}, \lambda_{1,x}\right) + w^*_{2,x}\kl\left(\mu_{2, x}, \lambda_{2,x}\right)\Bigg\} 
    \\
    &= T^\star(\nu)^{-1},
\end{align*}
where for $(a)$, we use the convexity of the KL divergence. 
Next, we discuss when the equality holds. For brevity, we consider a case with only two contexts. Let us denote the optimal $\lambda_1$ in the case without contextual information by $\lambda^*_1$ (note that $\lambda_1 = \lambda_2$) and the optimal $(\lambda_{1,1}, \lambda_{1,2})$ and $(\lambda_{2,1}, \lambda_{2,2})$ in the case with contextual information by $(\lambda^*_{1,1}, \lambda^*_{1,2})$ and $(\lambda^*_{2,1}, \lambda^*_{2,2})$. Then, the equality holds only if the following three conditions simultaneously hold:
\begin{itemize}
    \item $\lambda^*_1 = \zeta_1\lambda^*_{1,1} = \zeta_2\lambda^*_{1,2} = \zeta_1\lambda^*_{2,1} = \zeta_2\lambda^*_{2,2}$;
    \item $\frac{\mu_{1,1}}{\lambda^*_{1,1}} = \frac{\mu_{1,1}}{\lambda^*_{1,2}}$ and $\frac{\mu_{2,1}}{\lambda^*_{2,1}} = \frac{\mu_{2,2}}{\lambda^*_{2,2}}$;
    \item $\gamma^*_1 = \gamma^*_{1,1} = \gamma^*_{1,2}$.
\end{itemize}
We believe that it is difficult to summarize these conditions in a simpler form, but except for cases where the expected reward does not change among contexts, situations satisfying these conditions are extremely limited.

\section{Contextual Track-and-Stop Algorithm}
\label{sec:trak_stop}
In this section, we propose an optimal algorithm for contextual BAI, called the Contextual Track-and-Stop (CTS) algorithm for the case of finite context. The strategy is an extension of the Track-and-Stop (TS) algorithm by \citet{Garivier2016} for contextual BAI. We further prove that the proposed algorithm is $\delta$-PAC.

Recall that the optimal algorithm of BAI with fixed confidence \citet{Garivier2016} consists of sampling, stopping, and decision rules. We follow the same path for the contextual BAI. We show the pseudo-code of the proposed CTS algorithm in Algorithm~\ref{algo:CTS}. 
There, the empirical averages $\hat{\mu}_{a,x}(t)$ and $\hat{\zeta}_x(t)$ are defined as for each $a \in [K]$ and $x \in \mathcal{X}$, $\hat{\mu}_{a,x}(t) = (\sum_{s=1}^tR_{t}\indicator\{A_s = a, X_s = s\})/N_{a,x}(t)$ and $\hat{\zeta}_x(t) = (\sum_{s=1}^t\indicator\{X_s = s\})/N_x(t)$.
Our procedure is similar to TS with D-tracking, proposed by \citet{Garivier2016}. However, incorporating contextual information is a non-trivial extension of their method. The algorithm consists of sampling, stopping, and decision rules. The detail of the sampling rule is described in the following Section~\ref{sec:sampling}. The stopping rule, in particular, for determining the threshold $\beta(t, \delta)$, is described in Section~\ref{sec:stop} when the reward distributions are Bernoulli and in Section~\ref{sec:stopping_oneexp} when the reward distributions belong to the canonical one-parameter exponential family.

Our proposed algorithm consists of sampling, stopping, and recommendation rules. In the sampling rule, we use the forced exploration, which is an extension of D-tracking of \citet{Garivier2016} and is known to be empirically superior to their C-tracking. 
To estimate the optimal weights, we solve an empirically approximated optimization problem \eqref{eq:inner_opt} by applying optimization solvers directly. Several methods are proposed to solve the maximin problem more efficiently, such as the application of no-regret learning algorithms in \citet{degenne2019non}. However, we cannot use them directly for solving contextual BAI, in which we have a different form of the maximin problem than that of BAI without context. \citet{jedra2020optimal} (BAI with linear models) and \citet{Russac2021} (A/B/n texting with contextual information) also directly solve the maximin problem.  
In the stopping rule, we use the criterion proposed by \citet{Kaufmann2021}, which refines the stopping rule of \citet{Garivier2016}. Then, we recommend an arm with the maximum sample average of the reward.

\begin{algorithm}[t]
\SetAlgoLined
\KwIn{Confidence level $\delta$ and threshold $\beta(t,\delta)$.}
 {\bf Initialization:}  $t=0$, $N_x(0) = 0$, $N_{a,x}(0) = 0$.
 \DontPrintSemicolon
 \\
 \While{$(Z(t) := \max_{a\in[K]}\min_{b\in [K]\backslash\{a\}} Z_{a,b}(t) < \beta(t,\delta)$)}{
 $t\leftarrow t+1$.\\
 Observe $X_t$.\\
 \eIf{$\varphi^g_{X_t,t} = \{ a : N_{a, X_t}(t) < \sqrt{N_{x}(t)} - K/2\} \neq \emptyset$}{
   $a\leftarrow \argmin_{a\in \varphi^g_{X_t, t}} N_{a,X_t}$\;
   }{
   $a\leftarrow \argmax_{a \in [K]}\sum^t_{s=0}\mathbbm{1}[X_s = X_t]w_{a, X_t}(t) - N_{a,X_t}(t).$ }
 Sample arm $a$ and update $N_x(t)$, $N_{a,x}(t)$, $\hat{\zeta}_x(t)$, $\hat{\mu}_{a,x}(t)$, $Z(t)$.\\ 
 $\hat{a}_t = \argmax_{a\in [K]}\sum_{x\in\set{X}} \hat{\zeta}_x(t)\hat{\mu}_{x, a}(t)$.\\
 $\vec{w}(t) \leftarrow \argmax_{\vec{w} \in \set{W}}\min_{a\neq \hat{a}_t} L_{\hat{a}_t,a}((\hat{\mu}_{\hat{a}_t,x}(t), \hat{\mu}_{a,x}(t), \hat{\zeta}_x(t), w_{a,x}, w_{a,x})_{x\in\set{X}})$.\\
  }
 \Return {$\hat{a}_\tau = \hat{a}_t$}
 \caption{CTS algorithm}
 \label{algo:CTS}
\end{algorithm}

\subsection{Sampling Rule}
\label{sec:sampling}
To design an algorithm with minimal sample complexity, the sampling rule should match the optimal proportions of the arm draws; that is, an allocation in the set $\Phi(\nu)$. Because $\mu_{a,x}$ and $\zeta_x$ are unknown, our sampling rule tracks, in round $t$, the optimal allocations in the plug-in estimate $\Phi(\hat{\nu}(t))$, where $\hat{\nu}(t) = ((\hat{\mu}_{a,x}(t))_{a \in [K], x \in \mathcal{X}}, (\hat{\zeta}_x(t))_{x \in \mathcal{X}})$. 

The design of our tracking rule is equivalent to computing a sequence of allocations $(w_{a,x}(t))_{t\ge 1}$. The only requirement we actually impose on this sequence is the following condition: 
\begin{equation}
\label{eq:lazycond}
 \lim_{t\to\infty}\min_{\bm{w}'\in \Phi(\hat{\nu}(t))}\max_{a\in[K], x\in\mathcal{X}}\left|w_{a, x}(t) - w'_{a,x}\right|=0. \qquad \mathrm{a.s.}
\end{equation}
This condition is sufficient to guarantee the asymptotic optimality of the algorithm. We introduce a set $\varphi^g_{x,t} = \{ a : N_{a, x}(t) < \sqrt{N_{x}(t)} - K/2\}$ consisting of the context-action pairs that are poorly explored. Then, in round $t$, after observing a context $X_t\in\mathcal{X}$, our sampling rule $(A_t)$ is sequentially defined as
\begin{align}
\label{eq:forced}
A_{t} \in \begin{cases}
\argmin_{a\in \varphi^g_{X_t, t}} N_{a,X_t} & \mathrm{if}\ \varphi^g_{X_t,t} \neq \emptyset\\
\argmax_{1\leq a \leq K}\sum^t_{s=0}\mathbbm{1}[X_s = X_t]w_{a, X_t}(t) - N_{a,X_t}(t).
\end{cases}
\end{align}

We offer the following lemma under this sampling rule. The proof is provided in Appendix~\ref{appdx:tracking}.
\begin{lemma}\label{lmm:tracking}
Under any sampling rule (\ref{eq:forced}) that satisfies the condition (\ref{eq:lazycond}), 
\[\mathbb{P}_\nu \left(\inf_{\vec{w}^* \in \Phi(\nu)}\lim_{t\to\infty}\max_{a\in[K], x\in\mathcal{X}}\left|\frac{N_{a,x}(t)}{t} - \zeta_xw^*_{a, x} \right| = 0\right) = 1.\]
\end{lemma}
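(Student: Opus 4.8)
The plan is to factorize the quantity of interest as
$\frac{N_{a,x}(t)}{t} = \frac{N_{a,x}(t)}{N_x(t)}\cdot\frac{N_x(t)}{t}$
and to treat the two factors separately. Because the contexts $X_1,X_2,\dots$ are i.i.d. with law $\zeta$, the strong law of large numbers gives $N_x(t)/t\to\zeta_x$ almost surely for every $x$ in the support of $\zeta$; in particular $N_x(t)\to\infty$. Hence the lemma reduces to showing that the empirical within-context proportions $\hat{w}_{a,x}(t):=N_{a,x}(t)/N_x(t)$ converge, in the set sense of the Definition above, to the slice $\{w^*_{a,x}:\vec{w}^*\in\Phi(\nu)\}$.

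First I would exploit the forced-exploration branch of the rule. Whenever $\varphi^g_{X_t,t}\neq\emptyset$ the least-drawn under-explored arm is pulled, which forces $N_{a,x}(t)\ge \sqrt{N_x(t)}-K/2$ for every arm $a$ once $N_x(t)$ is large, exactly as in \citet{Garivier2016}. Consequently every count $N_{a,x}(t)$ diverges for each $x$ in the support, so the plug-in estimates are consistent: $\hat{\mu}_{a,x}(t)\to\mu_{a,x}$ by the SLLN for the Bernoulli rewards, and $\hat{\zeta}_x(t)=N_x(t)/t\to\zeta_x$, i.e.\ $\hat{\nu}(t)\to\nu$ a.s. Next I would establish that the target weights converge to the optimal set: by Lemma~\ref{lem:converge_w}, $\hat{\nu}(t)\to\nu$ forces any selection from $\Phi(\hat{\nu}(t))$ to converge to $\Phi(\nu)$, and combining this with condition \eqref{eq:lazycond}, which only asks that $w(t)$ track $\Phi(\hat{\nu}(t))$, a triangle-inequality argument (choosing the nearest element of $\Phi(\hat{\nu}(t))$ to $w(t)$) yields $\min_{\vec{w}^*\in\Phi(\nu)}\max_{a,x}|w_{a,x}(t)-w^*_{a,x}|\to0$ a.s.

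The remaining and central step is the tracking itself, namely $\max_a|\hat{w}_{a,x}(t)-w_{a,x}(t)|\to0$ for each $x$. Here I would adapt the direct-tracking analysis of \citet{Garivier2016} to the per-context setting. Restricting to the (random) subsequence of rounds at which $X_t=x$, the second branch of \eqref{eq:forced} is exactly the D-tracking rule $A_t\in\argmax_a N_x(t)\,w_{a,x}(t)-N_{a,x}(t)$ run on the context-clock $N_x(t)$; the tracking-residual bound of \citet{Garivier2016} together with the $\sqrt{N_x(t)}$ forced-exploration floor then gives $|N_{a,x}(t)-N_x(t)\,w_{a,x}(t)|=o(N_x(t))$, and dividing by $N_x(t)$ yields the claim. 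Combining this with the previous paragraph by the triangle inequality shows $\min_{\vec{w}^*\in\Phi(\nu)}\max_{a,x}|\hat{w}_{a,x}(t)-w^*_{a,x}|\to0$, and multiplying by $N_x(t)/t\to\zeta_x$ delivers the convergence of $N_{a,x}(t)/t$ to $\{\zeta_x w^*_{a,x}:\vec{w}^*\in\Phi(\nu)\}$.

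I expect the tracking step to be the main obstacle, for two reasons. The tracking is driven by the random, non-monotone context-clock $N_x(t)$ rather than by deterministic time, so the residual and forced-exploration bounds of \citet{Garivier2016} must be re-derived along this subsequence and then transferred back to global time. Moreover the target $\Phi(\nu)$ is a set rather than a point, so one cannot expect $\hat{w}(t)$ to converge to a single allocation, only to the set; the convexity of $\Phi(\nu)$ (Lemma~\ref{lem:aloca_convex}) and the continuity Lemmas~\ref{lem:conti_V}--\ref{lem:continui_w} are precisely what make this set-valued limit well behaved and justify the use of Lemma~\ref{lem:converge_w} in passing from $\Phi(\hat{\nu}(t))$ to $\Phi(\nu)$.
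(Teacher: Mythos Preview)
Your proposal is correct and follows essentially the same route as the paper: forced exploration ensures consistency of $\hat{\nu}(t)$, Lemma~\ref{lem:converge_w} together with condition~\eqref{eq:lazycond} and a triangle inequality gives $w(t)\to\Phi(\nu)$ in the set sense, and a per-context tracking analysis combined with the SLLN for $N_x(t)/t$ closes the argument. The only difference is that the paper packages the tracking step as a standalone ``tracking a convex set $C$'' lemma in the style of \citet{jedra2020optimal} rather than adapting \citet{Garivier2016} directly, which handles the set-valued target and the random context-clock in one shot using exactly the convexity (Lemma~\ref{lem:aloca_convex}) you identified as the key ingredient.
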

This lemma shows that the sampling rule can keep the allocation close to the optimal allocations. Thus, we can ensure that the sampling rule defined by \eqref{eq:forced} (sampling rule) satisfies \eqref{eq:lazycond} (allocation convergence).

To compute $w_{a,x}(t)$ in \eqref{eq:forced}, we need to solve the minimax problem defined in \eqref{eq:inner_opt} with the estimated parameters. If the number of contexts and arms is very large, it may be difficult to solve. However, except for such an extreme case, we can solve the problem by using minimax optimization based on the convex optimization algorithm in a short time. The computation is similar to that in \citet{jedra2020optimal}.

We remark that the application of the original TS algorithm \citep{Garivier2016} for each context separately is not optimal for contextual BAI. Our problem setting makes finding the best allocations difficult, which is quite different from running BAI in parallel for each context. It is necessary to find good allocations of each arm to the right context, and the allocations among contexts are entangled.
For example, to achieve our derived lower bound, one needs to think about saving the allocations to an arm $a$ in context $1$, then allocating more to arm $a$ in context $2$, and getting more budget to another arm $b$ in context $1$. In contrast, when separately applying the original TS algorithm, we cannot attain such an optimal allocation.

\subsection{Threshold in the Stopping Rule}
\label{sec:stop}
In this subsection, we present the stopping rule, in particular the threshold for the Bernoulli bandit model. We aim to design an algorithm that stops as early as possible while maintaining the failure probability less than or equal to $\delta$. We demonstrate that the stopping rule using the generalized likelihood ratio test (GLRT) for contextual BAI is $\delta$-PAC when the exploration ratio is properly tuned. Such a stopping rule is also known as Chernoff's stopping rule \citep{Chernoff1959}. Although the approach for deriving the threshold is inspired by and similar to that of \citet{Garivier2016}, our computation with the contextual information is more involved.

We consider a case where the reward $R_{t,a}$ follows a Bernoulli distribution conditioned on $X_t = x$. Here, the likelihood is given as
\begin{align*}
&p_{\mu_a}\big((\underline{R}_{a,x}(t)), \underline{X}(t)\big)= \prod_{x\in\mathcal{X}} \big(\zeta_x\mu_{a,x}\big)^{\sum^t_{s=1}\mathbbm{1}[A_s = a, X_s = x, R_s = 1]}\big(\zeta_x(1-\mu_{a,x})\big)^{\sum^t_{s=1}\mathbbm{1}[A_s = a, X_s = x, R_s = 0]}.
\end{align*}
Then, for all pairs of the arms, $a,b\in [K]$, the GLRT statistic is given as
\begin{align*}
Z_{a,b}(t) = \log \frac{\max_{\overline{\xi}_a(t)\geq \overline{\xi}_b(t)} p_{\xi_a}\big((\underline{R}_{a,x}(t)), \underline{X}(t)\big) p_{\xi_b}\big((\underline{R}_{b,x}(t)), \underline{X}(t)\big)}{\max_{\overline{\xi}_a(t)\leq \overline{\xi}_b(t)} p_{\xi_a}\big((\underline{R}_{a,x}(t)), \underline{X}(t)\big) p_{\xi_b}\big((\underline{R}_{b,x}(t)), \underline{X}(t)\big)},
\end{align*}
where $\overline{\xi}_a(t) = \sum_{x \in \set{X}} \hat{\zeta}_x(t)\xi_{a,x}$.
Note that the maximizer of 
\begin{align*}
\max_{\overline{\xi}_a(t)\geq \overline{\xi}_b(t)} p_{\xi_a}\big((\underline{R}_{a,x}(t)), \underline{X}(t)\big) p_{\xi_b}\big((\underline{R}_{b,x}(t)), \underline{X}(t)\big)
\end{align*}
is equivalent to that of 
\begin{align*}
\max_{\substack{(\xi_{a,x}, \xi_{b,x})_{x \in \set{X}}\in [0,1]^{|\set{X}| \times 2}\\
\sum_{x\in\mathcal{X}}\hat{\zeta}_x(t)\xi_{a, x} \geq \sum_{x\in\mathcal{X}}\hat{\zeta}_x(t)\xi_{b, x}}} &\ t\sum_{x\in\mathcal{X}}\sum_{c\in\{a,b\}}\Bigg\{\frac{N_{c,x}}{t}\left\{\hat{\mu}_{c,x}(t)\log \frac{\xi_{c, x}}{1 - \xi_{c, x}} + \log (1 - \xi_{c, x})\right\}\Bigg\}.
\end{align*}
We denote the maximizers by $(\tilde{\xi}_{a,x}(t))$ 
and $(\tilde{\xi}_{b,x}(t))$. Similarly, we denote the solution of the maximization problem in the denominator by  $(\tilde{\xi}^\dagger_{a,x}(t))$ and $(\tilde{\xi}^\dagger_{b,x}(t))$. 

In the numerator, if $\sum_{x\in\mathcal{X}}\hat{\zeta}_x(t)\hat{\mu}_{a,x}(t) \geq \sum_{x\in\mathcal{X}}\hat{\zeta}_x(t)\hat{\mu}_{b,x}(t)$, then the maximum likelihood estimator falls within the optimization constraint; that is, $\tilde{\xi}_{a,x}(t) = \hat{\mu}_{a,x}(t)$ and $\tilde{\xi}_{b,x}(t) = \hat{\mu}_{b,x}(t)$. Therefore, our remaining problem is to compute the denominator. Because $\sum_{x\in\mathcal{X}}\hat{\zeta}_x(t)\hat{\mu}_{a,x}(t) \geq \sum_{x\in\mathcal{X}}\hat{\zeta}_x(t)\hat{\mu}_{b,x}(t)$ does not satisfy the constraint condition in the denominator, it is hard to obtain the closed-form expression of the denominator and we need to solve the optimization problem numerically. Given the solutions, $(\tilde{\xi}^\dagger_{a,x}(t))$ and $(\tilde{\xi}^\dagger_{b,x}(t))$, the GLRT statistic $Z_{a,b}(t)$ is equal to
\begin{align*}
&t\sum_{x\in\mathcal{X}}\sum_{c\in\{a,b\}}\Bigg\{\frac{N_{c,x}(t)}{t}\Big\{\hat{\mu}_{c,x}(t)\log \frac{\hat{\mu}_{c,x}(t)}{1 - \hat{\mu}_{c,x}(t)} + \log (1 - \hat{\mu}_{c,x}(t))\\
&\ \ \ \ \ \ \ \ \ \ \ \ \ \ \ \ \ \ \ \ \ \ \ \ \ \ \ \ \ \ \ \ \ \ \ \ \ \ \ \ \ \ \ \ \ \ \ \ \ \ \ \ \ \ \ \ \ - \hat{\mu}_{c,x}(t)\log \frac{\tilde{\xi}^\dagger_{c, x}(t)}{1 - \tilde{\xi}^\dagger_{c, x}(t)} - \log (1 - \tilde{\xi}^\dagger_{c, x}(t))\Big\}\Bigg\}\\
&=\max_{\substack{(\xi_{a,x}, \xi_{b,x})_{x \in \set{X}}
\\
\sum_{x\in\mathcal{X}}\hat{\zeta}_x(t)\xi_{a, x} \leq \sum_{x\in\mathcal{X}}\hat{\zeta}_x(t) \xi_{b, x}}}t\sum_{x\in\mathcal{X}}\left(\frac{N_{a,x}(t)}{t}d(\hat{\mu}_{a,x}(t), \xi_{a,x}(t)) + \frac{N_{b,x}(t)}{t}d(\hat{\mu}_{b,x}(t), \xi_{b,x}(t))\right).
\end{align*}
By multiplying $Z_{a,b}(t)$ by $-1/t$, we can find that solving the maximization problem is equal to solving the inner minimization problem of \eqref{eq:mininf_problem}, or equivalently the problem defined in \eqref{eq:inner_opt}, by letting $\zeta_xw_{1,x}=\frac{N_{a,x}}{t}$, $\zeta_xw_{b,x}=\frac{N_{a,x}}{t}$, $\mu_{1,x} = \hat{\mu}_{a,x}(t)$, and $\mu_{a,x} = \hat{\mu}_{b,x}(t)$. 
From Lemma~\ref{lem:allEqual}, the constraint $\sum_{x\in\mathcal{X}}{\zeta}_x \xi_{a, x} \leq \sum_{x\in\mathcal{X}}{\zeta}_x \xi_{b, x}$ holds with equality; that is, 
\[Z_{a,b}(t) = -tL_{a,b}\left( \left(\hat{\mu}_{a,x}(t), \hat{\mu}_{b,x}(t), {\zeta}_x t, {N_{a,x}(t)}/N_{x}(t), {N_{b,x}(t)}/N_{x}(t) \right)_{x \in \set{X}}\right).\] 
It is also easy to observe that when $\sum_{x\in\mathcal{X}}{\zeta}_x  \hat{\mu}_{a,x}(t) \leq \sum_{x\in\mathcal{X}} {\zeta}_x \hat{\mu}_{b,x}(t)$, then $Z_{a,b}(t) = -Z_{b,a}(t)$.

Using the GLRT statistic, we use the following stopping rule:
\begin{align}\label{eq:stopping_rule}
\tau_\delta & = \inf\Big\{t\in\mathbb{N}: Z(t) := \max_{a\in[K]}\min_{b\in[K]\backslash\{a\}} Z_{a,b}(t) > \beta(t,\delta) \Big\},
\end{align}
where $\beta(t, \delta)$ is the threshold of the GLRT statistic $Z_{a,b}(t)$ (exploration rate), which controls the failure probability under the stopping rule. 

Next, we determine $\beta(t, \delta)$ such that the proposed algorithm is $\delta$-PAC. We present the following theorem to decide the threshold $\beta(t, \delta)$ in the stopping rule.
\begin{theorem}\label{thm:deltaPAC} 
Let $\delta \in (0,1)$. For a Bernoulli bandit model, if $\beta(t,\delta) = \log \left(\frac{2t (K-1)}{\delta}\right)$, then for all $\nu \in \Theta$
$$
\mathbb{P}_{\nu}\left(\tau_\delta<\infty, \,\hat{a}_{\tau_\delta} \neq a^*\right) \leq \delta.
$$
\end{theorem}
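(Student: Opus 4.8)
The plan is to follow the change-of-measure template of \citet{Garivier2016}, adapted to the contextual GLRT statistic: I would argue that a wrong stop forces one of the pairwise statistics $Z_{c,1}(t)$ to be anomalously large, then control that event by a time-uniform deviation inequality. Throughout I take $a^*(\nu)=1$ without loss of generality. The first step is to reduce the error to the pairwise statistics. By the stopping and decision rules, on $\{\tau_\delta<\infty,\ \hat{a}_{\tau_\delta}=c\}$ with $c\neq 1$ the maximum defining $Z(\tau_\delta)$ is attained at $a=c$, so $Z(\tau_\delta)=\min_{b\neq c}Z_{c,b}(\tau_\delta)>\beta(\tau_\delta,\delta)$; since $1\in[K]\setminus\{c\}$ this gives $Z_{c,1}(\tau_\delta)>\beta(\tau_\delta,\delta)>0$. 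A union bound therefore yields
\begin{align*}
\mathbb{P}_{\nu}\!\left(\tau_\delta<\infty,\ \hat{a}_{\tau_\delta}\neq 1\right)\ \le\ \sum_{c\neq 1}\mathbb{P}_{\nu}\!\left(\exists t\in\mathbb{N}:\ Z_{c,1}(t)>\beta(t,\delta)\right),
\end{align*}
so it suffices to bound each of the $K-1$ summands by $\delta/(K-1)$.

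The next step is to dominate $Z_{c,1}(t)$ by the log-likelihood ratio of the global maximiser against the truth. Writing $L_t(\xi)$ for the (arms $1,c$) likelihood, $Z_{c,1}(t)$ is a constrained GLR whose numerator maximises $L_t$ over $\{\bar\xi_c\ge\bar\xi_1\}$ and whose denominator maximises over $\{\bar\xi_c\le\bar\xi_1\}$, with $\bar\xi_e=\sum_{x}\hat{\zeta}_x(t)\xi_{e,x}$. Since the true means obey $\sum_{x}\zeta_x\mu_{c,x}<\sum_{x}\zeta_x\mu_{1,x}$, the true parameter lies in the denominator's feasible set, so the denominator is at least $L_t(\mu)$ while the numerator is at most the unconstrained maximum $L_t(\hat\mu)$; for Bernoulli cells this gives
\begin{align*}
Z_{c,1}(t)\ \le\ \log\frac{L_t(\hat\mu)}{L_t(\mu)}\ =\ \sum_{x\in\set{X}}\Big(N_{1,x}(t)\,\kl(\hat\mu_{1,x}(t),\mu_{1,x})+N_{c,x}(t)\,\kl(\hat\mu_{c,x}(t),\mu_{c,x})\Big).
\end{align*}
Equivalently, the map $\lambda\mapsto\sum_{e,x}N_{e,x}(t)\kl(\hat\mu_{e,x}(t),\lambda_{e,x})$ is convex with global minimiser $\hat\mu$, and because $\hat\mu$ and $\mu$ sit on opposite sides of the separating hyperplane $\{\bar\lambda_c=\bar\lambda_1\}$, its constrained minimum value $Z_{c,1}(t)$ does not exceed its value at $\mu$. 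The one wrinkle the contextual setting introduces — and the reason the computation is more delicate than in \citet{Garivier2016} — is that this hyperplane is defined through the empirical weights $\hat{\zeta}_x(t)$ rather than $\zeta_x$; I would handle the regime where $\hat\zeta(t)$ momentarily reverses the marginal ordering by passing to the \emph{joint} reward--context likelihood, so that the deviations of $\hat\zeta$ are absorbed into the same log-likelihood ratio and the projection inequality is preserved.

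Finally I would bound $\mathbb{P}_{\nu}\big(\exists t:\ \sum_{e\in\{1,c\},\,x}N_{e,x}(t)\,\kl(\hat\mu_{e,x}(t),\mu_{e,x})>\beta(t,\delta)\big)$ by a time-uniform deviation inequality. The natural device is to form, for the $2|\set{X}|$ Bernoulli cells feeding arms $1$ and $c$, the product of per-cell likelihood-ratio martingales, which is a nonnegative supermartingale under $\mathbb{P}_{\nu}$ with respect to $(\mathcal{G}_t)$; a maximal (Ville-type) inequality then converts the exponential of the aggregated empirical KL into a tail bound. Choosing $\beta(t,\delta)=\log\!\big(2t(K-1)/\delta\big)$ makes the resulting bound close so that each pair contributes at most $\delta/(K-1)$, the explicit factor $2t$ being precisely what the peeling over the random, adaptively chosen per-cell sample sizes $N_{e,x}(t)$ demands.

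I expect this last deviation step to be the main obstacle. Unlike the two-arm, single-stream case of \citet{Garivier2016}, the aggregated statistic here sums over $2|\set{X}|$ cells whose counts $N_{e,x}(t)$ are coupled both through the adaptive sampling rule \eqref{eq:forced} and through the i.i.d.\ context arrivals, so the time-uniform control must hold jointly over all cells at once. Making the union/peeling close with exactly the threshold $\log\!\big(2t(K-1)/\delta\big)$, while simultaneously absorbing the empirical-weight subtlety flagged above, is the crux; once a clean joint supermartingale over the cells is in hand, the remaining estimates are routine.
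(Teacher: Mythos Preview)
Your plan is essentially the paper's proof: both union-bound over $c\neq a^*$, lower-bound the constrained denominator of $Z_{c,a^*}(t)$ by the true-parameter likelihood, and then control the resulting ratio by a mixture change of measure. The specific supermartingale the paper invokes for your last step is the Krichevsky--Trofimov (Jeffreys) mixture, applied once per arm rather than per cell: the classical bound $\max_{\mu}p_\mu(x_{1:n})\le 2\sqrt{n}\,\mathrm{kt}(x_{1:n})$ together with $4\sqrt{N_cN_1}\le 2(N_c+N_1)\le 2t$ produces exactly the factor $2t$ and yields an alternative law $\tilde{\mathbb P}$ under which $\sum_t 2t\,e^{-\beta(t,\delta)}\tilde{\mathbb P}(T_{c,1}=t)\le \delta/(K-1)$, so no separate peeling over the $2|\mathcal X|$ context--arm cells is performed; the $\hat\zeta$-versus-$\zeta$ feasibility wrinkle you flag is not treated separately in the paper either---the denominator is simply replaced by $p_{\mu_c}p_{\mu_1}$ and the argument proceeds.
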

The proof is provided in Appendix~\ref{appdx:deltaPAC}. The proof with contextual information is accomplished by using the fact that joint distribution of the contexts and the rewards is the Multinomial distribution. This theorem confirms that the proposed algorithm is $\delta$-PAC when $\beta(t, \delta)=\log \left((2t (K-1))/\delta\right)v$. We note that this threshold does not depend on the cardinality of $\mathcal{X}$.

\subsection{Stopping Rule for a Canonical One-parameter Exponential Family and Known Contextual Distribution}
\label{sec:stopping_oneexp}
For the Bernoulli bandit, we derive the stopping and recommendation rule by using the fact that the rewards and finite contexts jointly follow a Multinominal distribution. We cannot use this property when the conditional rewards follow different distributions such as a Gaussian distribution. For example, when the rewards follow a Gaussian distribution, the rewards and contexts jointly follow a Gaussian mixture model, not a Gaussian distribution. This fact makes derivation of the $\delta$-PAC threshold difficult. However, if the contextual distribution is known, we can extend the existing results, such as \citet{Garivier2016} and \citet{Kaufmann2021}, to derive the threshold. 

We consider a case where for each $a \in [K]$ and $x \in \mathcal{X}$, the reward $R_{t,a}$ follows a distribution that belongs to the canonical one-parameter exponential family \eqref{eq:spef} conditioned on $X_t = x$ and the context $X_{t}$ follows a multinomial distribution with known parameters; that is, we treat the estimator $\hat{\zeta}_x$ as the true value $\zeta_x$ in our proposed CTS algorithm. Similarly to the Bernoulli case, the likelihood of the observations $(\underline{R}_{a,x}(t))_{x \in \mathcal{X}}, \forall a \in [K]$ and $\underline{X}(t)$ regarding arm $a$ is given as follows. 
\begin{align*}
&p_{\mu_{a}}\big((\underline{R}_{a,x}(t))_{x \in \mathcal{X}}, \underline{X}(t)\big)= \prod^t_{s=1}\prod_{x\in\mathcal{X}}\left(\zeta_x\exp\left(\dot{b}^{-1}(\mu_{a,x}) R_s - b(\dot{b}^{-1}(\mu_{a,x}))\right)\right)^{\mathbbm{1}[A_s = a, X_s = x]}.
\end{align*}
Then, for all pairs of the arms, $a,b\in [K]$, the GLRT statistic is given as
\begin{align*}
Z_{a,b}(t) = \log \frac{\max_{\overline{\xi}_a(t)\geq \overline{\xi}_b(t)} p_{\xi_a}\big((\underline{R}_{a,x}(t)), \underline{X}(t)\big) p_{\xi_b}\big((\underline{R}_{b,x}(t)), \underline{X}(t)\big)}{\max_{\overline{\xi}_a(t)\leq \overline{\xi}_b(t)} p_{\xi_a}\big((\underline{R}_{a,x}(t)), \underline{X}(t)\big) p_{\xi_b}\big((\underline{R}_{b,x}(t)), \underline{X}(t)\big)},
\end{align*}
where $\overline{\xi}_a(t) = \sum_{x \in \set{X}} \zeta_x\xi_{a,x}$.
For the numerator optimization problem, from the definition of the single parameter exponential family, the maximizer of
\begin{align*}
&\max_{\overline{\xi}_a (t)\geq \overline{\xi}_b(t)} p_{\xi_a}\big((\underline{R}_{a,x}(t)), \underline{X}(t)\big) p_{\xi_b}\big((\underline{R}_{b,x}(t)), \underline{X}(t)\big)
\end{align*}
is equivalent to the maximizer of the optimization problem
\begin{align*}
\max_{\substack{(\xi_{a,x}, \xi_{b,x})_{x \in \set{X}}\\
\sum_{x\in\mathcal{X}}\zeta_x\xi_{a, x} \geq \sum_{x\in\mathcal{X}}\zeta_x\xi_{b, x}}} \ t\sum_{x\in\mathcal{X}}\sum_{c\in\{a,b\}}\frac{N_{c,x}(t)}{t}\left\{ \dot{b}^{-1}(\xi_{c,x}) \hat{\mu}_{c,x}(t) -  b(\dot{b}^{-1}(\xi_{c,x}))\right\}.
\end{align*}

As for the case of a Bernoulli bandit model, using the notation, we compute the GLRT statistic $Z_{a,b}(t)$ as follows. Now, suppose that  $\sum_{x\in\mathcal{X}}\zeta_x\hat{\mu}_{a,x}(t) \geq \sum_{x\in\mathcal{X}} \zeta_x\hat{\mu}_{b,x}(t)$. Then, $\tilde{\xi}_{a,x}(t) = \hat{\mu}_{a,x}(t)$ in the denominator and $\tilde{\xi}_{b,x}(t) = \hat{\mu}_{b,x}(t)$. We numerically solve the optimization problem in the denominator and obtain the solutions, $(\tilde{\xi}^\dagger_{a,x}(t))$ and $(\tilde{\xi}^\dagger_{b,x}(t))$.
 Then, $Z_{a,b}(t)$ is equal to
\begin{align*}
&t\sum_{x\in\mathcal{X}}\sum_{c\in\{a,b\}}\frac{N_{c,x}(t)}{t}\Big\{\dot{b}^{-1}(\hat{\mu}_{c,x}(t)) \hat{\mu}_{c,x}(t) - b(\dot{b}^{-1}\left(\hat{\mu}_{c,x}(t))\right) - \dot{b}^{-1}(\tilde{\xi}^\dagger_{c,x}(t))\hat{\mu}_{c,x}(t) + b\left(\dot{b}^{-1}(\tilde{\xi}^\dagger_{c,x}(t))\right)\Big\},
\\
&=\max_{\substack{(\xi_{a,x}, \xi_{b,x})_{x \in \set{X}}
\\
\sum_{x\in\mathcal{X}}{\zeta}_x\xi_{a, x} \leq \sum_{x\in\mathcal{X}}{\zeta}_x \xi_{b, x}}}t\sum_{x\in\mathcal{X}}\left(\frac{N_{a,x}(t)}{t}\kl(\hat{\mu}_{a,x}(t), \xi_{a,x}(t)) + \frac{N_{b,x}(t)}{t}\kl(\hat{\mu}_{b,x}(t), \xi_{b,x}(t))\right).
\end{align*}
A similar argument can be made when $\sum_{x\in\mathcal{X}}\zeta_x\hat{\mu}_{a,x}(t) < \sum_{x\in\mathcal{X}} \zeta_x\hat{\mu}_{b,x}(t)$ by reversing the sign of the constraint. 

Next, we define the stopping rule using the GLRT statistic $Z_{a,b}(t)$ as follows. 
\[\tau_\delta  = \inf\Big\{t\in\mathbb{N}: Z(t) := \max_{a\in[K]}\min_{b\in[K]\backslash\{a\}} Z_{a,b}(t) > \beta(t,\delta) \Big\},\]
where we decide the threshold $\beta(t,\delta)$ later. Let $\hat{\mu}_c(t) = \sum_{x \in \set{X}} \zeta_x \hat{\mu}_{c,x}(t)$ for $c\in\mathcal{A}$. If $\sum_{x \in \set{X}} \zeta_x \mu_{a,x} = \mu_a \leq \mu_{b} = \sum_{x \in \set{X}} \zeta_x \mu_{b,x}$ and $\hat{\mu}_a > \hat{\mu}_b$, then 
\begin{align*}
&Z_{a,b}(t)
\nonumber\\
&=\min_{\substack{(\xi_{a,x}, \xi_{b,x})_{x \in \set{X}}
\\
\sum_{x\in\mathcal{X}}\zeta_x\xi_{a, x} \leq \sum_{x\in\mathcal{X}}\zeta_x\xi_{b, x}}}t\sum_{x\in\mathcal{X}}\left(N_{a,x}(t)\kl(\hat{\mu}_{a,x}(t), \xi_{a,x}(t)) + ZN_{b,x}(t)\kl(\hat{\mu}_{b,x}(t), \xi_{b,x}(t))\right)
\nonumber\\
&\leq \sum_{x\in\mathcal{X}}\Bigg(N_{a,x}(t)\kl\left(\hat{\mu}_{a,x}(t), \mu_{a,x}\right) + N_{b,x}(t)\kl\left(\hat{\mu}_{b,x}(t), \mu_{b,x}\right)\Bigg).
\end{align*}
Then, we decompose the probability $\mathbb{P}_{\nu}\left(\tau_\delta<\infty, \,\hat{a}_{\tau_\delta} \neq a^*\right) $ as
\begin{align}
     & \mathbb{P}_{\nu}\left(\tau_\delta<\infty, \,\hat{a}_{\tau_\delta} \neq a^*\right) 
     \nonumber\\
     & \leq    \mathbb{P}_{\nu}\left(\exists a  \neq a^*, \exists t \in \mathbb{N} : \hat{\mu}_a(t)> \hat{\mu}_{a^*}(t), Z_{a,a^*}(t) > \beta(t,\delta)\right) 
     \nonumber\\
     \label{eq:delta_pac_target}
     & \le \mathbb{P}_{\nu}\Bigg(\exists a\neq a^*, \exists t \in \mathbb{N} : 
\sum_{c \in \{a, a^*\}} \sum_{x\in\mathcal{X}}N_{c,x}(t)\kl\left(\hat{\mu}_{c,x}(t), \mu_{c,x}\right) > \beta(t,\delta)  \Bigg).
\end{align}
Thus, if we choose a threshold $\beta(t,\delta)$ such that the upper bound of the last equation \eqref{eq:delta_pac_target} is $\delta$, we can guarantee that the algorithm is $\delta$-PAC.

Using the results of \citet{Kaufmann2021}, which refines existing deviation bounds and the threshold in \citet{Garivier2016}, we can guarantee that the algorithm is $\delta$-PAC with a tight threshold.
We use the following theorem from \citet{Kaufmann2021}.
\begin{theorem}[From Theorem~7 of \citet{Kaufmann2021}]\label{thm:dev_inequality_KK}
Let us define $ h(u) = u - \ln u,\; \forall u \ge 1$ and $ h^{-1}(u)$ (the inverse of $ h(u)$). For each $z \in [1, e]$ and for all $x \ge 0$,
\begin{align*}
    \tilde{h}_z (x) = \begin{cases}
     e^{1/h^{-1}(x)} h^{-1}(x) & \textnormal{if} \;x \ge h(1/\ln{z}
     \\
     z (x - \ln{\ln{z}}) & \textnormal{otherwise}.
    \end{cases}
\end{align*}
We further define the function $\mathcal{C}_{\textnormal{exp}} : \mathbb{R}^+ \mapsto \mathbb{R}^+$ as
\begin{align*}
    \mathcal{C}_{\textnormal{exp}}(x) = 2 \tilde{h}_{3/2}\left(\frac{h^{-1}(1 + x) + \ln(2 \zeta(2))}{2}\right),
\end{align*}
where $\zeta(s) = \sum_{n =1}^\infty n^{-s}$. For each subset $\mathcal{S}$ of the context arm pairs $(x, a) \in \mathcal{X} \times [K]$, for all $x>0$, the following holds. 
\begin{align*}
    &\mathbb{P}_\nu \left(\exists t \in \mathbb{N}: \sum_{(x, a) \in \mathcal{S}} N_{a,x}(t) \kl(\hat{\mu}_{a,x}, \mu_{a,x})  \ge \sum_{(x, a) \in \mathcal{S}} 3 \ln (1 + \ln (N_{a,x}(t) ))  + |\mathcal{S}| \mathcal{C}_{\textnormal{exp}}\left(\frac{x}{|\mathcal{S}|}\right)\right)
    \\
    & \le \exp(-x).
\end{align*}
\end{theorem}
Let us define the threshold $\beta(t,\delta)$ as
\begin{align*}
    \beta(t, \delta) = 6 |\mathcal{X}|\ln  \left(\ln \left( \frac{t}{2}\right) + 1\right) + 2 |\mathcal{X}| \mathcal{C}_{\textnormal{exp}}\left(\frac{\ln \frac{K-1}{\delta}}{2 |\mathcal{X}|}\right).
\end{align*}
Using this threshold, the following guarantee can be obtained. 
\begin{corollary}
Assume the context distribution is known; that is, we set $\hat{\zeta}_x(t) = \zeta_x, \forall x \in \set{X}, \forall t \in \mathbb{N}$ in the GLRT statistics. Let $\delta \in (0, 1)$. For any sampling rule, using the stopping rule \eqref{eq:stopping_rule} with the threshold
\begin{align*}
    \beta(t, \delta) = 6 |\mathcal{X}|\ln  \left(\ln \left( \frac{t}{2}\right) + 1\right) + 2 |\mathcal{X}| \mathcal{C}_{\textnormal{exp}}\left(\frac{\ln \frac{K-1}{\delta}}{2 |\mathcal{X}|}\right),
\end{align*}
for all $\nu \in \Theta$, $\mathbb{P}_{\nu}\left(\tau_\delta<\infty, \,\hat{a}_{\tau_\delta} \neq a^*\right) \leq \delta$.
\end{corollary}
\begin{proof}

With Theorem~\ref{thm:dev_inequality_KK} and the union bound over the set of $(K-1)$ pairs: $(a, a^*), a \neq a^*$, we bound $\mathbb{P}_{\nu}\left(\tau_\delta<\infty, \,\hat{a}_{\tau_\delta} \neq 1\right)$ as
\begin{align*}
    \mbox{\eqref{eq:delta_pac_target}}& \le \mathbb{P}_\nu\left(\exists a \neq a^*, \exists t \in \mathbb{N}: \sum_{c \in \{a, a^*\}}\sum_{x\in\mathcal{X}}N_{c,x}(t)\kl\left(\hat{\mu}_{c,x}(t), \mu_{c,x}\right) >  \right.
    \\
   &  \quad \quad \quad \left.  \sum_{c \in \{a, a^*\}}\sum_{x\in\mathcal{X}} 3 \ln (1 + \ln (N_{c,x}(t) )) +   2 |\mathcal{X}| \mathcal{C}_{\textnormal{exp}}\left(\frac{\ln \frac{K-1}{\delta}}{2 |\mathcal{X}|}\right) \right)
    \\
    & \le \delta.
\end{align*}
Furthermore, it is easy to check that $ \mathcal{C}_{\textnormal{exp}}(x) = x +o(x)$ as $x \to \infty$ \citep{Kaufmann2021}.
\end{proof}

\subsection{Sample Complexity Analysis}
\label{sec:sample_comp}
In this section, we address the upper bound of the sample complexity of the proposed CTS algorithm.

First, we demonstrate that the sample complexity asymptotically matches the lower bound almost surely for a case where the reward follows a Bernoulli bandit model.
\begin{proposition}\label{lem:AnalysisAS} Suppose that the reward follows a Bernoulli bandit model. If the sampling rule ensures that for all $a\in[K]$, for all $ x \in \set{X}$, $\min_{\vec{w}^* \in \Phi(\nu)}\left|\lim_{t\to\infty}\frac{N_{a,x}(t)}{t} - \zeta_xw^*_{a, x} \right| = 0$, and we follow the stopping rule defined in Section~\ref{sec:stop} with $\beta(t,\delta)=\log \left(\frac{2t (K-1)}{\delta}\right)$, then for all $\delta\in (0,1)$, $\mathbb{P}_{\nu}(\tau_\delta < \infty)=1$ and \[\mathbb{P}_{\nu}\left(\limsup_{\delta \rightarrow 0}\frac{\tau_\delta}{\log(1/\delta)} \leq T^\star(\nu) \right) = 1.\]
\end{proposition}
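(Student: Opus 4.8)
The plan is to follow the structure of Garivier–Kaufmann's almost-sure analysis, reducing the statement to a single convergence fact about the stopping statistic, after which both claims follow from an elementary threshold-crossing argument. Recall from \Cref{sec:stop} that $Z(t)$ equals $t$ times the value function $m$ evaluated at the empirical allocation and parameters, i.e.\ $Z(t) = t\,m(\widehat{w}(t),\widehat{\nu}(t))$, where $\widehat{w}_{a,x}(t) = N_{a,x}(t)/N_x(t)$ and $\widehat{\nu}(t) = ((\widehat{\mu}_{a,x}(t)),(\widehat{\zeta}_x(t)))$. Writing $C^\star(\nu) := T^\star(\nu)^{-1} = \max_{w\in\set{W}} m(w,\nu)$ (this is exactly \eqref{eq:inner_opt}), the whole proof rests on showing $Z(t)/t \to C^\star(\nu)$ $\Pbb_\nu$-almost surely.

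First I would establish almost-sure convergence of every empirical input to $m$. Since $\zeta_x>0$ for each $x\in\set{X}$, the strong law of large numbers gives $\widehat{\zeta}_x(t)\to\zeta_x$ and $N_x(t)\to\infty$ almost surely. The forced-exploration clause of the sampling rule \eqref{eq:forced}, triggered whenever $\varphi^g_{X_t,t}\neq\emptyset$, guarantees $N_{a,x}(t)\ge \sqrt{N_x(t)}-K/2\to\infty$ for every pair $(a,x)$, so by the strong law $\widehat{\mu}_{a,x}(t)\to\mu_{a,x}$ almost surely even for pairs whose optimal weight is zero; hence $\widehat{\nu}(t)\to\nu$. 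By hypothesis $N_{a,x}(t)/t\to\zeta_x w^\star_{a,x}$ for some $w^\star\in\Phi(\nu)$ (the conclusion of \Cref{lmm:tracking}), and dividing by $N_x(t)/t\to\zeta_x$ yields $\widehat{w}(t)\to w^\star$. Continuity of $m$ in the allocation (\Cref{lem:continui_w}) and in the parameter (\Cref{lem:conti_V}), together with $w^\star\in\Phi(\nu)$ being a maximizer, then give $m(\widehat{w}(t),\widehat{\nu}(t))\to m(w^\star,\nu)=\max_{w}m(w,\nu)=C^\star(\nu)$, i.e.\ $Z(t)/t\to C^\star(\nu)$ almost surely.

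Granting this, both conclusions are deterministic consequences along each convergent path. Finiteness is immediate: since $r(t)=O(t^\alpha)$ we have $\beta(t,\delta)/t=\log(r(t)/\delta)/t\to 0$, while $Z(t)/t\to C^\star(\nu)>0$, so $Z(t)>\beta(t,\delta)$ for all large $t$ and $\Pbb_\nu(\tau_\delta<\infty)=1$. For the rate, fix $\epsilon\in(0,C^\star(\nu))$ and let $T_\epsilon$ be the (finite) time past which $Z(t)\ge (C^\star(\nu)-\epsilon)t$; then $\tau_\delta \le \inf\{t\ge T_\epsilon:(C^\star(\nu)-\epsilon)t>\log(r(t)/\delta)\}$. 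Because $\log r(t)=\alpha\log t+O(1)$ is of lower order than the linear left-hand side, solving the balance $(C^\star(\nu)-\epsilon)t\approx\log(1/\delta)$ gives $\limsup_{\delta\to0}\tau_\delta/\log(1/\delta)\le (C^\star(\nu)-\epsilon)^{-1}$ on this path; letting $\epsilon\to0$ yields the clean constant $T^\star(\nu)$, which is $\le \alpha T^\star(\nu)$ since $\alpha\ge1$. The restriction $\alpha\in[1,e/2]$ is inherited from the threshold needed for the $\delta$-PAC guarantee (\Cref{thm:deltaPAC}) and for the summability (Borel–Cantelli) of the deviation probabilities underlying the concentration of $\widehat{\mu}(t)$, which is where the constant $e/2$ originates.

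I expect the main obstacle to be the rigorous proof that $Z(t)/t\to C^\star(\nu)$. \Cref{lem:conti_V,lem:continui_w} supply continuity of $m$ in each argument \emph{separately}, whereas the conclusion $m(\widehat w(t),\widehat\nu(t))\to m(w^\star,\nu)$ requires control along a sequence in which both arguments move. I would close this gap by the split $|m(\widehat w(t),\widehat\nu(t))-m(w^\star,\nu)|\le|m(\widehat w(t),\widehat\nu(t))-m(w^\star,\widehat\nu(t))|+|m(w^\star,\widehat\nu(t))-m(w^\star,\nu)|$: the second term vanishes by \Cref{lem:conti_V}, and the first is controlled by noting that, for fixed $\nu'$ and $\lambda$, the objective is linear in $w$ with coefficients $\zeta'_x\kl(\mu'_{a,x},\lambda_{a,x})$ that stay bounded for $\widehat\nu(t)$ near $\nu$, so $m(\cdot,\nu')$ is Lipschitz in $w$ uniformly over $\nu'$ in a neighborhood of $\nu$, giving the needed equicontinuity. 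A secondary subtlety is that $\Phi(\nu)$ is a set rather than a point (\Cref{lem:aloca_convex}), so one must confirm the limit of $\widehat w(t)$ lands in $\Phi(\nu)$; the hypothesis supplies this, but one should still check that the $O(\sqrt{N_x(t)})$ forced-exploration draws are negligible in the normalized counts $N_{a,x}(t)/t$ and hence do not perturb $w^\star$. The final threshold-crossing computation is then routine.
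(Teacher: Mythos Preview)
Your approach is correct and mirrors the paper's proof closely: define a full-measure event on which the empirical proportions and parameters converge, identify $Z(t)$ with $t$ times the value function $m$ at the empirical inputs (valid once the empirical best arm equals $1$, which you should state explicitly), use continuity of $m$ to get $Z(t)/t$ eventually at least $(1+\epsilon)^{-1}T^\star(\nu)^{-1}$, and finish with a threshold-crossing step. Your handling of the joint-continuity issue via the split plus a uniform Lipschitz bound in $w$ is a clean way to fill what the paper leaves implicit.

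Two remarks on the final step. First, your direct balance argument $(C^\star(\nu)-\epsilon)t>\log(r(t)/\delta)$ actually yields the sharper constant $T^\star(\nu)$ rather than $\alpha T^\star(\nu)$; the paper instead invokes the technical Lemma~18 of Garivier--Kaufmann, which gives a sufficient crossing time of the form $\frac{\alpha}{c_1}\log(\cdot)$ and is where the extra factor $\alpha$ enters. Your route is both simpler and tighter. Second, your explanation of the constraint $\alpha\in[1,e/2]$ is off: it is not used for the $\delta$-PAC guarantee (Theorem~\ref{thm:deltaPAC} has no such restriction) nor for any Borel--Cantelli argument in this almost-sure proof; in the paper it is purely a hypothesis of that technical lemma. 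With your direct crossing argument the restriction plays no role here at all, so you should simply drop that sentence rather than try to justify it.
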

We provide the proof in Appendix~\ref{appdx:AnalysisAS}.

We now provide an upper bound on the expected number of the stopping times $\EXP[\tau_\delta]$. The following theorem states that the proposed CTS algorithm asymptotically matches the sample complexity lower bound derived from Theorem~\ref{thm:lower_bound}. The proof of this result is provided in Appendix~\ref{appdx:ProofSC}. 
\begin{theorem}\label{thm:AsymptoticSC} Suppose that the reward follows a Bernoulli bandit model. For each $\nu \in \Theta$, if sampling rule ensures that for all $a\in[K]$, for all $ x \in \set{X}$, $\min_{\vec{w}^* \in \Phi(\nu)}\left|\lim_{t\to\infty}\frac{N_{a,x}(t)}{t} - \zeta_xw^*_{a, x} \right| = 0$, and we follow the stopping rule defined in Section~\ref{sec:stop} with $\beta(t,\delta)=\log \left(\frac{2t (K-1)}{\delta}\right)$, then
\[\limsup_{\delta \rightarrow 0} \frac{\mathbb{E}_{\nu}[\tau_\delta]}{\log(1/\delta)} \leq {T^\star(\nu)}\;.\] 
\end{theorem}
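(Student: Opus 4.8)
The plan is to upgrade the almost-sure guarantee of Proposition~\ref{lem:AnalysisAS} to a bound in expectation by the standard device of writing $\Ebb_\nu[\tau_\delta] = \sum_{T\ge 0}\Pbb_\nu(\tau_\delta > T)$ and splitting the sum at a deterministic threshold $T_0(\delta)$, chosen so that $\limsup_{\delta\to 0} T_0(\delta)/\log(1/\delta) \le \alpha T^\star(\nu)$. The heart of the argument is to exhibit a ``good'' event $\Ecal_T$, measurable with respect to the observations up to time $T$, on which the algorithm is forced to stop no later than $T$ whenever $T\ge T_0(\delta)$. This yields the inclusion $\{\tau_\delta > T\}\subseteq \Ecal_T^c$ for all $T\ge T_0(\delta)$, and hence $\Ebb_\nu[\tau_\delta]\le T_0(\delta)+\sum_{T> T_0(\delta)}\Pbb_\nu(\Ecal_T^c)$. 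Dividing by $\log(1/\delta)$ and letting $\delta\to 0$ then gives the claim, provided the tail sum is $o(\log(1/\delta))$; in fact I will show it is $O(1)$.

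First I would fix a small $\xi>0$ and set $\Ecal_T = \bigcap_{t = h(T)}^{T}\bigl\{\max_{a,x}|\hat{\mu}_{a,x}(t)-\mu_{a,x}|\le \xi,\ \max_x|\hat{\zeta}_x(t)-\zeta_x|\le\xi\bigr\}$ for a slowly growing window $h(T)=\lceil T^{1/4}\rceil$. The forced-exploration clause of the sampling rule \eqref{eq:forced} guarantees $N_{a,x}(t)\ge \sqrt{N_x(t)}-K/2$, and since $N_x(t)$ concentrates around $\zeta_x t$, every context--arm cell is pulled $\gtrsim\sqrt{t}$ times for $t\ge h(T)$. Standard Bernoulli and multinomial deviation bounds, combined over the $a,x$ cells and the $\le T$ time indices in the window, then give $\Pbb_\nu(\Ecal_T^c)\le C_1 T\exp(-C_2\sqrt{h(T)})$ for constants $C_1,C_2$ depending on $\xi,\nu$ but not on $\delta$; this is summable in $T$, so $\sum_{T}\Pbb_\nu(\Ecal_T^c)=O(1)$ uniformly in $\delta$.

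Next, on $\Ecal_T$ the empirical model $\hat{\nu}(t)$ lies within $\xi$ of $\nu$ for every $t\in[h(T),T]$, and in particular the empirical best arm coincides with $a^\star(\nu)$. Writing $\hat{\vec{w}}(t)=(N_{a,x}(t)/N_x(t))_{a,x}$ for the empirical allocation, the computation preceding the stopping rule in Section~\ref{sec:stop} identifies the GLRT statistic as $Z(t)=t\,m(\hat{\vec{w}}(t),\hat{\nu}(t))$, with $m$ the value function of Section~\ref{sec:characteristic}. Combining the tracking guarantee of Lemma~\ref{lmm:tracking} with the joint continuity of $m$ in its two arguments (Lemmas~\ref{lem:conti_V} and \ref{lem:continui_w}) and the set-convergence of the optimal allocations (Lemma~\ref{lem:converge_w}), $\hat{\vec{w}}(t)$ is driven close to $\Phi(\nu)$, so that $m(\hat{\vec{w}}(t),\hat{\nu}(t))\ge \tfrac{1}{T^\star(\nu)}\bigl(1-g(\xi)\bigr)$ with $g(\xi)\to 0$ as $\xi\to 0$, giving $Z(t)\ge \tfrac{t}{T^\star(\nu)}(1-g(\xi))$. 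Since $\beta(t,\delta)=\log(r(t)/\delta)$ with $r(t)=O(t^\alpha)$, the stopping condition $Z(t)>\beta(t,\delta)$ is met once $\tfrac{t}{T^\star(\nu)}(1-g(\xi))>\log(r(t)/\delta)$; the first $t$ satisfying this (uniformly over the window) defines $T_0(\delta)$, and because $\log r(t)=O(\log t)$ is lower order, $T_0(\delta)$ inherits the rate $\frac{\alpha T^\star(\nu)}{1-g(\xi)}\log(1/\delta)(1+o(1))$ already established pathwise in Proposition~\ref{lem:AnalysisAS}. Assembling the three estimates, dividing by $\log(1/\delta)$, letting $\delta\to 0$ and then $\xi\to 0$ yields $\limsup_{\delta\to 0}\Ebb_\nu[\tau_\delta]/\log(1/\delta)\le \alpha T^\star(\nu)$.

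I expect the main obstacle to be the quantitative link in the third paragraph: converting the qualitative continuity and set-convergence statements (Lemmas~\ref{lem:conti_V}--\ref{lem:converge_w}), which are proved without explicit moduli, into the uniform estimate $Z(t)\ge \tfrac{t}{T^\star(\nu)}(1-g(\xi))$ holding simultaneously for all $t$ in $[h(T),T]$. The difficulty is compounded by the non-uniqueness of $\Phi(\nu)$: the tracked allocation may wander within the optimal set rather than converge to a single point, so the lower bound on $Z(t)$ must be routed through the value $m(\cdot,\nu)$ and its maximum $1/T^\star(\nu)$ rather than through a limiting allocation, and the joint concentration of $\hat{\mu}_{a,x}$ and $\hat{\zeta}_x$ must be controlled uniformly over the window despite the merely $\sqrt{t}$-fast growth of the per-cell sample sizes.
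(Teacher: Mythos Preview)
Your proposal is correct and essentially mirrors the paper's own proof: the same good event $\Ecal_T$ over the window $[T^{1/4},T]$, the same summable tail $\Pbb_\nu(\Ecal_T^c)\le BT\exp(-CT^{1/8})$ from forced exploration, the same identification $Z(t)=t\cdot m(\hat{\vec{w}}(t),\hat{\nu}(t))$, and the same threshold $T_0(\delta)$ computed via the inversion lemma from \citet{Garivier2016}. The only cosmetic difference is that the paper makes the ``quantitative link'' you flag explicit by defining $H^*_\epsilon(\nu)$ as the infimum of the value function over an $\epsilon$-neighbourhood of $\Phi(\nu)\times\{\bm\mu\}\times\{\bm\zeta\}$ and invoking the tracking lemma at the scale $\sqrt{T}$ (rather than $h(T)$) to place the empirical allocation in that neighbourhood; continuity of $m$ then gives $H^*_\epsilon(\nu)\to T^\star(\nu)^{-1}$ as $\epsilon\to0$, which is exactly your $1-g(\xi)$ in another guise.
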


As well as the case with a Bernoulli bandit model, we can also show that an upper bound on the expected number of the stopping times $\EXP[\tau_\delta]$ matches the lower bound almost surely for a case where the reward follows a distribution that belongs to a canonical one-parameter exponential family, and the parameters of the context distribution are known.

\begin{corollary}Suppose that the reward follows a distribution that belongs to a canonical one-parameter exponential family, and $(\zeta_x)_{x\in\mathcal{X}}$ is known. For each $\nu \in \Theta$, if sampling rule ensures that for all $a\in[K]$, for all $ x \in \set{X}$, $\min_{\vec{w}^* \in \Phi(\nu)}\Big|\lim_{t\to\infty}\frac{N_{a,x}(t)}{t} - \zeta_xw^*_{a, x} \Big| = 0$, and we follow the stopping rule defined in Section~\ref{sec:stopping_oneexp} with $\beta(t,\delta)=  6\ln\Big(\ln\Big(\frac{t}{2}\Big) + 1\Big) + 2|\mathcal{X}|\mathcal{C}_{\textnormal{exp}}\Big(\frac{\ln \frac{K-1}{\delta}}{2|\mathcal{X}|}\Big)$, then for all $\delta\in (0,1)$, $\mathbb{P}_{\nu}(\tau_\delta < +\infty)=1$ and $\mathbb{P}_{\nu}\Big(\limsup_{\delta \rightarrow 0}\frac{\tau_\delta}{\log(1/\delta)} \leq \alpha T^\star(\nu) \Big) = 1$. Besides, $\limsup_{\delta \rightarrow 0} \frac{\mathbb{E}_{\nu}[\tau_\delta]}{\log(1/\delta)} \leq {\alpha}{T^\star(\nu)}$. 
\end{corollary}

\section{Simulation Studies}\label{sec:experiments}
In this section, we investigate the behavior of the proposed algorithms. First, we examine the performance of $\alpha$-elimination using contextual information. As in Section~\ref{sec:cont_2arm}, we generate samples $\{(R_{t, 1}\ R_{t, 2}\ X_t)^\top\}^T_{t=1}$ from the multivariate distribution with the mean vector $(1 \ 0 \ 0)^\top$. We denote the variances of $R_{t, 1}$, $R_{t, 2}$, and $X_t$ as $\sigma^2_{1}$, $\sigma^2_2$, and $\sigma^2_{\set{X}}$. Let the correlation coefficient between $R_{t,1}$ and $X_t$ be $\rho_{1 \set{X}}$, and the correlation coefficient between $R_{t,2}$ and $X_t$ be $\rho_{2\set{X}}$. We fix $\sigma^2_2 = 1$, $\sigma^2_{\set{X}}=1$, and $\rho_{2\set{X}}=0.5$. We investigate the performance of the proposed method by varying the combination of the variance $\sigma^2_1$ and correlation coefficient $\rho_{1\set{X}}$. We choose $\sigma^2_1$ from $\{1,2\}$ and $\rho_{1\set{X}}$ from $\{-0.9, -0.5, 0, 0.5, 0.9\}$. For the case with $\sigma^2_1=1$, the $\alpha$-elimination without contextual information of \citet{Kaufman2016complexity} results in an allocation of $\alpha=0.5$ (uniform sampling). For the case with $\sigma^2_1=2$, it results in an allocation of $\alpha=\sqrt{2}/(\sqrt{2} + \sqrt{1})$. Conversely, the proposed $\alpha$-elimination with contextual information uses different allocations for each correlation coefficient. We conducted $1000$ trials with $\delta=0.05$ and display the realized stopping time (sample complexity) in Figure~\ref{fig:exp_results1} using box plots, where the right figure shows the results with $\sigma^2_1=1$ and the left shows the results with $\sigma^2_1=2$. In Figure~\ref{fig:exp_results1}, we compare the proposed algorithm with different $\rho_{1\set{X}}$ with the $\alpha$-elimination (without context). The results demonstrate that when using contextual information, the proposed $\alpha$-elimination can stop earlier than the original $\alpha$-elimination. We note the fact that the proposed algorithm can stop earlier, even though the allocation is also $0.5$ when $\rho_{1\set{X}}$ is $0$. Here, the stopping threshold $\beta$ used in the proposed algorithm is less than that used in the original algorithm, while maintaining the $\delta$-PAC property. Note that for all cases, the realized $\delta$ does not exceed $0.05$.

\begin{figure}[t]
  \begin{center}
    \includegraphics[width=150mm]{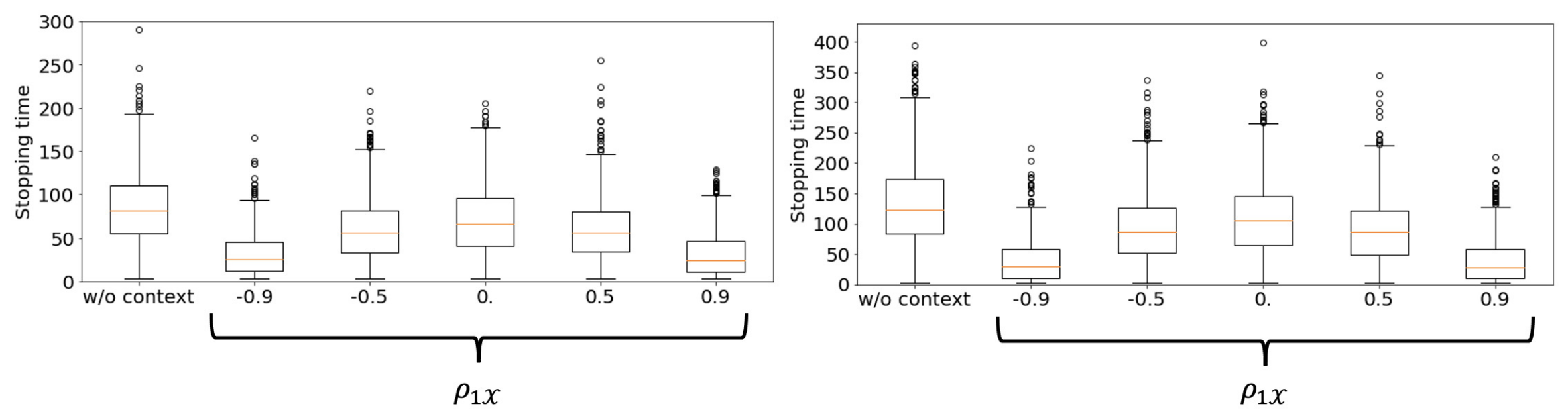}
  \end{center}
  \vspace{-0.3cm}
  \caption{Results of $\alpha$-elimination. The left figure displays the results with $\sigma^2_1=1$; the right figure displays the results with $\sigma^2_1=2$.}
  \label{fig:exp_results1}
  \vspace{-0.5cm}
\end{figure}

Next, we compare the performance of the proposed CTS algorithm to the TS algorithm for BAI without contextual information \citep{Garivier2016}. For a Bernoulli bandit model, we consider a sample scenario with marginalized mean rewards $\{\mu_1, \mu_2, \mu_3, \mu_4\} = \{0.3, 0.21, 0.2, 0.19\}$, which is the same as a scenario used in \citet{Garivier2016}. Suppose that there exist two contexts $X_t \in\{1,2\}$, where the conditional mean rewards are given as $\{\mu_{1,1}, \mu_{2,1}, \mu_{3,1}, \mu_{4,1}\} = \{0.5, 0.01, 0.4, 0.01\}$ and $\{\mu_{1,2}, \mu_{2,2}, \mu_{3,2}, \mu_{4,2}\} = \{0.1, 0.41, 0., 0.37\}$. The context $1$ and $2$ appear with probability $0.5$, respectively. Because $\beta(t,\delta)$ can be determined by us within the range suggested in Theorems~\ref{thm:deltaPAC}--\ref{thm:AsymptoticSC}, and because the role of $\beta(t,\delta)$ does not change considerably between the CTS and TS algorithms, we display the value of the GLRT statistic in Figure~\ref{fig:res_track}. The earlier this value becomes large, the smaller the sample complexity that can be achieved under a properly specified $\beta(t,\delta)$. This figure indicates that the CTS algorithm achieves a smaller sample complexity than TS, as suggested by the theoretical results. Conversely, the reason why the CTS algorithm indicates a smaller GLRT statistic compared with TS in the early rounds is likely because the number of parameters to be estimated is proportional to the number of contexts; thus it requires more time to converge in finite samples. In Appendix~\ref{appdx:exp}, we present more details and additional results under different settings.

\begin{figure}[t]
 \begin{center}
    \includegraphics[height=60mm]{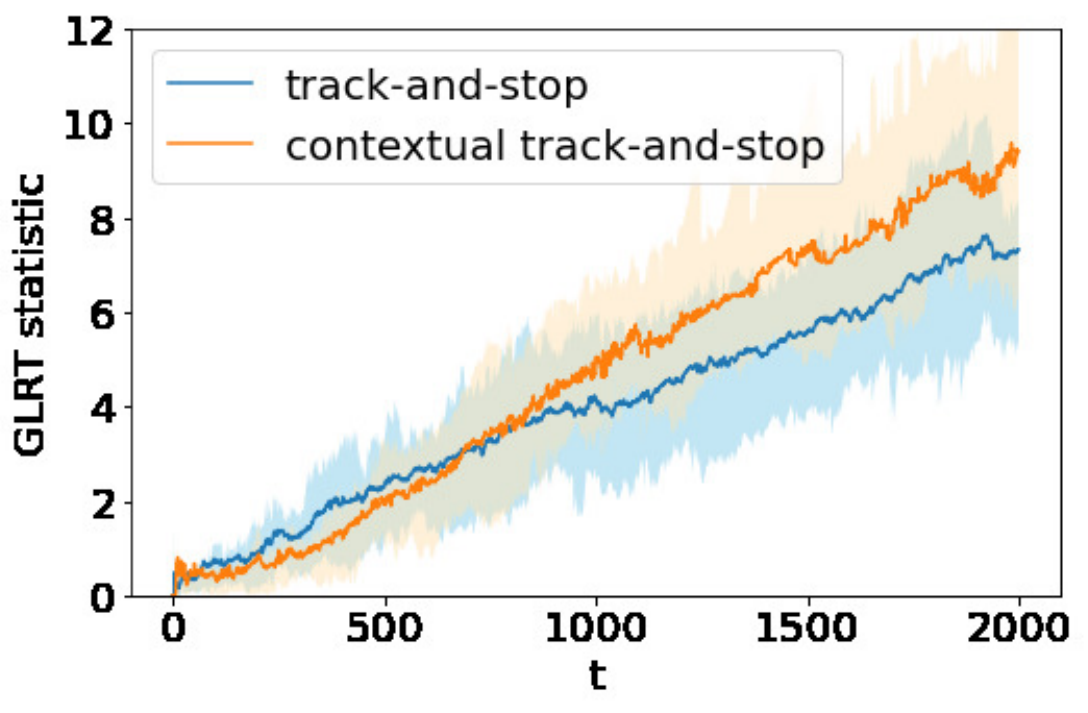}
  \end{center}
  \vspace{-0.5cm}
  \caption{Graph illustrating the maximum GLRT statistic $\max_{a\in[K]}\min_{b\in[K]\backslash\{a\}}Z_{a,b}(t)$. The solid line represents the averaged value over $20$ trials; the light-colored area indicates the values between the first and third quartiles.
  }
  \label{fig:res_track}
\end{figure}

\section{Conclusion}
This paper proposed contextual BAI, where contextual information can be used to identify marginalized mean rewards. We noted that even contextual information that is not immediately related to the parameter that we wish to identify could help us solve the task more efficiently. We proposed the CTS algorithm as an algorithm when the rewards follow Bernoulli distributions, and confirmed that it performs better theoretically and experimentally when contextual information is provided. We also found that when the rewards and context follow a multivariate normal distribution in the two-armed bandit problem, we could improve the efficiency of BAI without changing the conventional algorithm. These properties have not been discussed to date. We consider that these results are related to semiparametric inference and the James--Stein shrinkage estimator; however, it is a future task to clarify their relationship

\section*{Acknowledgement}
The authors thank Alexandre Proutière for detailed discussions.

\vskip 0.2in
\bibliography{arXiv.bbl}
\bibliographystyle{asa}

\clearpage

\appendix

\section{Notations, Terms, and Abbreviations}\label{sec:notation}
In this section, we summarize the notations used in this paper. 
\begin{table}[h]
    \centering
    \caption{Summary of notations}
    \label{tbl:sum_not}
    \small
    \begin{tabular}{l|l}
    $X_t,\ A_t,\, R_t$ & Context, action, and reward observed in round $t$ \\ 
    $[K],\ \set{X}$ & Sets of actions and contexts\\ 
    $R_{t,a}$ & Potential reward of arm $a$ \\
    $\zeta$ & Distribution of $X_t$ \\
    $\vec{p} = (p_{1, x}, p_{2, x}, \ldots, p_{K, x})$ & Reward distributions of the potential outcome given $x \in \mathcal{X}$.\\
    $\vec{\mu} = (\mu_{1, x}, \mu_{2, x}, \ldots, \mu_{K, x})$ & Conditional mean rewards given $x \in \mathcal{X}$.\\
    $\mu_a = \mathbb{E}_{X \sim \zeta}[\mu_{a, X}] = \mathbb{E}_{X \sim \zeta}[\mathbb{E}_{\set{V}}[R_{t, a} | X]] $ & \multirow{2}{*}{Marginalized mean reward of arm $a$}.\\
    $\ \ \ \ \ = \mathbb{E}_{\set{V}}[R_{t, a}]$ & {} \\
    $\mathcal{V} = (\vec{p}, \zeta)$ & Bandit problem.\\
    $\nu = ((\mu_{a,x}), (\zeta_x))$ & Bernoulli  bandit problem with finite context.\\
    $\Omega$ (resp. $\Theta$) & Class of $\mathcal{V}$ (resp. $\nu$).\\
    $a^* = a^*(\set{V}) = \argmax_a \mu_a$ & Best arm with the highest marginalized mean reward.\\
    $\mathcal{F}_t = \sigma(X_1, A_1, R_1, \ldots, X_t, A_t, R_t, X_{t+1})$ & Sigma-algebras with the observations until $t$ and  $X_{t+1}$. \\
    $\mathcal{G}_t = \sigma(X_1, A_1, R_1, \ldots, X_t, A_t, R_t)$ & Sigma-algebras with all observations up to $t$.\\
    $\tau_{\delta}$ & Stopping time under a fixed confidence $\delta > 0$.\\
    $\hat{a}_{\tau_\delta}$ & Recommended arm.\\
    $\Alt(\mathcal{V}) := \{(\vec{q}, \zeta) \in \Omega: a^*((\vec{q}, \zeta)) \neq a^*((\vec{p}, \zeta))\}$ & Set of alternative problems.\\
    $N_x(t) = \sum_{s=1}^t \indicator\{ X_s = x\}$ & The number of times we observe context $x$.\\
    $N_{a, x}(t) = \sum_{s=1}^t \indicator\{X_s = x, A_s = a\}$ & The number of times we choose arm $a$ given context $x$.\\
    $\KL(p_{a, x}, q_{a, x})$ & KL divergence from $p_{a, x}$ to $ q_{a, x}$\\
    \multirow{2}{*}{$\kl(\mu, \nu)$} & KL divergence of the canonical\\
    {} & {one-parameter exponential family.} \\
    $d(\mu, \nu)$ & \multirow{2}{*}{KL divergence of Bernoulli distributions.}\\
    $=\mu\log (\mu/\nu) + (1-\mu)\log((1-\mu)/(1-\nu))$ & {} \\
    $\hat{\mu}_{a,x},\ \hat{\zeta}_x$ & Estimators of $\mu_{a,x}$ and $\zeta_x$ in round $t$.\\
    $w_{a,x}$ & Allocation for arm $a$ given context $x$.\\
    $\set{W}$ & Set of allocation rule.\\
    \multirow{2}{*}{$p_{\mu_a}\big((\underline{R}_{a,x}(t))_{x \in \mathcal{X}}, \underline{X}(t)\big)$} & 
    Likelihood of parameters $\mu_a = (\mu_{a,x})_{a\in [K],x \in \mathcal{X}}$ given 
    \\
    {} & the observations $(\underline{R}_{a,x}(t))_{x \in \mathcal{X}}$ and $\underline{X}(t)$.
    \\
    $Z_{a,b}(t)$ & GLRT statistic.\\
    $\beta(t,\delta)$ & Threshold for stopping rule.
    \end{tabular}
    \label{tab:notation}
\end{table}

\section{Proof of Lemma~\ref{lem:kauf_lemma_extnd_finite}}
\label{appdx:kauf_lemma_extnd_finite}
For each problem $\nu = ((\mu_{a,x}), (\zeta_x))$, for each $a \in [K]$, $ x \in \set{X}$, let us denote by $f_{a, x}^\nu$  be the density (w.r.t. the Lebesgue measure) of the reward with the action-context pair $(a, x)$.
Let us define a log-likelihood ratio between the observation under the model $\nu = ((\mu_{a,x}), (\zeta_x))$ to the model $\nu' = ((\lambda_{a,x}), (\zeta_x))$ as
\begin{align*}
    L_\tau = \sum_{t=1}^\tau \sum_{x \in \set{X}} \sum_{a \in [K]} \indicator\{X_t = x, A_t = a\} \log \left(\frac{f_{a, x}^\nu(R_t)}{f_{a, x}^{\nu'}(R_t)}\right)
\end{align*}
We have
\begin{align*}
    \EXP_{\nu}[L_\tau]
    & = \EXP_{\nu}\left[ \sum_{t=1}^\tau \sum_{a \in [K]} \sum_{x \in \set{X}}  \indicator\{X_t = x, A_t = a\} \log \left(\frac{f_{a, x}^\nu(R_t)}{f_{a, x}^{\nu'}(R_t)}\right)\right]
    \\
    & \stackrel{(a)}{=} \EXP_{\nu}\left[ \sum_{x \in \set{X}} \sum_{a \in [K]} \sum_{k=1}^{N_{a,x}(\tau)} \log \left(\frac{f_{a, x}^\nu(Y_k^{(x,a)})}{f_{a, x}^{\nu'}(Y_k^{(x,a)})}\right) \right]
    \\
    & = \sum_{x \in \set{X}} \sum_{a \in [K]} \EXP_{\nu} [N_{a, x}(\tau)] \kl(\mu_{a, x}, \lambda_{a, x}),
\end{align*}
where for $(a)$, we introduced random variables: $Y_{k}^{(x,a)}$ denotes $k$-th time the reward with the context $x$ and the action $a$ is observed and for the last equality, we used Wald's lemma for each $(x, a)$ pair. From the data-processing inequality applied to the change-of-measure argument \citet{garivier2019explore}, we have, for any $\set{E} \in \set{G}_\tau$, 
\begin{align*}
     \sum_{x \in \set{X}} \sum_{a \in [K]} \EXP_{\nu} [N_{a, x}(\tau)] \kl(\mu_{a, x}, \lambda_{a, x}) \ge d(\Pr_\nu(\set{E}), \Pr_{\nu'}(\set{E})).
\end{align*}
This concludes the proof of Lemma~\ref{lem:kauf_lemma_extnd_finite}.

\section{Proof of Theorem~\ref{thm:lower_bound}}
\label{appdx:proof_lower_bound}
\paragraph{Proof.}
From Lemma~\ref{lem:kauf_lemma_extnd_finite} with $\set{E} =\{\hat{a}_\tau = {a}^*(\nu)\}$, for each $\nu \in \Theta$ and $\nu' \in \Alt(\nu)$, we have
\begin{align*}
\sum_{x \in \set{X}}  \sum_{a \in [K]} \EXP_{\nu} [N_{a, x}(\tau)] \kl(\mu_{a, x}, \lambda_{a, x}) & \ge  d(\Pr_\nu(\set{E}), \Pr_{\nu'}(\set{E}))
 \ge  \kl(\delta, 1- \delta),
\end{align*}
where for the last inequality, we used the definition of the $\delta$-PAC algorithm and monotonicity of the KL divergence. Let $N_x(\tau) = \sum_{t=1}^\tau \indicator\{ X_t = x\}$.
For each $\nu \in \Theta$,
\begin{align*}
d(\delta, 1- \delta) & \le \inf_{((\lambda_{a,x}), \zeta) \in \Alt(\nu)} \sum_{x \in \set{X}} \sum_{a \in [K]} \EXP_{\nu} [N_{a, x}(\tau)] \kl(\mu_{a, x}, \lambda_{a, x})\\
& = \inf_{((\lambda_{a,x}), \zeta) \in \Alt(\nu)} \sum_{x \in \set{X}}  \EXP_{\nu} [N_{x}(\tau)] \sum_{a \in [K]} \frac{ \EXP_{\nu} [N_{a, x}(\tau)]}{ \EXP_{\nu} [N_{x}(\tau)]} \kl(\mu_{a, x}, \lambda_{a, x})\\
& = \inf_{((\lambda_{a,x}), \zeta) \in \Alt(\nu)} \EXP_{\nu}[\tau_\delta] \sum_{x \in \set{X}} \frac{\EXP_{\nu} [N_{x}(\tau)]}{\EXP_{\nu}[\tau_\delta]} \sum_{a \in [K]} \frac{ \EXP_{\nu} [N_{a, x}(\tau)]}{ \EXP_{\nu} [N_{x}(\tau)]} \kl(\mu_{a, x}, \lambda_{a, x})\\
& \stackrel{(a)}{=} \inf_{((\lambda_{a,x}), \zeta) \in \Alt(\nu)} \EXP_{\nu}[\tau_\delta] \sum_{x \in \set{X}}  \frac{\EXP_{\nu}[\tau_\delta]\zeta_x}{\EXP_{\nu}[\tau_\delta]} \sum_{a \in [K]} \frac{ \EXP_{\nu} [N_{a, x}(\tau)]}{ \EXP_{\nu} [N_{x}(\tau)]}  \kl(\mu_{a, x}, \lambda_{a, x})\\
& = \EXP_{\nu}[\tau_\delta]  \inf_{((\lambda_{a,x}), \zeta) \in \Alt(\nu)} \sum_{x \in \set{X}}  \zeta_x \sum_{a \in [K]} \frac{ \EXP_{\nu} [N_{a, x}(\tau)]}{ \EXP_{\nu} [N_{x}(\tau)]}  \kl(\mu_{a, x}, \lambda_{a, x})\\
& \le \EXP_{\nu}[\tau_\delta]  \sup_{w \in \set{W}} \inf_{((\lambda_{a,x}), \zeta) \in \Alt(\nu)} \sum_{x \in \set{X}}  \zeta_x \sum_{a \in [K]} w_{a, x} \kl(\mu_{a, x}, \lambda_{a, x}),
\end{align*}
where for $(a)$, we used Wald's lemma for each $x$. This concludes the proof.

\section{Proof of Theorem~\ref{thm:lower_bound_continuous}}
\label{appdx:proof_lower_bound_continuous}
We show Theorem~\ref{thm:lower_bound_continuous}. Let $\mathcal{B}(\mathbb{R})$ be a Borel $\sigma$-algebra on $\mathbb{R}$. Let us introduce two random counting measures on $\mathbb{R}$: (i) for each $ A \in \mathcal{B}(\mathbb{R})$, $\Xi (A) $ counts the number of times contexts has arrived in $A$, (ii) $\Upsilon_a (A) $ counts the number of times the algorithm selected action $a$ under the context is in $A$. 

The intensity measure is a characteristic analogous to the mean of a real-valued random variable \citep{chiu2013stochastic}. Let us denote the intensity measures of $\Xi $ and $\Upsilon_a $ by $\gamma$ and $\kappa_a$, respectively; that is, $\gamma(A) = \mathbb{E}\left[\Xi(A)\right]$ and $\kappa_a(A) = \mathbb{E}\left[\Upsilon_a(A)\right]$ for each $ A \in \mathcal{B}(\mathbb{R})$. Suppose that $\gamma $ and $\kappa_a$ are absolutely continuous with respect to $\zeta $  \citep{kallenberg2017random}. Furthermore, $\kappa_a$ is absolutely continuous with respect to $\gamma $. Let $\frac{d\gamma}{d x}(x)$ and $\frac{\mathrm{d}\kappa_a}{\mathrm{d}x}(x)$ be densities of $\gamma$ and $\kappa_a $ with respect to the Lebesgue measure. 

Then, we extend our Lemma~\ref{lem:kauf_lemma_extnd_finite} to the case of continuous contexts.

\begin{lemma}
\label{lem:kauf_lemma_extnd_infinite}
 Take $\mathcal{V} = (\vec{p}, \zeta), \mathcal{M} = (\vec{q}, \zeta) \in \Omega$. 
For any almost-surely finite stopping time $\tau$ with respect to $(\mathcal{G}_{t})_{t \ge 1}$,
\begin{align*}
&\sum^K_{a=1} \int_{\mathbb{R}} \frac{\mathrm{d}\kappa_a}{\mathrm{d}x}(x) \KL(p_{a, x}, q_{a, x}) \mathrm{d} x \geq \sup_{\Ecal\in\mathcal{G}_\tau}d(\Pbb_{\mathcal{V}}(\Ecal), \Pbb_{\mathcal{M}}(\Ecal)),
\end{align*}
where $\mathbb{E}_{\set{V}}$ (resp. $(\mathbb{E}_{\set{M}})$) and $\mathbb{P}_{\set{V}}$ (resp. $\mathbb{P}_{\set{M}}$ ) are the expectation under the model $\set{V}$ (resp. $\set{M}$) and the probability under the model $\set{V}$ (resp. $\set{M}$), respectively. 
\end{lemma}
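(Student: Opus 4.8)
The plan is to follow the three-step template of the finite-context proof of Lemma~\ref{lem:kauf_lemma_extnd_finite}, with the single structural change that the sum over $\set{X}$ becomes an integral against the intensity measure $\omega$ of the selection process $\Upsilon_a$. First I would write the log-likelihood ratio of the observed trajectory under $\set{V}$ relative to $\set{M}$. Because the two models share the same context law $\zeta$, every factor arising from the arrival of $X_t$ cancels, leaving only the reward contributions:
\begin{align*}
L_\tau = \sum_{t=1}^{\tau} \sum_{a=1}^{K} \indicator\{A_t = a\} \log \frac{p_{a, X_t}}{q_{a, X_t}}(R_{t,a}).
\end{align*}
The assumption that $q_{a,x} \ll p_{a,x}$ whenever $p_{a,x} \neq q_{a,x}$, together with $0 < \KL < +\infty$ on $\Omega$, makes each summand integrable.

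Next I would take $\EXP_\set{V}[L_\tau]$ and convert it into the claimed integral. Conditioning on $\{X_t = x,\, A_t = a\}$, the conditional expectation of $\log \frac{p_{a,x}}{q_{a,x}}(R_{t,a})$ is exactly $\KL(p_{a,x}, q_{a,x})$, so the quantity to evaluate is the accumulated KL mass deposited by the selection events up to $\tau$. Using the counting measures $\Xi, \Upsilon_a$ introduced before the statement and the Radon--Nikodym identity $\EXP_\set{V}[N_{a,x}(\tau)] = \frac{\mathrm{d}\omega}{\mathrm{d}\gamma}\frac{\mathrm{d}\gamma}{\mathrm{d}\zeta}(x)\zeta(x)$, which exhibits $\EXP_\set{V}[N_{a,x}(\tau)]$ as the Lebesgue density of $\omega$, Campbell's formula for the compensated point process — the continuous-context analogue of the Wald step used in the finite case — yields
\begin{align*}
\EXP_\set{V}[L_\tau] = \sum_{a=1}^{K} \int_{\mathbb{R}} \KL(p_{a,x}, q_{a,x})\, \omega(\mathrm{d}x) = \sum_{a=1}^{K} \int_{\mathbb{R}} \EXP_\set{V}[N_{a,x}(\tau)]\, \KL(p_{a,x}, q_{a,x})\, \mathrm{d}x,
\end{align*}
which is precisely the left-hand side of the lemma.

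Finally I would lower-bound $\EXP_\set{V}[L_\tau]$. It equals the KL divergence between the laws of the trajectory under $\set{V}$ and $\set{M}$ restricted to $\mathcal{G}_\tau$; applying the data-processing (contraction) inequality to the binary statistic $\indicator_\Ecal$ for any $\Ecal \in \mathcal{G}_\tau$ gives $\EXP_\set{V}[L_\tau] \ge \kl(\Pbb_\set{V}(\Ecal), \Pbb_\set{M}(\Ecal))$, and taking the supremum over $\Ecal \in \mathcal{G}_\tau$ completes the argument, exactly as in Lemma~\ref{lem:kauf_lemma_extnd_finite}.

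I expect the main obstacle to be the second step: rigorously justifying the Campbell/Wald identity at the random time $\tau$ in the continuous-context setting. Unlike the finite case, $N_{a,x}(\tau)$ cannot be read as a literal count (a fixed $x \in \mathbb{R}$ is visited with probability zero), so it must be reinterpreted through the intensity-measure density, and one must verify that the optional-stopping form of Campbell's identity applies — that the compensated reward-likelihood process is a martingale, that $\tau$ is almost surely finite with the attendant integrability, and that Fubini permits interchanging the $t$-summation, the $a$-summation, and the $x$-integration. These are exactly the measure-theoretic facts that the preamble to the lemma sets up via \citet{kallenberg2017random}.
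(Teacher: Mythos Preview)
Your proposal is correct and follows essentially the same three-step template as the paper's proof: write the trajectory log-likelihood ratio (where the shared context law $\zeta$ cancels), compute its expectation under $\set{V}$ to obtain the KL integral, and finish with the data-processing inequality. The only cosmetic difference is that where the paper loosely says it ``applied Wald's lemma for each $x\in\mathbb{R}$'' in the second step, you name the correct continuous-context tool---Campbell's formula for the intensity measure $\omega$ of $\Upsilon_a$---and flag the optional-stopping and Fubini verifications that make it rigorous; this is exactly the measure-theoretic content the paper's preamble via \citet{kallenberg2017random} is meant to supply.
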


In the proof, we use Campbell's theorem.
\begin{proposition}[Campbell's theorem from Theorem~4.1 in \citet{chiu2013stochastic}]
\label{prp:campbell}
For any nonnegative measurable function $f(x)$ and $ A \in \mathcal{B}(\mathbb{R})$,
\begin{align*}
    \mathbb{E}\left[\sum_{x\in \Upsilon_a(A)}f(x)\right] = \mathbb{E}\left[\int_{A} f(x) \Upsilon_a(\mathrm{d}x)\right] = \int_{A}f(x)\kappa_a(\mathrm{d}x).
\end{align*}
\end{proposition}
We show the proof of Lemma~\ref{lem:kauf_lemma_extnd_infinite} as follows.
\begin{proof}
For each $a \in [K]$, $ x \in \set{X}$, let us denote by $f_{a, x}$ and $f'_{a,x}$ the probability density functions of $ p_{a,x}$ and $q_{a,x}$ with respect to the Lebesgue measure. We have that
\begin{align*}
\EXP_{\set V}\left[\log \left(\frac{f_{a, x}(R_{t, a})}{f_{a, x}'(R_{t, a})}\right) \middle| X_t = x\right] = \KL(p_{a,x}, q_{a,x}).
\end{align*}
Let us define a log-likelihood ratio from the observation under the model $\set{V} = ((p_{a,x}), \zeta)$ to the model $\set{M}=((q_{a,x}), \zeta)$
\begin{align*}
L_\tau = \sum_{t=1}^\tau \sum_{a \in [K]} \indicator\{ A_t = a\} \log \left(\frac{f_{a, X_t}(R_{t,a})}{f_{a, X_t}'(R_{t,a})}\right).
\end{align*}

Let us define $\vec{x}_\infty = (x_1, x_2, \ldots)$, $ \vec{X}_\infty = (X_1, X_2, \ldots)$, and $ \set{X}'(\vec{x}_\infty) = \cup_{t=1}^\infty \{x_t\} $.
We have 
\begin{align*}
    \EXP_\set{V}[L_\tau] & = \EXP_\set{V}[\EXP_\set{V}[L_\tau | \vec{X}_\infty]]
    \\
    & = \int_{\vec{x}_\infty \in \mathbb{R}^\infty}\EXP_\set{V}[L_\tau | \vec{X}_\infty = \vec{x}_\infty ]  \prod_{t=1}^\infty \zeta(x_t) \mathrm{d}\bm{x}_\infty 
    \\
    & = \int_{\vec{x}_\infty \in \mathbb{R}^\infty}\EXP_\set{V}\left[\sum_{t=1}^\tau \sum_{a \in [K]} \indicator\{A_t = a\} \log \frac{f_{a, X_t}(R_{t,a})}{f'_{a, X_t}(R_{t,a})}  \middle| \vec{X}_\infty = \vec{x}_\infty \right]  \prod_{t=1}^\infty \zeta(x_t) \mathrm{d}\bm{x}_\infty
    \\
        & = \int_{\vec{x}_\infty \in \mathbb{R}^\infty}\EXP_\set{V}\left[\sum_{t=1}^\tau \sum_{a \in [K]} \indicator\{A_t = a, X_t = x_t\} \log \frac{f_{a, x_t}(R_{t,a})}{f'_{a, x_t}(R_{t,a})}  \middle| \vec{X}_\infty = \vec{x}_\infty \right]  \prod_{t=1}^\infty \zeta(x_t) \mathrm{d}\bm{x}_\infty
    \\
    & \stackrel{(a)}{=} \sum_{a \in [K]}  \int_{\vec{x}_\infty \in \mathbb{R}^\infty}\EXP_\set{V}\left[\sum_{x \in \set{X}'(\vec{x}_\infty)}  \sum_{k=1}^{N_{a, x}(\tau)}  \log \frac{f_{a, x}(Y_k^{(a,x)})}{f'_{a, x}(Y_k^{(a,x)})}  \middle| \vec{X}_\infty = \vec{x}_\infty \right]  \prod_{t=1}^\infty \zeta(x_t) \mathrm{d}\bm{x}_\infty
    \\
    & = \sum_{a \in [K]}  \int_{\vec{x}_\infty \in \mathbb{R}^\infty} \sum_{x \in \set{X}'(\vec{x}_\infty)}  \EXP_\set{V}\left[ \sum_{k=1}^{N_{a, x}(\tau)}  \log \frac{f_{a, x}(Y_k^{(a,x)})}{f'_{a, x}(Y_k^{(a,x)})}  \middle| \vec{X}_\infty = \vec{x}_\infty \right]  \prod_{t=1}^\infty \zeta(x_t) \mathrm{d}\bm{x}_\infty
    \\
    &  = \sum_{a \in [K]}  \int_{\vec{x}_\infty \in \mathbb{R}^\infty} \sum_{x \in \set{X}'(\vec{x}_\infty)}  \EXP_\set{V}\left[N_{a, x}(\tau) | \vec{X}_\infty = \vec{x}_\infty \right] \EXP_\set{V}\left[\log \frac{f_{a, x}(Y_1^{(a,x)})}{f'_{a, x}(Y_1^{(a,x)})}  \middle| \vec{X}_\infty = \vec{x}_\infty \right]  \prod_{t=1}^\infty \zeta(x_t) \mathrm{d}\bm{x}_\infty
    \\
    & = \sum_{a \in [K]}  \int_{\vec{x}_\infty \in \mathbb{R}^\infty} \sum_{x \in \set{X}'(\vec{x}_\infty)}  \EXP_\set{V}\left[N_{a, x}(\tau) | \vec{X}_\infty = \vec{x}_\infty \right] \KL(p_{a,x}, q_{a,x})  \prod_{t=1}^\infty \zeta(x_t) \mathrm{d}\bm{x}_\infty
    \\
    & = \sum_{a \in [K]}  \int_{\vec{x}_\infty \in \mathbb{R}^\infty}\EXP_\set{V}\left[ \sum_{x \in \set{X}'(\vec{x}_\infty)}  N_{a, x}(\tau)  \KL(p_{a,x}, q_{a,x}) | \vec{X}_\infty = \vec{x}_\infty \right] \prod_{t=1}^\infty \zeta(x_t) \mathrm{d}\bm{x}_\infty
    \\
    &  \stackrel{(b)}{=} \sum_{a \in [K]} \EXP_{\set V} \left[ \int_{\mathbb R} \KL(p_{a,x}, q_{a,x}) \Upsilon_a (\mathrm{d}x)\right]
    \\
    & \stackrel{(c)}{=}  \sum_{a \in [K]} \int_{\mathbb R} \KL(p_{a,x}, q_{a,x}) \kappa_a (\mathrm{d}x) 
    \\
    & =  \sum_{a \in [K]} \int_{x \in \mathbb{R}} \frac{\mathrm{d} \kappa_a}{\mathrm{d}x} (x) \KL(p_{a,x}, q_{a,x}) \mathrm{d}x.
\end{align*}
For $(a)$, we introduced random variable $Y_{k}^{(a,x)}$, denoting $k$-th time the reward with the context $x$ and the action $a$ is observed. For $(b)$, the computation is as follows:
\begin{align*}
    &\sum_{a \in [K]}  \int_{\vec{x}_\infty \in \mathbb{R}^\infty}\EXP_\set{V}\left[ \sum_{x \in \set{X}'(\vec{x}_\infty)}  N_{a, x}(\tau)  \KL(p_{a,x}, q_{a,x}) | \vec{X}_\infty = \vec{x}_\infty \right] \prod_{t=1}^\infty \zeta(x_t) \mathrm{d}\bm{x}_\infty
    \\
    &=\sum_{a \in [K]}  \EXP_\set{V}\left[ \sum_{x \in \set{X}'(\vec{X}_\infty)}  N_{a, x}(\tau)  \KL(p_{a,x}, q_{a,x}) \right]
    \\
    & =\sum_{a \in [K]}  \EXP_\set{V}\left[ \sum_{x \in (X_1, \ldots, X_\tau)}  N_{a, x}(\tau)  \KL(p_{a,x}, q_{a,x}) \right]
    \\
     &=\sum_{a \in [K]}  \EXP_\set{V}\left[  \int_{\mathbb{R}} \KL(p_{a,x}, q_{a,x}) \Upsilon_a (\mathrm{d}x)\right],
\end{align*}
where the last equality follows from the definition of $ \Upsilon_a$.
For $(c)$, we used Campbell’s theorem (Proposition~\ref{prp:campbell}).

From the data-processing inequality applied to the change-of-measure argument \citet{garivier2019explore}, we have, for any $\set{E} \in \set{G}_\tau$, 
\begin{align*}
\sum_{a \in [K]} \int_{\mathbb{R}} \frac{\mathrm{d} \kappa_a}{\mathrm{d} x}(x)\KL(p_{a,x}, q_{a,x})\mathrm{d}x \ge d(\Pr_{\set{V}}(\set{E}), \Pr_{\set{M}}(\set{E})).
\end{align*}
This concludes the proof of Lemma~\ref{lem:kauf_lemma_extnd_infinite}.
\end{proof}

Then, we show the proof of Theorem~\ref{thm:lower_bound_continuous}.
\begin{proof}
From Lemma~\ref{lem:kauf_lemma_extnd_infinite} with $\set{E} =\{\hat{a}_\tau = {a}^*(\set{V})\}$, for each $\mathcal{V} \in \Omega$ and $\set{M} \in \Alt(\set{V})$, we have
\begin{align*}
\sum_{a \in [K]} \int_{\mathbb{R}} \frac{\mathrm{d} \kappa_a}{\mathrm{d} x}(x)\KL(p_{a,x}, q_{a,x})\mathrm{d}x & \ge \kl(\Pr_\set{V}(\set{E}), \Pr_\set{M}(\set{E}))
 \ge d(\delta, 1- \delta),
\end{align*}
where for the last inequality, we used the definition of the $\delta$-PAC algorithm and monotonicity of the KL divergence.

For each $\set{V} \in \Omega$, we have
\begin{align*}
    d(\delta, 1-\delta) & \le \inf_{(\vec{p}, \zeta) \in \Alt(\set{V})}\sum_{a \in [K]} \int_{\mathbb{R}} \frac{\mathrm{d} \kappa_a}{\mathrm{d} x}(x)\KL(p_{a,x}, q_{a,x})\mathrm{d}x
    \\
    & = \inf_{(\vec{p}, \zeta) \in \Alt(\set{V})} \sum_{a \in [K]} \int_{\mathbb{R}} \frac{\mathrm{d}\kappa_a}{\mathrm{d} \gamma} \frac{\mathrm{d} \gamma}{\mathrm{d} \zeta}(x)\zeta(x) \KL(p_{a,x}, q_{a,x})\mathrm{d}x
    \\
    & {=} \inf_{(\vec{p}, \zeta) \in \Alt(\set{V})} \int_{\mathbb{R}} \underbrace{\frac{\mathrm{d} \gamma}{\mathrm{d} \zeta}(x)\zeta(x)}_{\EXP_{\set{V}}[\tau_\delta] \zeta(x)}\sum_{a \in [K]}  {\frac{\mathrm{d} \kappa_a}{\mathrm{d} \gamma} (x)}\KL(p_{a,x}, q_{a,x})\mathrm{d}x
    \\
    & \stackrel{(a)}{=} \EXP_{\set{V}}[\tau_\delta] \inf_{(\vec{p}, \zeta)\in \Alt(\set{V})} \int_{\mathbb{R}} \sum_{a \in [K]}  {\frac{\mathrm{d} \kappa_a}{\mathrm{d} \gamma} (x)}\KL(p_{a,x}, q_{a,x}) \zeta(x)\mathrm{d}x
    \\
      & \le \EXP_{\set{V}}[\tau_\delta] \sup_{w \in \set{W}} \inf_{(\vec{p}, \zeta) \in \Alt(\set{V})} \int_{\mathbb{R}} \sum_{a \in [K]} {w_{a,x}} \KL(p_{a,x}, q_{a,x}) \zeta(x)\mathrm{d}x,
\end{align*}
where for $(a)$ we used the equivalence:
\begin{align*}
      \EXP_{\set{V}}[\tau_\delta] & =  \int_{\mathbb{R}} \frac{\mathrm{d} \gamma}{\mathrm{d}x} \mathrm{d}x
      \\
      & = \int_{\mathbb{R}} \frac{\mathrm{d} \gamma}{\mathrm{d} \zeta} \zeta(x) \mathrm{d}x
      \\
      & \stackrel{(b)}{=} \frac{\mathrm{d} \gamma}{\mathrm{d} \zeta} \int_{\mathbb{R}} \zeta(x) \mathrm{d}x
      \\
      & = \frac{\mathrm{d} \gamma}{\mathrm{d} \zeta},
\end{align*}
where for $(b)$, we used the fact that $\frac{\mathrm{d} \gamma}{\mathrm{d} \zeta}$ is a constant does not depend on $x$.

\end{proof}

\section{Proof of Results in Section~\ref{sec:cont_2arm}  and  \texorpdfstring{$\alpha$}{TEXT}-Elimination Algorithm with Contextual Information}

\subsection{\texorpdfstring{$\alpha$}{TEXT}-Elimination Algorithm with Contextual Information}
\label{appdx:alpha_elim}
We use an algorithm that is almost identical to the $\alpha$-elimination of \citet{Kaufman2016complexity}. The only difference between the proposed $\alpha$-elimination and that of \citet{Kaufman2016complexity} is that we construct an estimator of the marginalized mean reward in the following form:
\begin{align*}
&\hat{\mu}_1(t) = \frac{1}{\sum^t_{s=1} \mathbbm{1}[A_s = 1]}\sum^t_{s=1}\left( R_{s, 1} - \frac{\rho_{\set{X}1} \sigma_1}{\sigma_{\set{X}}}(X_s - \mu_{\set{X}})\right)\mathbbm{1}[A_s = 1],\\
&\hat{\mu}_2(t) = \frac{1}{\sum^t_{s=1} \mathbbm{1}[A_s = 2]}\sum^t_{s=1}\left( R_{s, 2} - \frac{\rho_{\set{X}2} \sigma_2}{\sigma_{\set{X}}}(X_s - \mu_{\set{X}})\right)\mathbbm{1}[A_s = 2].
\end{align*}
Here, we used that $\mu_{a} = \mu_{a, x} - \frac{\rho_{\set{X}a} \sigma_a}{\sigma_{\set{X}}}(x - \mu_{\set{X}})$. 
This estimator is based on the form of the conditional distribution of $R_{t,a}$. We replace $\hat{\mu}_a(t)$ in the original $\alpha$-elimination with these estimators.

\subsection{Proof of Theorem~\ref{thm:cont_2arm_lowerbound}}
\label{appdx:thm:cont_2arm_lowerbound}

Recall that the KL divergence from $\set{N}(\mu_1, \sigma^2)$ to $\set{N}(\mu_2, \sigma^2)$ is given as
\begin{align*}
    \KL\left(\set{N}(\mu_1, \sigma^2),  \set{N}(\mu_2, \sigma^2)\right) = \frac{(\mu_1 - \mu_2)^2}{2 \sigma^2}.
\end{align*}If we ignore sets of measure zero, we have
\begin{align*}
    T^\star(\set{V})^{-1} & = \sup_{\bm{w} \in\set{W}} \inf_{ (\vec{q}, \zeta)\in\Alt(\mathcal{V})}\sum^2_{a=1} \int_{\mathbb{R}}w_{a, x}\KL(p_{a, x}, q_{a, x}) \zeta(x) \mathrm{d}x
    \\
    & = \sup_{\bm{w} \in\set{W}} \inf_{ (\vec{q}, \zeta)\in\Alt(\mathcal{V})}\sum^2_{a=1} \int_{\mathbb{R}}w_{a, x} \frac{\left(\mu_a + \frac{\sigma_{\set{X}a}}{\sigma_{\set{X}}^2}(x - \mu_{\set{X}}) - \lambda_{a, x}\right)^2}{2\sigma_a'^2}\zeta(x) \mathrm{d}x
    \\
    & =  \sup_{\bm{w} \in\set{W}} \inf_{\int_{\mathbb{R}}\lambda_{2, x} \zeta(x) \mathrm{d}x > \int_{\mathbb{R}}\lambda_{1, x} \zeta(x) \mathrm{d}x}\sum^2_{a=1} \int_{\mathbb{R}}w_{a, x} \frac{\left(\mu_a + \frac{\sigma_{\set{X}a}}{\sigma_{\set{X}}^2}(x - \mu_{\set{X}}) - \lambda_{a, x}\right)^2}{2\sigma_a'^2}\zeta(x) \mathrm{d}x
    \\
    & \stackrel{(a)}{=}  \max_{\bm{w} \in\set{W}} \min_{\int_{\mathbb{R}}\lambda_{2, x} \zeta(x) \mathrm{d}x = \int_{\mathbb{R}}\lambda_{1, x} \zeta(x) \mathrm{d}x}\sum^2_{a=1} \int_{\mathbb{R}}w_{a, x} \frac{\left(\mu_a + \frac{\sigma_{\set{X}a}}{\sigma_{\set{X}}^2}(x - \mu_{\set{X}}) - \lambda_{a, x}\right)^2}{2\sigma_a'^2}\zeta(x) \mathrm{d}x
\end{align*}
where for $(a)$, we used the same argument as in Lemma~\ref{lem:allEqual}.  
From the property of the multivariate Gaussian distribution, 
\begin{align*}
\lambda_{1,x}  = \lambda_1 + \frac{\sigma_{\set{X}1}}{\sigma_{\set{X}}^2}(x - \mu_{\set{X}}) \quad \textnormal{and} \quad \lambda_{2,x}  = \lambda_2 + \frac{\sigma_{\set{X}2}}{\sigma_{\set{X}}^2}(x - \mu_{\set{X}}).
\end{align*}
From $\int_{\mathbb{R}}\lambda_{2, x} \zeta(x) \mathrm{d}x = \int_{\mathbb{R}}\lambda_{1, x} \zeta(x) \mathrm{d}x$, $\lambda_1 = \lambda_2 = \lambda$. Therefore, we get
\begin{align*}
    \frac{1}{2 \sigma'^2_{a}}\left(\mu_a + \frac{\sigma_{\set{X}a}}{\sigma_{\set{X}}^2}(x - \mu_{\set{X}}) - \lambda_{a, x}\right)^2 = \frac{1}{2 \sigma'^2_{a}}\left(\mu_a - \lambda\right)^2.
\end{align*}
Therefore, the optimization problem can be further simplified
\begin{align*}
    T^\star(\set{V})^{-1} &= \max_{\bm{w} \in\set{W}} \min_{\int_{\mathbb{R}}\lambda_{2, x} \zeta(x) \mathrm{d}x = \int_{\mathbb{R}}\lambda_{1, x} \zeta(x) \mathrm{d}x}\sum^2_{a=1} \int_{\mathbb{R}}w_{a, x} \frac{\left(\mu_a + \frac{\sigma_{\set{X}a}}{\sigma_{\set{X}}^2}(x - \mu_{\set{X}}) - \lambda_{a, x}\right)^2}{2\sigma_a'^2}\zeta(x) \mathrm{d}x
    \\
    & = \max_{\bm{w} \in\set{W}} \min_{\lambda \in \mathbb{R}} \int_{\mathbb{R}}\sum^2_{a=1} w_{a, x} \frac{\left( \mu_a - \lambda\right)^2}{2\sigma_a'^2} \zeta(x) \mathrm{d}x.
\end{align*}
At each point $x \in \mathbb{R}$, the optimization problem
\begin{align*}
    \max_{w_{1, x} + w_{2,x} = 1} \min_{\lambda \in \mathbb{R}} \sum^2_{a=1} w_{a, x} \frac{\left( \mu_a - \lambda\right)^2}{2\sigma_a'^2}
\end{align*}
is an identical problem  as is given in Theorem~6 in \citet{Kaufman2016complexity} (two arm Gaussian bandits with known variances) and we know from Theorem~9 in \citet{Kaufman2016complexity}, the maximum is attained when $w_{1, x}= \sigma'_1/( \sigma'_1 +  \sigma'_2)$. Thus, we compute
\begin{align*}
    T^\star(\set{V})^{-1} & = \min_{\lambda \in \mathbb{R}} \int_{\mathbb{R}}\sum^2_{a=1} \frac{\sigma'_a}{\sigma'_1 +  \sigma'_2} \frac{\left( \mu_a - \lambda\right)^2}{2\sigma_a'^2} \zeta(x) \mathrm{d}x = \frac{1}{\sigma'_1 +  \sigma'_2} \min_{\lambda \in \mathbb{R}} \int_{\mathbb{R}}\sum^2_{a=1} \frac{\left( \mu_a - \lambda\right)^2}{2\sigma_a'} \zeta(x) \mathrm{d}x,
\end{align*}

When the minimum is attained,
\begin{align*}
     - \frac{1}{\sigma'_{1}}\left(\mu_1 - \lambda\right) - \frac{1}{\sigma'_{2}}\left(\mu_2 - \lambda\right) =  0,
\end{align*}
Therefore, 
\begin{align*}
    \lambda = \frac{\frac{1}{\sigma'_1} \mu_1 + \frac{1}{\sigma'_2}\mu_2 }{\frac{1}{\sigma'_1}  + \frac{1}{\sigma'_2} }.
\end{align*}
Then, 
\begin{align*}
    \sum^2_{a=1} \frac{1}{2 \sigma'_{a}}\left(\mu_a + \frac{\sigma_{\set{X}a}}{\sigma_{\set{X}}^2}(x - \mu_{\set{X}}) - \lambda_{a, x}\right)^2 & = \sum^2_{a=1} \frac{1}{2 \sigma'_{a}}\left(\mu_a - \lambda \right)^2 
    \\
    & = \left(\frac{\mu_1 - \mu_2 }{\sigma'_1 + \sigma'_2} \right)^2 \frac{\sigma'_1}{2} + \left(\frac{\mu_1 - \mu_2}{\sigma'_1 + \sigma'_2}\right)^2 \frac{\sigma'_2}{2}
    \\
    & = \frac{(\mu_1 - \mu_2 )^2}{2(\sigma'_1 + \sigma'_2)} 
\end{align*}

Therefore, we have
\begin{align*}
    T^\star(\set{V})^{-1} = \frac{1}{2}\left(\frac{\mu_1 - \mu_2}{\sigma'_1 + \sigma'_2}\right)^2.
\end{align*}
\ep

\subsection{Proof of Theorem~\ref{thm:cont_2arm_upperbound}}
\label{appdx:thm:cont_2arm_upperbound}
We note that except that the variances of the sample from the arm $a$ is $\sigma_a'^2$, the proof is almost identical to that of Theorem~9 of \citet{Kaufman2016complexity}.
Let $\alpha = {\sigma_1'}/{(\sigma_1'+\sigma_2')}$ and $d_t =  \hat{\mu}_1(t) - \hat{\mu}_2(t)$. We first prove that the strategy
is $\delta$-PAC for every $\set{V} \in \tilde{\Omega}$.  Assume that $\mu_1>\mu_2$ and recall $\tau=\inf\{t\in \mathbb{N} :
|d_t| > \sqrt{2\sigma^2_t(\alpha)\beta(t,\delta)}\}$, where
$d_t:=\hat{\mu}_1(t) - \hat{\mu}_2(t)$.  The probability of error of the
$\alpha$-elimination strategy is upper bounded by
\begin{eqnarray*}
  \Pr_\set{V}\left(d_\tau \leq -\sqrt{{2\sigma_\tau^2(\alpha) \beta(\tau,\delta)}} \right) &\leq& \Pr_\set{V}\left(d_\tau-(\mu_1-\mu_2) \leq - \sqrt{{2\sigma_\tau^2(\alpha) \beta(\tau,\delta)}} \right) \\
  &\leq& \Pr_\set{V}\left(\exists t\in \mathbb{N}^* : d_t - (\mu_1 - \mu_2) <  - \sqrt{{2\sigma_t^2(\alpha) \beta(t,\delta)}}\right)\\
  & \leq & \sum_{t=1}^{\infty} \exp\left(-\beta(t,\delta)\right),
\end{eqnarray*}
where we used union bound and Chernoff bound applied to $d_t - (\mu_1 - \mu_2)\sim
\set{N}(0, \sigma^2_t(\alpha))$ in the last inequality.  We have
\begin{eqnarray*}\sum_{t=1}^{\infty} \exp\left(-\beta(t,\delta)\right)&\leq& \delta \sum_{t=1}^{\infty}
\frac{1}{t(\log(6t))^2}\leq {\delta}\left(\frac{1}{(\log 6)^2} +
  \int_{1}^{\infty}\frac{dt}{t(\log(6t))^2}\right) \\
&=&{\delta}\left(\frac{1}{(\log 6)^2} + \frac{1}{\log(6)}\right)\leq \delta.
\end{eqnarray*}

For the guarantee of the expected sample complexity, we first prove the
probability that $\tau$ exceeds some fixed $T$:
\begin{eqnarray*}
  \Pr_{\set{V}}(\tau \geq T) &\leq& \Pr_{\set{V}}\left(\forall t \in [T], \ d_t \leq \sqrt{{2\sigma_t^2(\alpha)\beta(t,\delta)}}\right) 
  \\
  &\leq& \Pr_{\set{V}}\left(d_T \leq \sqrt{{2\sigma_T^2(\alpha)\beta(T,\delta)}}\right) \\
  &=& \Pr_{\set{V}}\left( d_T - (\mu_1 - \mu_2) \leq -\left[(\mu_1 - \mu_2)-\sqrt{{2\sigma_T^2(\alpha)\beta(T,\delta)}}\right]\right) \\
  &\leq & \exp\left(-\frac{1}{2\sigma_T^2(\alpha)}\left[(\mu_1 - \mu_2)-\sqrt{{2\sigma_T^2(\alpha)\beta(T,\delta)}}\right]^2\right),
\end{eqnarray*}
where for the last inequality we used Chernoff bound with $T$ such that
$(\mu_1 - \mu_2)>\sqrt{2\sigma^2_T(\alpha)\beta(T,\delta)}.$ For $\gamma
\in (0,1)$, define
\begin{eqnarray*}
  T^*_{\gamma} & := & \inf\left\{t_0 \in \mathbb{N} : \forall t \geq t_0, (\mu_1-\mu_2) - \sqrt{2\sigma^2_t(\alpha)\beta(t,\delta)} > \gamma (\mu_1 - \mu_2)\right\}. 
\end{eqnarray*}
We have,
\begin{eqnarray*}
  \EXP_{\set{V}} [\tau]& \leq & T_\gamma^* + \! \sum_{T=T_{\gamma}^*+1}^\infty\mathbb{P}\left(\tau \geq T\right) \\
  &\leq & T_\gamma^* + \! \sum_{T=T_{\gamma}^*+1}^\infty \exp\left(-\frac{1}{2\sigma_T^2(\alpha)}\left[(\mu_1 - \mu_2)-\sqrt{{2\sigma_T^2(\alpha)\beta(T,\delta)}}\right]^2\right) \\
  & \leq & T_\gamma^* + \! \sum_{T=T_{\gamma}^*+1}^\infty\exp\left(-\frac{1}{2\sigma_T^2(\alpha)}\gamma^2(\mu_1 - \mu_2)^2\right). 
\end{eqnarray*}
For all $t$, it is easy to show that the following upper bound on
$\sigma_t^2(\alpha)$ holds:
\begin{equation}
    \sigma_t^2(\alpha) \leq \frac{(\sigma_1' + \sigma_2')^2}{t}\times \frac{t - \frac{\sigma_1'}{\sigma_2'}}{t-\frac{\sigma_1'}{\sigma_2'} -1}.
  \label{ineq:BoundSigma}
\end{equation}
Using the inequality (\ref{ineq:BoundSigma}), we have
\begin{eqnarray*}
  \EXP_{\set{V}}[\tau]& \leq & T_\gamma^* + \int_{0}^{\infty} \exp\left(-\frac{t}{2(\sigma_1' + \sigma_2')^2} \frac{t - \frac{\sigma_1'}{\sigma_2'}-1}{t - \frac{\sigma_1'}{\sigma_2'}}\gamma^2(\mu_1-\mu_2)^2\right)dt\\
  & \leq &  T_\gamma^* +  \frac{2(\sigma_1'+\sigma_2')^2}{\gamma^2(\mu_1-\mu_2)^2}\exp\left(\frac{\gamma^2(\mu_1-\mu_2)^2}{2(\sigma_1'+\sigma_2')^2}\right).
\end{eqnarray*}
Next, we upper bound $T_\gamma^*$. Let $r\in[0,e/2-1]$. There exists
$N_0(r)$ such that for $t\geq N_0(r)$, $\beta(t,\delta)\leq \log
({t^{1+r}}/{\delta})$.  Again, using the inequality (\ref{ineq:BoundSigma}), we have
$T_\gamma^*=\max(N_0(t),\tilde{T}_\gamma)$, where $\tilde{T}_\gamma$ is defined as
$$\tilde{T}_\gamma = \inf \left\{ t_0 \in \mathbb{N} : \forall t \geq t_0, \frac{(\mu_1 - \mu_2)^2}{2(\sigma_1'+\sigma_2')^2}(1-\gamma)^2 t > \frac{t - \frac{\sigma_1'}{\sigma_2'}-1}{t - \frac{\sigma_1'}{\sigma_2'}} \log \frac{t^{1+r}}{\delta}\right\}.$$
When $t > (1 + \gamma \frac{\sigma_1'}{\sigma_2'})/{\gamma}$, $(t -
\frac{\sigma_1'}{\sigma_2'}-1)/(t - \frac{\sigma_1'}{\sigma_2'}) \leq
{(1-\gamma)^{-1}}$. We get $\tilde{T}_\gamma =\max((1 + \gamma
\frac{\sigma_1'}{\sigma_2'})/{\gamma}, T_\gamma')$, with
$$T_\gamma' = \inf \left\{ t_0 \in \mathbb{N} : \forall t\geq t_0, \exp\left(\frac{(\mu_1 - \mu_2)^2}{2(\sigma_1'+\sigma_2')^2}(1-\gamma)^3 t\right) \geq \frac{t^{1+r}}{\delta}\right\}.$$ 
We use the following algebraic Lemma by \citet{Kaufman2016complexity}.
\begin{lemma}[Lemma~22 of \citet{Kaufman2016complexity}]\label{lem:Garivier2} For every $\beta,\eta>0$ and $s\in[1,e/2]$,
  the following implication is true:
$$x_0 = \frac{s}{\beta}\log\left(\frac{e\log\left({1}/{(\beta^s\eta)}\right)}{\beta^s\eta}\right)  \ \ \ \Rightarrow \ \ \ \forall x\geq x_0, \ \ e^{\beta x} \geq \frac{x^s}{\eta}.$$
\end{lemma}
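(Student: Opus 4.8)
The plan is to reduce the inequality $e^{\beta x} \ge x^s/\eta$ to the elementary fact that $z - \log z \ge 1$ for every $z > 0$, after a change of variable that strips $\beta$ and $s$ out of the exponent. Throughout I work in the relevant regime $\beta^s\eta < 1$, so that $L := \log\bigl(1/(\beta^s\eta)\bigr) > 0$ and $x_0$ is well defined; this is exactly the regime in which the lemma is applied, since there $\eta$ is proportional to $\delta$ and hence small. Taking logarithms, the claim $e^{\beta x} \ge x^s/\eta$ is equivalent to $\beta x - s\log x \ge \log(1/\eta)$, both sides of the original inequality being positive.

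First I would substitute $y = \beta x/s$, i.e. $x = sy/\beta$. A direct computation turns the target into $e^{y} \ge C y$ with $C := s/(\beta\eta^{1/s})$, and using $\eta^{1/s} = e^{-L/s}/\beta$ one obtains the clean form $C = s\,e^{L/s}$. Under the same substitution the threshold becomes $y_0 := \beta x_0/s = \log\!\bigl(eL/(\beta^s\eta)\bigr) = 1 + L + \log L$. Thus it remains to show that $e^{y} \ge C y$ for every $y \ge y_0$.

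Next I would exploit monotonicity. The map $y \mapsto e^{y}/y$ is increasing on $[1,\infty)$, and $y_0 = 1 + L + \log L \ge 1$ in the working regime, so it suffices to verify the single inequality $e^{y_0}/y_0 \ge C$. Since $e^{y_0} = e\,L\,e^{L}$, this reads
\[
\frac{e\,L\,e^{L}}{1 + L + \log L} \;\ge\; s\,e^{L/s}.
\]
Because $s \ge 1$ gives $e^{L(1 - 1/s)} \ge 1$, and because $s \le e/2$, the inequality follows once $\tfrac{eL}{1+L+\log L} \ge \tfrac{e}{2}$, that is, once $2L \ge 1 + L + \log L$, equivalently $L - \log L \ge 1$.

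Finally, $L - \log L \ge 1$ holds for all $L > 0$ (the left-hand side is minimized at $L = 1$, where it equals $1$), which closes the argument. The step I expect to require the most care is the bookkeeping that the two hypotheses $s \in [1,e/2]$ play complementary roles: the lower bound $s \ge 1$ renders the factor $e^{L(1-1/s)}$ harmless, while the upper bound $s \le e/2$ is precisely what collapses the target onto the universal inequality $L - \log L \ge 1$. One should also confirm $y_0 \ge 1$ (equivalently $L + \log L \ge 0$) before invoking the monotonicity of $e^{y}/y$; for the application this is immediate because $L$ is large.
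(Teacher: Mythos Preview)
The paper does not prove this lemma; it is quoted verbatim from \citet{Kaufman2016complexity} (their Lemma~22) and used as a black box in the upper-bound argument of Appendix~\ref{appdx:thm:cont_2arm_upperbound}. There is therefore no ``paper's proof'' to compare against.

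Your argument is correct in the regime you declare, namely $\beta^s\eta<1$ with $L=\log(1/(\beta^s\eta))$ large enough that $y_0=1+L+\log L\ge 1$. The substitution $y=\beta x/s$ and the identification $C=s\,e^{L/s}$ are clean, the reduction to $e^{y_0}/y_0\ge C$ via monotonicity of $e^y/y$ on $[1,\infty)$ is valid, and the two-sided use of $s\in[1,e/2]$ to collapse everything onto $L-\log L\ge 1$ is exactly right. The only point that deserves a slightly sharper statement is the side condition $y_0\ge 1$: it is equivalent to $L+\log L\ge 0$, i.e.\ $L\ge W(1)\approx 0.567$, which is genuinely stronger than your working hypothesis $L>0$. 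You flag this and correctly observe that in the application $\eta$ is proportional to $\delta\to 0$, so $L\to\infty$ and the issue is moot; if you wanted a version valid for all $L>0$ you would need to handle the case $y_0<1$ separately by noting that $\min_{y\ge y_0}e^y/y$ then occurs at $y=1$ and checking $e\ge C$ directly, but this is not needed here.
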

Applying Lemma \ref{lem:Garivier2} with $\eta=\delta$, $s=1+r$ and $\beta
={(1-{\gamma})^3(\mu_1-\mu_2)^2}/{(2(\sigma_1'+\sigma_2')^2)}$ leads to
\[T_\gamma' \leq\frac{(1+r)}{(1-\gamma)^3}\times \frac{2(\sigma_1' +
  \sigma_2')^2}{(\mu_1 - \mu_2)^2}\left[\log \frac{1}{\delta} + \log\log
  \frac{1}{\delta} \right] + R(\mu_1,\mu_2,\sigma_1',\sigma_2',\gamma,r),\] with
\[R(\mu_1,\mu_2,\sigma_1',\sigma_2',\gamma,r)=\frac{1+r}{(1-\gamma)^3}\frac{2(\sigma_1'+\sigma_2')^2}{(\mu_1-\mu_2)^2}\left[1
  +
  (1+r)\log\left(\frac{2(\sigma_1'+\sigma_2')^2}{(1-\gamma)^3(\mu_1-\mu_2)^2}\right)\right].\]
For fixed $\epsilon>0$, choosing small enough $r$ and $\gamma$,  we have
$$\EXP_\set{V}[\tau] \leq (1+\epsilon)\frac{2(\sigma_1' + \sigma_2')^2}{(\mu_1 - \mu_2)^2}\left[\log \frac{1}{\delta} + \log\log\frac{1}{\delta}\right] + \set{C}(\mu_1,\mu_2,\sigma_1',\sigma_2',\epsilon),$$
where $\set{C}$ is a constant independent of $\delta$ summarizing the terms: $ R(\mu_1,\mu_2,\sigma_1',\sigma_2',\gamma,r)$, $(1 + \gamma
\frac{\sigma_1'}{\sigma_2'})/{\gamma} $, $N_0(t)$, and $ \frac{2(\sigma_1'+\sigma_2')^2}{\gamma^2(\mu_1-\mu_2)^2}\exp\left(\frac{\gamma^2(\mu_1-\mu_2)^2}{2(\sigma_1'+\sigma_2')^2}\right)$.  $\set{C}(\mu_1,\mu_2,\sigma_1',\sigma_2',\epsilon)$ goes to infinity when $\epsilon$ goes to zero, but for a fixed $\epsilon>0$, 
\[(1+\epsilon)\frac{2(\sigma_1' + \sigma_2')^2}{(\mu_1 - \mu_2)^2}\log\log\frac{1}{\delta} + \set{C}(\mu_1,\mu_2,\sigma_1',\sigma_2',\epsilon) = \underset{\delta\rightarrow 0}{o_{\epsilon}}\left(\log \frac{1}{\delta}\right).\]
This concludes the proof.

\section{Proof of Results in Section~\ref{sec:optimal}}

\subsection{Proof of Lemma~\ref{lem:CDinterpretation}}
\label{appdx:CDinterpretation}

\begin{proof}
We have
\begin{align*}
    \Alt(\nu) & = \{((\lambda_{a,x}), \zeta)\in \Theta: a^*(((\lambda_{a,x}), \zeta) ) \neq a^*(\nu) = 1\}
    \\
    & = \bigcup_{a\neq 1}  \Big\{ ((\lambda_{a,x}), \zeta)\in \Theta: \sum_{x \in \set{X}} \zeta_x \lambda_{a, x} > \sum_{x \in \set{X}} \zeta_x \lambda_{1, x} \Big\}.
\end{align*}

Then, we get
\begin{align*}
    &  \inf_{((\lambda_{a,x}), \zeta) \in\Alt(\nu)}\sum_{x \in \set{X}}\zeta_x\sum^K_{a=1}w_{a, x}\kl(\mu_{a,x}, \lambda_{a,x})
    \\
    & = \inf_{((\lambda_{a,x}), \zeta) : \exists a \in [K], \sum_{x \in \set{X}} \zeta_x \lambda_{a, x} > \sum_{x \in \set{X}} \zeta_x \lambda_{1, x} }\sum_{x \in \set{X}}\zeta_x\sum^K_{a=1}w_{a, x}\kl(\mu_{a,x}, \lambda_{a,x})
    \\
    & =   \min_{a\neq 1}  \inf_{((\lambda_{a,x}), \zeta) : \sum_{x \in \set{X}} \zeta_x \lambda_{a, x} > \sum_{x \in \set{X}} \zeta_x \lambda_{1, x}} \sum_{x \in \set{X}}\zeta_x\sum^K_{a=1}w_{a, x}\kl(\mu_{a,x}, \lambda_{a,x})
    \\
    & = \min_{a\neq 1}  \inf_{((\lambda_{a,x}), \zeta) : \sum_{x \in \set{X}} \zeta_x \lambda_{a, x} > \sum_{x \in \set{X}} \zeta_x \lambda_{1, x}} \sum_{x \in \set{X}}\zeta_x\Big(w_{1, x}\kl(\mu_{1,x}, \lambda_{1,x}) + w_{a, x}\kl(\mu_{a,x}, \lambda_{a,x})\Big).
\end{align*}
\end{proof}

\subsection{Proof of Lemma~\ref{lem:allEqual}}
\label{appdx:allEqual}

\begin{proof}
Let $a \in [K]$ be one of the arguments that minimizes 
\begin{align*}
    \inf_{((\lambda_{a,x}), \zeta) : \sum_{x \in \set{X}} \zeta_x \lambda_{a, x} > \sum_{x \in \set{X}} \zeta_x \lambda_{1, x}} f_a((\lambda_{x,a})) 
\end{align*}
and suppose $\sum_{x \in \set{X}}\zeta_x \lambda^*_{a,x} > \sum_{x \in \set{X}}\zeta_x \lambda^*_{1,x}$.
For such $a$, from the assumption on $\Theta$, there exists $x \in \set{X}$ such that $\mu_{1,x} > \mu_{a, x}$. For such $x$, from the monotonicity of the KL divergence, 
\begin{align*}
    \mu_{1, x} \ge \max(\lambda_{1, x}, \lambda_{a, x}) \ge  \min(\lambda_{1, x}, \lambda_{a, x}) \ge \mu_{a, x}.
\end{align*}
Then, by the assumption $\sum_{x \in \set{X}}\zeta_x \lambda^*_{a,x} > \sum_{x \in \set{X}}\zeta_x \lambda^*_{1,x} $, one can modify the value of $ \lambda^*_{1,x}$ as $\lambda^*_{1,x} + \varepsilon$ or $ \lambda^*_{a,x}$ as $\lambda^*_{a,x} - \varepsilon$ ($\varepsilon$ is some small constant) to make the value of $f_{a}((\lambda_{x,a}))$ strictly smaller.  This is a contradiction and concludes the proof. 

\end{proof}

\subsection{Proof of Lemma~\ref{lem:conti_V}}
\label{appdx:conti_V}

\begin{proof}
Let us define a function 
\begin{align*}
    f(\nu, (\lambda_{a,x})) = \sum_{x \in \set{X}}\zeta_x\sum^K_{a=1}w_{a, x}\kl(\mu_{a,x}, \lambda_{a,x}).
\end{align*}
We call the point-to-set mapping
\begin{align*}
    X(\nu) = \bigcup_{a\neq 1} \Big\{ ((\lambda_{a,x}), \zeta_x) \in \Theta: \sum_{x \in \set{X}} \zeta_x \lambda_{a, x}> \sum_{x \in \set{X}} \zeta_x \lambda_{1, x} \big\}
\end{align*}
as a constraint mapping. It is easy to check that $ X(\nu)$ is 
outer semicontinuous at every $\nu$. 
Similarly, $ X(\nu)$ is 
inner semicontinuous at every $\nu$. Therefore, from the stability theory in optimization \citet{hogan1973point} and the continuity of the KL divergence, $m(\bm w, \nu)$ is continuous at every $\nu$ when $\bm w$ is fixed. 

\end{proof}

\subsection{Proof of Lemma~\ref{lem:continui_w}}
\label{appdx:continui_w}

\begin{proof}
The proof is similar to that of Lemma~\ref{lem:conti_V}. The constraint $\sum_{x\in \set{X}} \zeta_x \lambda_{a, x}> \sum_{x \in \set{X}} \zeta_x \lambda_{1, x}$ is invariant under the changes of $\vec{w} \in \set{W}$ and the KL divergence is continuous. From the stability theory \citet{hogan1973point}, $m(\vec{w}, \set{W})$ is continuous when $\nu$ is fixed. 
\end{proof}

\subsection{Proof of Lemma~\ref{lem:converge_w}}
\label{appdx:converge_w}

\begin{proof}
Suppose $\vec{w}_k$ does not converge to $\Phi(\nu)$. Then, there exists $\varepsilon >0$ such that for any $n_1 \in \mathbb{N}$, there exists $k \ge n_1$ such that
\begin{align*}
        \inf_{(w_{a,x}) \in \bar{\set{W}}} \max_{a, x}| w_{a,x}^{(k)} - w_{a,x}| \ge \varepsilon.
\end{align*}
Also, there exists $C(\varepsilon)>0$ such that,
\begin{align}\label{eq:exists_max_difference}
    \max_{\vec{w} \in \set{W}} m(\vec{w}, \nu) - \max_{w \in \set{W} : \inf_{(w_{a,x}) \in \bar{\set{W}}} \max_{a, x}| w_{a,x}^{(k)} - w_{a,x}|\ge \varepsilon}  m(\vec{w}, \nu) \ge C(\varepsilon).
\end{align}
Let $w^* \in \argmax m(\vec{w}, \nu)$. We can find a constant $\varepsilon_2(C(\varepsilon))>0$ such that for any $n_2 \in \mathbb{N}, n_2 \ge n_1$, there exists $k \ge n_2$ such that 
\begin{align*}
    & |\max_{w \in \set{W}} m(\vec{w}, \nu) - \max_{w \in \set{W}} m(\vec{w}, \nu_k)| 
    \\
    & = | m(\vec{w}^*, \nu) - m(\vec{w}^*, \nu_k) + m(\vec{w}^*, \nu_k) -\max_{w \in \set{W}} m(\vec{w}, \nu_k) |
    \\
    & \ge \left||m(\vec{w}^*, \nu) - m(\vec{w}^*, \nu_k)| - |m(\vec{w}^*, \nu_k) -\max_{w \in \set{W}} m(\vec{w}, \nu_k) | \right|
    \\
    & \stackrel{(a)}{\ge} \varepsilon_2 (C(\varepsilon)),
\end{align*}
where for $(a)$, we used (i) $ m(\vec{w}^*, \nu) \to m(\vec{w}^*, \nu_k)$: from the continuity of $m(\vec{w}, \nu)$ with respect to $\nu$ for a fixed $w$ (Lemma~\ref{lem:conti_V}) with the convergence assumption of $ (\nu_k)_{k \ge 1}$ and (ii) $| m(\vec{w}^*, \nu_k) -\max_{w \in \set{W}} m(\vec{w}, \nu_k)  | \ge C(\varepsilon)$: from the optimality gap (\ref{eq:exists_max_difference}). Therefore, $ \max_{w \in \set{W}} m(\vec{w}, \nu_k)$ does not converge to $\max_{w \in \set{W}} m(\vec{w}, \nu)$, hence contradiction. 

\end{proof}

\subsection{Proof of Lemma~\ref{lem:aloca_convex}}
\label{appdx:aloca_convex}
\begin{proof}
Take any $(x_{a, x}^*), (y_{a, x}^*) \in \Phi(\nu)$ and any $\alpha \in [0, 1]$. We have 
\begin{align*}
    & m(\alpha(x_{a, x}^*) + (1 - \alpha)(y_{a, x}^*), \nu) 
    \\
    & = \inf_{((\lambda_{a,x}), \zeta) \in \Alt(\mathcal{\nu})} \sum_{x\in\mathcal{X}}\zeta_x\sum_{a=1}^{K} ( \alpha x_{a, x}^* + (1 - \alpha)y_{a, x}^*)\kl(\mu_{a, x}, \lambda_{a, x})
    \\
    & \ge \alpha\inf_{((\lambda_{a,x}), \zeta) \in \Alt(\mathcal{\nu})}  \sum_{x\in\mathcal{X}}\zeta_x\sum_{a=1}^{K}   x_{a, x}^* \kl(\mu_{a, x}, \lambda_{a, x}) + (1- \alpha)\inf_{((\lambda_{a,x}), \zeta) \in \Alt(\nu)}  \sum_{x\in\mathcal{X}}\zeta_x\sum_{a=1}^{K}   y_{a, x}^* \kl(\mu_{a, x}, \lambda_{a, x}) 
    \\
    & = \max_{\vec{w}' \in \set{W}} m (\vec{w}', \nu).
\end{align*}
Hence, $\alpha(x_{a, x}^*) + (1 - \alpha)(y_{a, x}^*) \in \Phi(\nu)$. This concludes the proof.
\end{proof}

\section{Proofs of Results in Section~\ref{sec:trak_stop} and CTS Algorithm}
\subsection{Proof of Lemma~\ref{lmm:tracking}}
\label{appdx:tracking}
Our proof for the tracking lemma is inspired by that of D-tracking for linear bandits by \citet{jedra2020optimal}. 
Let us denote by $C$ what we want to track. For a sequence that converges to $C$, in the following lemma, we show how to design a sampling rule so that $\frac{N_{a,x}(t)}{t}$ also converges to  $C$.

\begin{lemma}\label{lmm:tracking lemma}
(Tracking a set $C$)  Let $(w(t))_{t\ge1}$ be a sequence taking values in $\set{W}$, such that there exists a compact, convex and non empty subset $C$ in $\set{W}$, there exists $\varepsilon >0$ and $t_0(\varepsilon) \ge 1$ such that $\forall t \geq t_0(\varepsilon)$, 
$$\min_{\bm{w}'\in C}\max_{a\in[K], x\in\mathcal{X}}\left|w_{a, x}(t) - w'_{a,x}\right|\le \varepsilon$$

Let $g:\mathbb{N} \to \mathbb{R}$ be a non-decreasing function that $g(0) = 0$, $g(t)/t \to 0$ as $t\to\infty$ and $\forall n, m \geq 1$,
\begin{align*}
\inf \big\{ n \in \mathbb{N}: g(n) \geq m \big\} > \inf \big\{ n \in \mathbb{N}: g(n) \geq m-1 \big\} + K.
\end{align*}

Define for every $t'\in\{0,\dots,t-1\}$, $\varphi^g_{x,t'} = \{ a : N_{a, x}(t') < g(N_{x}(t'))\}$ and a sampling rule as (\ref{eq:forced})
Then for all $a \in [K]$ and $x\in\mathcal{X}$, 
$$N_{a, x}(t) > g(N_{x}(t)) -1,$$
and there exists $t_1(\varepsilon) \ge t_0(\varepsilon)$ such that $\forall t \ge t_1(\varepsilon)$, 
$$\min_{\bm w\in C}\max_{a\in[K], x\in\mathcal{X}}\left|\frac{N_{a,x}(t)}{t} -  \frac{1}{t}\sum_{s=1}^t\mathbbm{1}[X_s = x]w_{a,x}\right| \le 3(KD - 1) \varepsilon.$$
\end{lemma}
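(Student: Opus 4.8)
The plan is to establish the two displayed conclusions separately: first the deterministic forced-exploration guarantee $N_{a,x}(t) > g(N_x(t)) - 1$, and then the tracking bound, which builds on it. Throughout I write $D = |\set{X}|$, so that the quantities being tracked live on the $KD$ arm--context cells.

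For the forced-exploration bound I would fix a context $x$ and argue by induction over the (random) times at which $X_t = x$: since $N_x$ and $N_{a,x}$ are constant between successive arrivals of $x$ and $g$ is non-decreasing, the inequality is automatically preserved off these times, so only the arrival times matter. At an arrival of $x$, whenever $\varphi^g_{x,t} = \{a : N_{a,x}(t) < g(N_x(t))\}$ is non-empty the rule \eqref{eq:forced} pulls the least-drawn arm in it. The separation hypothesis $\inf\{n : g(n) \ge m\} > \inf\{n : g(n) \ge m-1\} + K$ guarantees that, between the arrival at which $g(N_x)$ first reaches level $m-1$ and the one at which it first reaches $m$, context $x$ occurs more than $K$ times; since there are only $K$ arms and the forced rule always advances the current minimum count, every arm is raised to at least $m-1$ before $g(N_x)$ can exceed $m$. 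This keeps $N_{a,x}(t) \ge g(N_x(t)) - 1$, with the $-1$ absorbing the one-step lag, which is the claim.

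For the tracking bound I would work context by context. Restricted to the rounds with $X_s = x$, the rule \eqref{eq:forced} is exactly a tracking procedure on the simplex $\Sigma^K_x$ aimed at the allocation sequence $(w_{\cdot,x}(s))$, with forced exploration governed by $g$. The standard cumulative-tracking estimate (as in \citet{Garivier2016,jedra2020optimal}) then yields, for every $a$, $\left| N_{a,x}(t) - \sum_{s=1}^t \mathbbm{1}[X_s = x] w_{a,x}(s) \right| \le (K-1) + O(g(N_x(t)))$, where the first term is the usual tracking slack and the second collects the forced pulls; dividing by $t$ and using $g(n)/n \to 0$ together with $N_x(t) \le t$ makes this $o(1)$. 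Hence $N_{a,x}(t)/t$ is asymptotically within $o(1)$ of the Cesàro average $\frac{1}{t}\sum_{s \le t,\, X_s=x} w_{a,x}(s)$, reducing the problem to comparing this average with a single fixed allocation.

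The remaining, and main, difficulty is to replace the time-varying targets $w(s)$ by one $w \in C$ uniformly over all $(a,x)$. By hypothesis every $w(s)$ with $s \ge t_0(\varepsilon)$ lies within $\varepsilon$ of $C$; writing $w(s) = \tilde w(s) + e(s)$ with $\tilde w(s) \in C$ and $\|e(s)\|_\infty \le \varepsilon$, the convexity and compactness of $C$ assumed in the statement ensure that any Cesàro average $\frac1N\sum_s \tilde w(s)$ again lies in $C$. The subtlety is that the per-context averages are taken over different index sets (the arrivals of each context), so the candidates $\bar w^x \in C$ obtained by averaging over the $x$-arrivals need not share a common representative; reconciling the $D$ per-context averages into a single point of $C$ is exactly where the contextual argument departs from the non-contextual one, and where the factor $KD$ in the constant $3(KD-1)\varepsilon$ originates. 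I expect this assembly step—controlling all $KD$ cells simultaneously by one $w \in C$ while paying only a constant multiple of $\varepsilon$, after the $o(1)$ transient terms have been discarded for $t \ge t_1(\varepsilon)$—to be the crux; by comparison the forced-exploration bound and the per-context cumulative tracking estimate are routine.
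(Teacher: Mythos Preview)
Your sketch of the forced-exploration bound is essentially the paper's argument: set $k_m=\inf\{n:g(n)\ge m\}$, prove by induction on $m$ that whenever $N_x(t)\in[k_m,k_{m+1})$ every arm has been pulled at least $m$ times in context $x$, and use the gap condition $k_{m+1}>k_m+K$ to ensure all $K$ arms are visited between levels. That part is fine.

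For the tracking bound, however, your route diverges from the paper's, and the step you flag as the crux --- the ``assembly'' of per-context averages into a single $w\in C$ --- is not where the factor $KD$ comes from. The paper does \emph{not} track per context and then assemble; it proceeds in the opposite order. First, using compactness and convexity of $C$, it projects each $w(s)$ onto $v(s)\in C$ and forms a \emph{single} $\tilde w(t)\in C$ such that the context-weighted Ces\`aro averages $\overline\eta_{a,x}(t)=\frac1t\sum_s\mathbbm{1}[X_s=x]w_{a,x}(s)$ are within $2\varepsilon$ of $\frac{N_x(t)}{t}\tilde w_{a,x}(t)$ for \emph{all} cells $(a,x)$ simultaneously; this step costs only a factor $2$, not $KD$. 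Second, it runs a global D-tracking argument on the errors $E_{a,x,t}=N_{a,x}(t)-N_x(t)\tilde w_{a,x}(t)$: since $\sum_{a,x}E_{a,x,t}=0$ one has $\max_{a,x}|E_{a,x,t}|\le(KD-1)\max_{a,x}E_{a,x,t}$, and a one-step recursion (using that the sampling rule \eqref{eq:forced} pulls $a$ only when $E_{a,x,t}\le 2t\varepsilon$) gives $E_{a,x,t}\le\max(E_{a,x,t_0'},2t\varepsilon+1)\le 3t\varepsilon$ for large $t$. The $(KD-1)$ factor thus comes entirely from the global sum-to-zero trick, not from any per-context reconciliation.

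There is also a mismatch in your per-context estimate $\bigl|N_{a,x}(t)-\sum_s\mathbbm{1}[X_s=x]w_{a,x}(s)\bigr|\le(K-1)+O(g(N_x(t)))$: the rule \eqref{eq:forced} selects $\argmax_a\, N_x(t)w_{a,x}(t)-N_{a,x}(t)$, which is D-tracking toward the \emph{current} target $w(t)$, not C-tracking toward the cumulative sum $\sum_s w_{a,x}(s)$. The standard D-tracking analysis compares $N_{a,x}(t)$ directly to a fixed target and does not pass through $\sum_s w_{a,x}(s)$; the bound you invoke is the C-tracking one and does not obviously apply here. In short, you have correctly spotted that passing from time-varying targets to a single point of $C$ is essential, but the paper handles this up front via convexity (paying only $2\varepsilon$) and then performs the tracking argument globally across all $KD$ cells, whereas your plan reverses the order, invokes a bound for the wrong tracking rule, and leaves the assembly step undone.
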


The proof of Lemma~\ref{lmm:tracking lemma} is inspired by the proof of Lemma 3 in \citet{Antos2008}, Lemma 17 in \citet{Garivier2016}, and Lemma~6 and Proposition~2 of \citet{jedra2020optimal}. We show the proof of Lemma~\ref{lmm:tracking lemma} as follows.
\begin{proof}
We separately show that 
$$N_{a, x}(t) > g(N_{x}(t))-1$$ and 
$$\min_{\bm w\in C}\max_{a\in[K], x\in\mathcal{X}}\left|\frac{N_{a,x}(t)}{t} -  \frac{1}{t}\sum_{s=1}^t\mathbbm{1}[X_s = x]w_{a,x}\right| \le 3(KD - 1) \varepsilon.$$

\textbf{Proof of $N_{a, x}(t) > g(N_{x}(t))-1$.} First, we justify that $N_{a, x}(t) > g(N_{x}(t))-1$.

For all $ m \in\mathbb{N}$, let us define
\begin{align*}
&k_m=\inf\{n \in \mathbb{N} : g(n)\geq m\},\\
&\mathcal{I}_m = \{ k_m, \dots, k_{m+1} -1 \}.
\end{align*}
From our assumptions on $g$, we have 
\begin{align*}
&|\mathcal{I}_m| > K,\\
&m \leq g(n) < m +1\ \ \ \forall n \in\mathcal{I}_m.
\end{align*}

We consider the following statement for all $ m\in\mathbb{N}$ and for all $ x\in \set{X}$: 
\begin{align}
\label{HR1} 
&\text{for}\ \text{all}\  t\in\mathbb{N}\ \text{such}\ \text{that}\ N_{x}(t) \in \mathcal{I}_m, \text{we}\ \text{have}\ \text{for}\ \text{all}\  a\in[K],  N_{a, x}(t) \geq m;\\
&\text{for}\ \text{all}\ t\in\mathbb{N}\ \text{such}\ \text{that}\ N_{x}(t) \geq k_{m} + K,  \ \text{we}\ \text{have}\ \varphi^g_{x, t} = \emptyset \ \ \text{and} \ N_{a, x}(t) \geq m+1.\nonumber
\end{align}
If \eqref{HR1} holds for all $m$, then using that for all $t$ and for all $a\in[K]$,  
$$N_{a, x}(n) > g(N_x(t)) -1$$
because from the definitions of $\mathcal{I}_m$ and $k_m$, for $t$ such that 
$$N_{x}(t) \in \mathcal{I}_m,$$ 
we have 
$$N_{a,x}(t) \geq m > g(N_{x}(t)) -1.$$
Here, we used $g(N_x(t)) \geq m$ and $g(N_x(t)) < m+1$ from the definition of $k_m$.

We prove \eqref{HR1} by induction with respect to  $m\in \mathbb{N}$. First, we show the statement holds for $m=0$. For all $t$ such that $N_{x}(t) \in \mathcal{I}_0$, it holds that for all $a\in[K]$ and for all $x\in\set{X}$,
$$\varphi^g_{x, t} = \{ a : 0\leq N_{a, x}(t) < g(N_{x}(t)) < 1\} =  \{ a : N_{a, x}(t) = 0\}.$$
Here, we used $\mathcal{I}_0 = \{ k_0, \dots, k_{1} -1 \}$ with $k_0=\inf\{n \in \mathbb{N} : g(n)\geq 0\}$ and  $k_1=\inf\{n \in \mathbb{N} : g(n)\geq 1\}$.
Therefore, for $t$ such that $N_{x}(t) \geq K = k_0 + K$, we have $N_{a,x}(t) \geq 1$ and $\varphi^g_{x, t} = \emptyset$. Thus, the statement holds for $m=0$. 

Suppose that for $m = m'\geq 0$, the statement is true; that is,
\begin{align*}
&\text{for}\ \text{all}\  t\in\mathbb{N}\ \text{such}\ \text{that}\ N_{x}(t) \in \mathcal{I}_{m'}, \text{we}\ \text{have}\ \text{for}\ \text{all}\  a\in[K],  N_{a, x}(t) \geq m';\\
&\text{for}\ \text{all}\ t\in\mathbb{N}\ \text{such}\ \text{that}\ N_{x}(t) \geq k_{m'} + K,  \ \text{we}\ \text{have}\ \varphi^g_{x, t} = \emptyset \ \ \text{and} \ N_{a, x}(t) \geq m'+1.\nonumber
\end{align*}
Then, we show the statement holds for $m = m'+1$. From the inductive hypothesis and assumption $k_{m' + 1} > k_{m'} + K$, since $k_{m'+1} - 1 \geq k_{m'} + K$, it holds that for all $a\in[K]$ and for all $x\in\set{X}$, 
$$N_{a,x}(k_{m'+1} - 1) \geq m' + 1.$$
From the definition of $\mathcal{I}_{m'  + 1}$, for $t$ such that $N_{x}(t) \in \mathcal{I}_{m'  + 1} = \{ k_{m'+1}, \dots, k_{m'+2} -1 \}$, $N_{a,x}(t) \geq N_{a,x}(k_{m'+1} - 1)$.
Therefore,
\begin{align*}
N_{a,x}(t) \geq N_{a,x}(k_{m'+1} - 1) \geq m' + 1
\end{align*}
Besides, for $t$ such that $N_{x}(t) \in \mathcal{I}_{m'  + 1}$ and for all $x\in\set{X}$,
$$m' + 1 \leq g(N_{x}(t)) < m'+2.$$
This leads to
$$\varphi^g_{x, t} = \{ a : m' + 1\leq N_{a, x}(t) < g(N_{x}(t)) < m' + 2\} = \{ a : N_{a,x}(t) = m' + 1\}.$$
Then, $A_t$ is chosen among this set while it is non empty. Therefore, for t such that $N_x(t) \geq k_{m'+1} + K$, it holds that for all $a\in[K]$ and $x\in\set{X}$, $\varphi^g_{x, t} = \emptyset$ and $N_{a,x}(t) \geq m'+2$. Thus, the statement \eqref{HR1} holds when $m = m'+1$.

\textbf{Proof of $\min_{\bm w\in C}\max_{a\in[K], x\in\mathcal{X}}\left|\frac{N_{a,x}(t)}{t} -  \frac{1}{t}\sum_{s=1}^t\mathbbm{1}[X_s = x]w_{a,x}\right| \le 3(KD - 1) \varepsilon$.} First, the condition
$$\min_{\bm{w}'\in C}\max_{a\in[K], x\in\mathcal{X}}\left|w_{a, x}(t) - w'_{a,x}\right| \leq \varepsilon$$
for $\forall t \geq t_0(\varepsilon)$ ensures that for large $t$,
$$
\min_{\bm {w}' \in C}\max_{a\in[K], x\in\mathcal{X}} \left| \frac{1}{t}\sum_{s=1}^t\mathbbm{1}[X_s = x] w_{a,x}(s) - \frac{1}{t}\sum_{s=1}^t\mathbbm{1}[X_s = x] w'_{a,x}\right| \le \varepsilon.
$$

For all $t \ge 1$, we define
$$
\overline{\eta}_{a,x}(t) = \frac{1}{t}\sum_{s=1}^t\mathbbm{1}[X_s = x] w_{a,x}(s).
$$
Next, since $C$ is non-empty and compact, we can define
$$
\tilde{\bm{w}}(t) = \argmin_{\bm{w}^{\dagger}\in C} \max_{a\in[K], x\in\mathcal{X}}\left| \overline{\eta}_{a,x}(t)- \frac{1}{t}\sum_{s=1}^t\mathbbm{1}[X_s = x] w^{\dagger}_{a,x}\right|.
$$
Here, by convexity of $C$, there exists $t_0'(\varepsilon) \ge t_0(\varepsilon)$ such that $\forall t \ge t_0'(\varepsilon)$, we can obtain the following inequalities:
\begin{align}
\label{eq:first_condition_for_lemma}
&\min_{\bm{w}^{\dagger}\in C}\max_{a\in[K], x\in\set{X}}\left|\frac{N_{a,x}(t)}{t} -  \frac{1}{t}\sum_{s=1}^t\mathbbm{1}[X_s = x]w^{\dagger}_{a,x}\right|\le \max_{a\in[K], x\in\set{X}}\left| \frac{N_{a,x}(t)}{t}-  \frac{1}{t}\sum_{s=1}^t\mathbbm{1}[X_s = x] \tilde{w}_{a,x}(t)\right|
\end{align}
 and 
 \begin{align}
 \label{eq:second_condition_for_lemma}
 \max_{a\in[K], x\in\mathcal{X}}\left| \overline{\eta}_{a,x}(t) - \frac{1}{t}\sum_{s=1}^t\mathbbm{1}[X_s = x] \tilde{w}_{a,x}(t)\right| \le 2 \varepsilon.
 \end{align}

The first result can be directly obtained from the definition. We show the second result. To see that \eqref{eq:second_condition_for_lemma} holds, let us define for all $t \ge 1$, 
$$\bm v(t) = \argmin_{\bm{w}^{\dagger} \in C} \max_{a\in[K], x\in\mathcal{X}}|w_{a,x}(t) - w^{\dagger}_{a,x}|,$$
and observe that for all $ a \in [K]$ and $x\in\mathcal{X}$, we have
\begin{align*}
&\left\vert \overline{\eta}_{a,x}(t) - \frac{1}{t}\sum_{s=1}^t \mathbbm{1}[X_s = x]v_{a,x}(s)\right\vert\\
& = \left\vert \frac{1}{t}\sum_{s=1}^t \mathbbm{1}[X_s = x]w_{a, x}(s) - \frac{1}{t}\sum_{s=1}^t \mathbbm{1}[X_s = x]v_{a,x}(s)\right\vert\\
& \le \frac{1}{t} \sum_{s=1}^{t_0} \mathbbm{1}[X_s = x]\left\vert w_{a, x}(s) - v_{a,x}(s)\right\vert + \frac{1}{t} \sum_{s=t_0 + 1}^t \mathbbm{1}[X_s = x]\left\vert w_{a, x}(s) - v_{a,x}(s)\right\vert \\
& \le \frac{t_0(\varepsilon)}{t} + \frac{t-t_0(\varepsilon)}{t} \varepsilon.
\end{align*}
Note that $t_0(\varepsilon)$ is defined in the statement. Thus if $t\ge t_0' = \frac{t_0(\varepsilon)}{\varepsilon}$, then 
$$\max_{a\in[K], x\in\mathcal{X}}\left\vert \overline{\eta}_{a,x}(t) - \frac{1}{t}\sum_{s=1}^t \mathbbm{1}[X_s = x]v_{a,x}(s)\right\vert \le 2 \varepsilon.$$
Finally since the convexity of $C$ leads to
$$\left(\frac{1}{t} \sum_{s=1}^t \mathbbm{1}[X_s = 1]\bm{v}_1(s), \dots, \frac{1}{t} \sum_{s=1}^t \mathbbm{1}[X_s = D]\bm{v}_D(s)\right)^\top \in C,$$
 it follows that $\forall t \ge t_0'$
\begin{align*}
&\max_{a\in[K], x\in\mathcal{X}}\left\vert \overline{\eta}_{a,x}(t) - \frac{1}{t}\sum_{s=1}^t \mathbbm{1}[X_s = x]\tilde{w}_{a,x}(s)\right\vert \le \max_{a\in[K], x\in\mathcal{X}}\left\vert \overline{\eta}_{a,x}(t) - \frac{1}{t}\sum_{s=1}^t \mathbbm{1}[X_s = x]v_{a,x}(s)\right\vert \le 2\varepsilon.
\end{align*}
Thus, we showed that \eqref{eq:second_condition_for_lemma} holds.
By using \eqref{eq:first_condition_for_lemma} and \eqref{eq:second_condition_for_lemma}, we consider bounding the term
$$
\min_{\bm w\in C}\max_{a\in[K], x\in\mathcal{X}}\left|\frac{N_{a,x}(t)}{t} -  \frac{1}{t}\sum_{s=1}^t\mathbbm{1}[X_s = x]w_{a,x}\right|.
$$
Let us define for $a \in [K]$ and for all $t\ge  1$,
\begin{align*}
E_{a, x, t} = N_{a, x}(t) - \sum_{s=1}^t\mathbbm{1}[X_s = x]\tilde{w}_{a,x}(t).
\end{align*}
From \eqref{eq:first_condition_for_lemma}, there exists $t_1 \geq t'_0(\varepsilon)$ such that, for all $t\geq t_1$, 
\[\min_{\bm w\in C}\max_{a\in[K], x\in\mathcal{X}}\left|\frac{N_{a,x}(t)}{t} -  \frac{1}{t}\sum_{s=1}^t\mathbbm{1}[X_s = x]w_{a,x}\right| \leq \max_{a\in[K], x\in\mathcal{X}} \left|\frac{E_{a,x,t}}{t}\right|\;,\]

Therefore, we consider bounding $\max_{a\in[K], x\in\mathcal{X}} \left|\frac{E_{a,x,t}}{t}\right|$. Since
$$
\sum^K_{a=1}\sum_{x\in\mathcal{X}}E_{a,x,t} = \sum^K_{a=1}\sum_{x\in\mathcal{X}}N_{a, x}(t) -  \sum^K_{a=1}\sum_{x\in\mathcal{X}} \sum^t_{s=1}\mathbbm{1}[X_s = x]\tilde{w}_{a,x}(t) = t - t = 0
$$
we have
\[\sup_{a,x} |E_{a,x,t}| \leq (KD-1) \sup_{a,x} \ E_{a,x,t}.\]
Then, for every $a\in[K]$ and $x \in \set{X}$, we have $E_{a,x,t} \leq \sup_{a'\in[K]} \sup_{x' \in\mathcal{X}} E_{a',x',t}$ and  
\[E_{a, x, t} = - \sum_{(a', x') \neq (a,x)} E_{a',x',t} \geq - \sum_{(a', x') \neq (a,x)} \sup_{a', x'} E_{a',x',t} = - (KD-1)\sup_{a',x'}E_{a',x',t}\;.\] 

Next, we give an upper bound on $\sup_{a, x} E_{a, x, t}$, for $t$ large enough. Let $t_0' \geq t_0$ such that 
\[\forall t \geq t_0', \ \ \ g(t) \leq 2t \epsilon \ \ \text{and} \ \ 1/t \leq \epsilon\;.\]
We first show that for $t\geq t_0'$,  
\begin{equation}
(A_{t+1} = a) \subseteq \left(E_{a,x,t} \leq 2t \epsilon\right)\label{IntermediateEvent}
\end{equation}
To prove this, we write 
\[(A_{t+1} = a) \subseteq \mathcal{E}_1 \cup \mathcal{E}_2,
\]
where
\begin{align*}
\mathcal{E}_1 & = \left(a \in \argmin_{a\in[K]} \left(N_{a, x}(t) - t\sum^t_{s=1}\mathbbm{1}[X_s = x] w_{a, x}(s)\right)\right)\\
\mathcal{E}_2 & = \left(N_{a, X_t}(t) \leq g(N_x(t))\right)
\end{align*}
This inclusion is immediate by construction. Therefore, we show that
\[\mathcal{E}_1 \cup \mathcal{E}_2 \subseteq \left(E_{a,x,t} \leq 2t \epsilon\right).
\]

For the second case ($\mathcal{E}_2$), if $N_{a, x}(t) \leq g(N_x(t))$, we have 
$$E_{a,x,t} \leq g(N_x(t)) - \sum_{s=1}^t\mathbbm{1}[X_s = x] w_{a, x}(s) \leq g(N_x(t))\leq g(t) \leq 2t\epsilon,$$  by definition of $t_0'$.

In the first case ($\mathcal{E}_1$), for $t \ge t_0$, we have
\begin{align*}
&E_{a,x,t}  = N_{a, x}(t) -  \sum_{s=1}^t\mathbbm{1}[X_s = x]\tilde{w}_{a,x}(t)\nonumber \\
& = N_{a, x}(t) -  \sum^t_{s=1}\mathbbm{1}[X_s = x] w_{a, x}(s)  + \sum^t_{s=1}\mathbbm{1}[X_s = x] w_{a, x}(s) - \sum_{s=1}^t\mathbbm{1}[X_s = x]\tilde{w}_{a,x}(t)\\
&\leq N_{a, x}(t) -  \sum^t_{s=1}\mathbbm{1}[X_s = x] w_{a, x}(s)+ 2t \varepsilon\ \ \ \ \ \ \ \ \ \ \ \ \ \ \left(\text{since}\ \max_{a\in[K], x\in\set{X}}\left|\overline{\eta}(t), \frac{1}{t}\sum^t_{s=1}\mathbbm{1}[X_s = x]\tilde{w}(t)\right| \le 2 \varepsilon\right)\\
&\leq \min_{a\in [K]} \left(N_{a, x}(t) - t\sum^t_{s=1}\mathbbm{1}[X_s = x] w_{a, x}(s)\right) + 2t\varepsilon\ \ \ \ \ \ \ \ \ \ \ \ \ \ \ \ \ \ \ \ \ \ \ \ \ \ \ \ \ \  (\text{since}\ \mathcal{E}_1\ \text{holds})\\
&\leq 2t\varepsilon.
\end{align*}
where the last inequality holds because $\min_{a, x} E_{a,x,t} \leq 0$ holds from $\sum^K_{a=1}\sum_{x\in\mathcal{X}}E_{a,x,t} = 0$. This proves \eqref{IntermediateEvent}.

Here, $E_{a,x,t}$ satisfies $E_{a,x,t+1} = E_{a,x,t} + \mathbbm{1}[A_{t+1}=a, X_{t+1} = x]  - \mathbbm{1}[X_s = x]\tilde{w}_{a,x}(t+1) $, therefore, if $t\geq t'_0$, 
\begin{align*}
 E_{a,x,t+1} & \leq E_{a,x,t} + \mathbbm{1}[A_{t+1}=a, X_{t+1} = x]  - \mathbbm{1}[X_s = x] \tilde{w}_{a,x}(t+1)\\
 & \leq E_{a,x,t} + \mathbbm{1}[E_{a,x,t}\leq 2t\epsilon] - \mathbbm{1}[X_s = x]\tilde{w}_{a,x}(t+1). 
\end{align*}
We now prove by induction that for every $t \geq t'_0$, we have 
\[E_{a, x, t} \leq \max(E_{a,x, t'_0}, 2t\epsilon +1 ).\]
For $t=t'_0$, this statement clearly holds. Let $t\geq t'_0$ such that the statement holds.
If $E_{a, x, t} \leq 2t\epsilon$, we have
\begin{eqnarray*}E_{a,x,t+1} &\leq& 2t\epsilon + 1 - \tilde{w}_{a,x}(t+1) \leq 2t\epsilon +1 \leq \max(E_{a,x,t'_0},2t\epsilon+1) \\
&\leq& \max(E_{a,x,t'_0},2(t+1)\epsilon +1).\end{eqnarray*}
If $E_{a,x,t} > 2t\epsilon$, the indicator is zero and  
\[E_{a,x,t+1} \leq \max(E_{a,x,t'_0},2t\epsilon +1) - \tilde{w}_{a,x}(t+1) \leq \max(E_{a,x,t'_0},2(t+1)\epsilon+1),\]
which concludes the induction. 

For all $t\geq t'_0$, using that $E_{a,x,t'_0} \leq t'_0$ and $1/t \leq \epsilon$, it follows that 
\[\max_{a\in[K], x\in\mathcal{X}} \left|\frac{E_{a,x,t}}{t}\right| \leq  (KD-1) \max\left(2\epsilon + \frac{1}{t}, \frac{t'_0}{t}\right) \leq (KD - 1) \max\left(3\epsilon, \frac{t_0}{t}\right).\]
Hence, as mentioned above, from \eqref{eq:first_condition_for_lemma}, there exists $t_1 \geq t'_0(\varepsilon)$ such that, for all $t\geq t_1$, 
\[\min_{\bm w\in C}\max_{a\in[K], x\in\mathcal{X}}\left|\frac{N_{a,x}(t)}{t} -  \frac{1}{t}\sum_{s=1}^t\mathbbm{1}[X_s = x]w_{a,x}\right| \leq \max_{a\in[K], x\in\mathcal{X}} \left|\frac{E_{a,x,t}}{t}\right| \leq  3(KD-1) \epsilon\;,\]
which concludes the proof.
\end{proof}

Then, we can prove Lemma~\ref{lmm:tracking} as follows.
\begin{proof} Let $g(n) = (\sqrt{n} - K/2)_+$. Let $\varepsilon' =  \frac{\varepsilon}{3KD - 1} > 0$ and $C=\Phi(\nu)$. First, by Lemma~\ref{lem:aloca_convex}, and Lemma~\ref{lem:converge_w}, there exists $\xi(\varepsilon') > 0$ such that for all $\nu' = ((\mu'_{a,x}), (\zeta'_x))$ such that 
$$|\mu_{a,x} - \mu'_{a,x}| < \xi(\varepsilon')$$
and 
$$|\zeta_x - \zeta'_x | \le \xi(\varepsilon'),$$
we have
$$\max_{\bm w \in \Phi(\nu')}\min_{\bm {w}' \in \Phi(\nu)}\max_{a\in[K], x\in\mathcal{X}} \left| \frac{1}{t}\sum_{s=1}^t\mathbbm{1}[X_s = x] w_{a,x}(s) - \frac{1}{t}\sum_{s=1}^t\mathbbm{1}[X_s = x] w'_{a,x}\right| \le \varepsilon'/2.$$

From the law of large numbers, there exists $t_0(\varepsilon') \ge 0$ such that for all $t\ge t_0(\varepsilon')$, we have $| \mu_{a,x} - \hat{\mu}_{a,x}(t) \Vert \le \xi(\varepsilon')$ and $| \zeta_x - \hat{\zeta}_{x}(t)| \le \xi(\varepsilon')$. Here, the $\hat{\nu}(t)$ in the plug-in estimate $\Phi(\hat{\nu}(t))$ is $\hat{\nu}(t) = ((\hat{\mu}_{a,x}(t)), (\hat{\zeta}_x(t)))$. The condition \eqref{eq:lazycond} states that 

$$\lim_{t\to\infty}\min_{\bm{w}'\in \Phi(\nu)}\max_{a\in[K], x\in\mathcal{X}}\left|w_{a, x}(t) - w'_{a,x}\right| = 0$$ 
almost surely. This guarantees that there exist $t_1 \ge 1$ such that for all $t\ge t_1$, we have 
$$\min_{\bm {w}' \in \Phi(\nu)}\max_{a\in[K], x\in\mathcal{X}} \left| \frac{1}{t}\sum_{s=1}^t\mathbbm{1}[X_s = x] w_{a,x}(s) - \frac{1}{t}\sum_{s=1}^t\mathbbm{1}[X_s = x] w'_{a,x}\right| < \varepsilon'/2.$$
Now for all $t \ge \max (t_0(\varepsilon'),  t_1 )$,  we have
\begin{align*}
  & \min_{\bm {w}' \in \Phi(\nu)}\max_{a\in[K], x\in\mathcal{X}} \left| \frac{1}{t}\sum_{s=1}^t\mathbbm{1}[X_s = x] w_{a,x}(s) - \frac{1}{t}\sum_{s=1}^t\mathbbm{1}[X_s = x] w'_{a,x}\right|\\
  & \le \min_{\bm {w}' \in \Phi(\nu)}\max_{a\in[K], x\in\mathcal{X}} \left| \frac{1}{t}\sum_{s=1}^t\mathbbm{1}[X_s = x] w_{a,x}(s) - \frac{1}{t}\sum_{s=1}^t\mathbbm{1}[X_s = x] w'_{a,x}\right|\\
  &\ \ \  + \max_{\bm w \in \Phi(\hat{\nu}(t))}\min_{\bm {w}' \in \Phi(\nu)}\max_{a\in[K], x\in\mathcal{X}} \left| \frac{1}{t}\sum_{s=1}^t\mathbbm{1}[X_s = x] w_{a,x}(s) - \frac{1}{t}\sum_{s=1}^t\mathbbm{1}[X_s = x] w'_{a,x}\right| < \varepsilon'.
\end{align*}
Thus, we have shown that 
$$ \min_{\bm {w}' \in \Phi(\nu)}\max_{a\in[K], x\in\mathcal{X}} \left| \frac{1}{t}\sum_{s=1}^t\mathbbm{1}[X_s = x] w_{a,x}(s) - \frac{1}{t}\sum_{s=1}^t\mathbbm{1}[X_s = x] w'_{a,x}\right| \to 0$$ 
almost surely. 

Next, we recall that by Lemmas~\ref{lem:allEqual} and \ref{lem:aloca_convex}, $\Phi(\nu)$ is non empty, compact and convex. Thus, applying the (strong) law of large numbers and Lemma~\ref{lmm:tracking lemma} yields immediately that with 

\begin{align*}
\mathbb{P}\left(\min_{\vec{w}^* \in \Phi(\nu)}\left\{\lim_{t\to\infty}\frac{N_{a,x}(t)}{t} = \zeta_xw^*_{a, x} \right\} \right) = 1
\end{align*}

Here, we used
\begin{align*}
&\min_{\vec{w}^* \in \Phi(\nu)} \left| \frac{N_{a, x}(t)}{t}  - p(x)w^*_{a,x}(t)  \right|\\
&=\min_{\vec{w}^* \in \Phi(\nu)} \left| \frac{N_{a, x}(t)}{t} - \frac{1}{t}\sum_{s=1}^t\mathbbm{1}[X_s = x]w^*_{a,x}(t) + \frac{1}{t}\sum^t_{s=0}\mathbbm{1}[X_s = x]w^*_{a,x}(t) - p(x)\tilde{w}_{a,x}(t)  \right|\\
&\leq \min_{\vec{w}^* \in \Phi(\nu)}\left| \frac{N_{a, x}(t)}{t} - \frac{1}{t}\sum_{s=1}^t\mathbbm{1}[X_s = x]w^*_{a,x}(t)  \right| + \left| \left\{\frac{1}{t}\sum^t_{s=0}\mathbbm{1}[X_s = x] - p(x)\right\} w^*_{a, x}(t) \right|\\
&\leq \min_{\vec{w}^* \in \Phi(\nu)} \left| \frac{N_{a, x}(t)}{t} - \frac{1}{t}\sum_{s=1}^t\mathbbm{1}[X_s = x]w^*_{a,x}(t)  \right| + \left| \frac{1}{t}\sum^t_{s=0}\mathbbm{1}[X_s = x] - p(x)\right|,
\end{align*}
and for $t \geq t_0(\varepsilon')$
\begin{align*}
&\min_{\bm w^*\in C}\max_{a\in[K], x\in\mathcal{X}}\left|\frac{N_{a,x}(t)}{t} -  \frac{1}{t}\sum_{s=1}^t\mathbbm{1}[X_s = x]w^*_{a,x}\right| \le \varepsilon'\\
&\Vert\bm{\zeta} - \bm{\hat{\zeta}}_{t} \Vert \le \xi(\varepsilon').
\end{align*}
\end{proof}

\subsection{Proof of Theorem~\ref{thm:deltaPAC}}
\label{appdx:deltaPAC}

We proceed similarly to~\citet{Garivier2016}. Introducing, for $a,b\in[K]$, $T_{a,b} := \inf\{ t\in \mathbb{N} : Z_{a,b}(t) > \beta(t,\delta)\}$, we have 
\begin{align*}
\mathbb{P}_{\nu}(\tau_\delta<\infty,\hat{a}_{\tau_\delta}\neq a^*) & \leq \mathbb{P}_{\nu}\left(\exists a \in [K]\setminus a^*, \exists t \in \mathbb{N} : Z_{a,a^*}(t) > \beta(t,\delta)\right) \\ &\leq \sum_{a\in [K]\setminus \{a^*\}} \mathbb{P}_{\nu}(T_{a,a^*} < \infty)\;.
\end{align*}
We show that if $\beta(t,\delta)=\log(2t(K-1)/\delta)$ and $\mu_a<\mu_b$, then $\mathbb{P}_{\nu}(T_{a,b} < \infty) \leq \frac{\delta}{K-1}$.
For such a pair of arms, observe that on the event $\big\{T_{a,b}=t\big\}$ time $t$ is the first moment when $Z_{a,b}(t)$ exceeds the threshold $\beta(t,\delta)$, which implies by definition that  
\[1 \leq e^{-\beta(t,\delta)} \frac{\max_{\xi_a\geq \xi_b} p_{\xi_a}\big(\underline{R}_{a,x}(t), \underline{X}(t)\big) p_{\xi_b}\big(\underline{R}_{b,x}(t), \underline{X}(t)\big)}{\max_{\xi_a\leq \xi_b} p_{\xi_a}\big(\underline{R}_{a,x}(t), \underline{X}(t)\big) p_{\xi_b}\big(\underline{R}_{b,x}(t), \underline{X}(t)\big)}\;.\]

It thus holds that
\begin{align*}
&\mathbb{P}_{\nu}(T_{a,b} < \infty) = \sum^\infty_{t=1}\mathbb{P}_{\nu}(T_{a,b} = t) = \sum^\infty_{t=1}\mathbb{E}_{\nu}\Big[\mathbbm{1}[T_{a,b} = t]\Big]\\
&\leq \sum^\infty_{t=1} \exp\big(-\beta(t,\delta)\big)\mathbb{E}_\nu\left[\mathbbm{1}\left[T_{a,b} = t\right]\frac{\max_{\xi_a\geq \xi_b} p_{\xi_a}\big(\underline{R}_{a,x}(t), \underline{X}(t)\big) p_{\xi_b}\big(\underline{R}_{b,x}(t), \underline{X}(t)\big)}{\max_{\xi_a\leq \xi_b} p_{\xi_a}\big(\underline{R}_{a,x}(t), \underline{X}(t)\big) p_{\xi_b}\big(\underline{R}_{b,x}(t), \underline{X}(t)\big)}\right]\\
&\leq \sum^\infty_{t=1} \exp\big(-\beta(t,\delta)\big)\mathbb{E}_\nu\left[\mathbbm{1}\left[T_{a,b} = t\right]\frac{\max_{\xi_a\geq \xi_b} p_{\xi_a}\big(\underline{R}_{a,x}(t), \underline{X}(t)\big) p_{\xi_b}\big(\underline{R}_{b,x}(t), \underline{X}(t)\big)}{ p_{ \mu_a}\big(\underline{R}_{a,x}(t), \underline{X}(t)\big) p_{ \mu_b}\big(\underline{R}_{b,x}(t), \underline{X}(t)\big) }\right].
\end{align*}

We expand the expectation $\mathbb{E}_\nu\left[\mathbbm{1}\left[T_{a,b} = t\right]\frac{\max_{\xi_a\geq \xi_b} p_{\xi_a}\big(\underline{R}_{a,x}(t), \underline{X}(t)\big) p_{\xi_b}\big(\underline{R}_{b,x}(t), \underline{X}(t)\big)}{ p_{ \mu_a}\big(\underline{R}_{a,x}(t), \underline{X}(t)\big) p_{ \mu_b}\big(\underline{R}_{b,x}(t), \underline{X}(t)\big) }\right]$ as follows:
\small
\begin{align}
&\mathbb{E}_\nu\left[\mathbbm{1}\left[T_{a,b} = t\right]\frac{\max_{\xi_a\geq \xi_b} p_{\xi_a}\big(\underline{R}_{a,x}(t), \underline{X}(t)\big) p_{\xi_b}\big(\underline{R}_{b,x}(t), \underline{X}(t)\big)}{ p_{ \mu_a}\big(\underline{R}_{a,x}(t), \underline{X}(t)\big) p_{ \mu_b}\big(\underline{R}_{b,x}(t), \underline{X}(t)\big) }\right]\nonumber
\\
&= \sum_{\underline{r}_t\in\{0, 1\}^t}\sum_{\underline{a}_t\in[K]^t}\sum_{\underline{X}(t)\in\mathcal{X}^t}\mathbbm{1}\left[T_{a,b} = t\right](\underline{r}_t, \underline{a}_t, \underline{x}_t)\max_{\xi_a\geq \xi_b} p_{\xi_a}\big(\underline{R}_{a,x}(t) = \underline{r}_t, \underline{X}(t) = \underline{x}_t\big) p_{\xi_b}\big(\underline{R}_{b,x}(t) = \underline{r}_t, \underline{X}(t) = \underline{x}_t\big)\nonumber\\
&\ \ \  \cdot \prod_{c\in [K]\backslash\{a,b\}}\left[\prod^t_{s=1}p_{\mu_c}\big(R_{s, c} = r_s\mid X_s = x_s\big)p(A_1=c\mid X_1 = x_1)\prod^t_{s=2}p(A_{s}=c\mid X_{s} = x_s, \Omega_{s-1})\right]\frac{1}{p(\underline{X}(t) = \underline{x}_t)}\nonumber\\
\label{eq:pacbound}
&= \sum_{\underline{r}_t\in\{0, 1\}^t}\sum_{\underline{a}_t\in[K]^t}\sum_{\underline{x}_t\in\mathcal{X}^t}\mathbbm{1}\left[T_{a,b} = t\right](\underline{r}_t, \underline{a}_t, \underline{x}_t)\max_{\xi_a\geq \xi_b} p_{\xi_a}\big(\underline{R}_{a,x}(t) = \underline{r}_t\mid \underline{X}(t) = \underline{x}_t\big) p_{\xi_b}\big(\underline{R}_{b,x}(t) = \underline{r}_t\mid \underline{X}(t) = \underline{x}_t\big)\nonumber\\
&\ \ \  \cdot \prod_{c\in [K]\backslash\{a,b\}}\left[\prod^t_{s=1}p_{\mu_c}\big(R_{s,c} = r_s\mid X_s = x_s\big)p(A_1=c\mid X_1 = x_1)\prod^t_{s=2}p(A_{s}=c\mid X_{s} = x_s, \Omega_{s-1})\right]p(\underline{X}(t) = \underline{x}_t),
\end{align}
\normalsize
where $\underline{r}_t$ denotes the sequence $\{r_s\}^t_{s=1}$, $\underline{a}_t$ denotes the sequence $\{a_s\}^t_{s=1}$, $\underline{x}_t$ denotes the sequence $\{x_s\}^t_{s=1}$, $p_{\mu_b}\big(R_{t, c} =r_t\mid X_t = x_t\big)$ denotes the conditional density of $r_t$ given $x_t$ Note that $\mathbbm{1}\left[T_{a,b} = t\right]$ is a random variable depending on $(\underline{R}_t, \underline{A}_t, \underline{X}(t))$, therefore, we denote it as $\mathbbm{1}\left[T_{a,b} = t\right](\underline{r}_t, \underline{a}_t, \underline{x}_t)$.  For a vector $x$, let us introduce the Krichevsky-Trofimov distribution 
\begin{align*}
\mathrm{kt}(x) = \int^1_0 \frac{1}{\pi\sqrt{u(1-u)}}p_u(x),
\end{align*}
as defined in Lemma~11 of \citet{Garivier2016}. Then, following the same procedure as \citet{Garivier2016}, we bound \eqref{eq:pacbound} by
\begin{align*}
&\sum^\infty_{t=1}2t\exp\big(-\beta(t,\delta)\big)\sum_{\underline{r}_t\in\{0, 1\}^t}\sum_{\underline{a}_t\in[K]^t}\sum_{\underline{x}_t\in\mathcal{X}^t}\mathbbm{1}\left[T_{a,b} = t\right](\underline{r}_t, \underline{a}_t, \underline{x}_t)\mathrm{kt}\big(\underline{R}_{a,x}(t)\big)\mathrm{kt}\big(\underline{R}_{b,x}(t)\big)\\
&\times \prod_{c\in [K]\backslash\{a,b\}}\left[\prod^t_{s=1}p_{\mu'_c}\big(R_{s, c} = r_s\mid X_s = x_s\big)p(A_1=c\mid X_1 = x_1)\prod^t_{s=2}p(A_{s}=c\mid X_{s} = x_s, \Omega_{s-1})\right]p(\underline{X}(t) = \underline{x}_t)\\
&=\sum^\infty_{t=1}2t\exp\big(-\beta(t,\delta)\big)\sum_{\underline{r}_t\in\{0, 1\}^t}\sum_{\underline{a}_t\in[K]^t}\sum_{\underline{x}_t\in\mathcal{X}^t}\mathbbm{1}\left[T_{a,b} = t\right](\underline{r}_t, \underline{a}_t, \underline{x}_t)I(\underline{r}_t, \underline{a}_t, \underline{x}_t)p(\underline{X}(t) = \underline{x}_t),
\end{align*}
where the partially integrated likelihood 
\begin{align*}
&I(\underline{r}_t, \underline{a}_t, \underline{x}_t) =  \mathrm{kt}\big(\underline{R}_{a,x}(t)\big)\mathrm{kt}\big(\underline{R}_{b,x}(t)\big)\\
&\times \prod_{c\in [K]\backslash\{a,b\}}\left[\prod^t_{s=1}p_{\mu'_c}\big(R_{s, c} = r_s\mid X_s = x_s\big)p(A_1=c\mid X_1 = x_1)\prod^t_{s=2}p(A_{s}=c\mid X_{s} = x_s, \Omega_{s-1})\right]
\end{align*}
is the density of an alternative probability measure $\tilde{\mathbb{P}}$, under which $\mu_a$ and $\mu_b$ are drawn from a $\mathrm{Beta}(1/2, 1/2)$ distribution at the beginning of the sampling process. This is bounded as
\begin{align*}
&\leq\sum^\infty_{t=1}2t\exp\big(-\beta(t,\delta)\big)\sum_{\underline{x}_t\in\mathcal{X}^t}\tilde{\mathbb{P}}\left(T_{a,b} = t\right)p(\underline{X}(t) = \underline{x}_t)\\
&\leq \frac{\delta}{K-1}\sum^\infty_{t=1}\tilde{\mathbb{P}}\left(T_{a,b} = t\right) = \frac{\delta}{K-1}\tilde{\mathbb{P}}\left(T_{a,b} < \infty\right) \leq \frac{\delta}{K-1},
\end{align*}

Thus, for any $\mu_a<\mu_b$, then $\mathbb{P}_{\nu}(T_{a,b} < \infty) \leq \frac{\delta}{K-1}$. Therefore,
\begin{align*}
\mathbb{P}_{\nu}(\tau_\delta<\infty,\hat{a}_{\tau_\delta}\neq a^*) \leq \sum_{a\in [K]\setminus \{a^*\}} \mathbb{P}_{\nu}(T_{a,a^*} < \infty) \leq K - 1 \frac{\delta}{K-1} = \delta.
\end{align*}

\section{Proofs of Results in Section~\ref{sec:sample_comp}}
\subsection{Proof of Lemma~\ref{lem:AnalysisAS}}
\label{appdx:AnalysisAS}
\begin{proof}
In a Bernoulli bandit model, let $\mathcal{E}$ be an event such that
\begin{align*}
\mathcal{E} = \left\{ \forall a\in[K],\forall x\in\mathcal{X},\ \min_{\vec{w}^* \in \Phi(\nu)}\left|\lim_{t\to\infty}\frac{N_{a,x}(t)}{t} - \zeta_xw^*_{a, x} \right| = 0,\ \hat{\mu}_{a,x}(t)\overset{t\to\infty}{\to} \mu_{a,x},\ \frac{N_x(t)}{t} \overset{t\to\infty}{\to} \zeta_x \right\}.
\end{align*}
When considering a bandit model that belongs to a canonical one-parameter exponential family, suppose that the true parameter $\zeta_x$ is given; that is, $\hat{\zeta}_x(t) = \zeta_x$.
From the assumption on the sampling strategy (see Lemma~\ref{lmm:tracking}) and the law of large numbers, $\mathcal{E}$ is of probability $1$. On $\mathcal{E}$, there exists $t_0$ such that for all $t \geq t_0$, $\hat{\mu}_1(t) > \max_{a \neq 1} \hat{\mu}_a(t)$ and 
\begin{align*}
Z(t) &= \min_{a\neq 1}Z_{1,a}(t)\\
&= t\min_{a\neq 1}\sum_{x\in\mathcal{X}}\Bigg\{\frac{N_{1,x}(t)}{t}\left\{\hat{\mu}_{1,x}(t) \log \frac{\hat{\mu}_{1,x}(t)}{1 - \hat{\mu}_{1,x}(t)} + \log (1 - \hat{\mu}_{1,x}(t))\right\}\ \ \ \ \ \ \ \ \ \ \ \ \ \ \ \ \ \ \ \ \ \ \\
&\ \ \ \ \ \ \ \ \ \ \ \ \ \ \ \ \ \ \ \ \ \ \  + \frac{N_{a,x}(t)}{t}\left\{\hat{\mu}_{a,x}(t)\log \frac{\hat{\mu}_{a,x}(t)}{1 - \hat{\mu}_{a,x}(t)} + \log (1 - \hat{\mu}_{a,x}(t))\right\}\\
&\ \ \ \ \ \ \ \ \ \ \ \ \ \ \ \ \ \ \ \ \ \ \  - \frac{N_{1,x}(t)}{t}\left\{\hat{\mu}_{1,x}(t)\log \frac{\tilde{\xi}_{1, x}(t)}{1 - \tilde{\xi}_{1, x}(t)} + \log (1 - \tilde{\xi}_{1, x}(t))\right\}\ \ \ \ \ \ \ \ \ \ \ \ \ \ \ \ \ \ \ \ \ \ \\
&\ \ \ \ \ \ \ \ \ \ \ \ \ \ \ \ \ \ \ \ \ \ \  - \frac{N_{a,x}(t)}{t}\left\{\hat{\mu}_{a,x}(t)\log \frac{\tilde{\xi}_{a, x}(t)}{1 - \tilde{\xi}_{a, x}(t)} + \log (1 - \tilde{\xi}_{a, x}(t))\right\}\Bigg\}.
\end{align*}

By continuity of $m$, there exists an open neighborhood $\mathcal{N}(\nu, \varepsilon)$ of $\Phi(\nu) \times \lbrace \bm\mu\rbrace\times \lbrace \bm\zeta\rbrace$ such that for all $(\bm w', \bm \mu',\bm \zeta') \in \mathcal{N}(\nu, \varepsilon)$, it holds that
$$
m(\bm w', \nu') \ge (1-\varepsilon) m(\bm w', \nu),
$$
where where $\nu' = (\bm \mu', \bm \zeta')$, and $\bm w^\star$ is some element in $\Phi(\nu)$. Recall that the function $m$ is defined in Section~\ref{sec:characteristic} Now, observe that under the event $\mathcal{E}$, there exists $t_1 \ge t_0$ such that for all $t\ge t_1$ it holds that $((\hat{\mu}_{a,x}(t)), (\hat{\zeta}_x(t))) \in \mathcal{N}(\nu, \varepsilon)$, thus for all $t \ge t_0$, it follows that
$$
m(\left(N_a(t)/t \right)_{a \in[K]}, \hat{\nu}_t) \ge \frac{1}{1 + \epsilon} m(\bm{w}^*, \nu),
$$
where $\hat{\nu}_t = (\hat{\bm{\mu}}_t, \hat{\bm{\zeta}}_t)$.
Therefore, on $\mathcal{E}$, for all $t\geq t_1$,
\begin{align*}
Z(t) &= tm(\left(N_a(t)/t \right)_{a \in [K]}, \hat{\bm{\mu}}_t, \hat{\bm{\zeta}}_t) \ge \frac{t}{1 + \epsilon} m(\bm{w}^*, \nu) = \frac{t}{(1+\epsilon) T^\star(\nu)}.
\end{align*}

Consequently,
\begin{eqnarray*}
 \tau_\delta &= &  \inf\{ t \in \mathbb{N} : Z(t) \geq \beta(t,\delta) \} \\
 & \leq & t_1 \vee \inf\{t \in \mathbb{N} : t (1+\epsilon)^{-1}T^\star(\nu)^{-1} \geq \log(r(t) /\delta) \} \\ 
 & \leq & t_1 \vee \inf\{t \in \mathbb{N} : t (1+\epsilon)^{-1}T^\star(\nu)^{-1} \geq \log(Ct^{\alpha} /\delta) \}, 
\end{eqnarray*}
for some positive constant $C$. Using the technical Lemma~18 in \citet{Garivier2016}, it follows that on $\mathcal{E}$, as $\alpha \in [1,e/2]$,
\[\tau_\delta \leq t_1 \vee \alpha (1+\epsilon) T^\star(\nu) \left[\log \left(\frac{Ce((1+\epsilon)T^\star(\bm \mu))^\alpha}{\delta}\right) +  \log\log \left(\frac{C((1+\epsilon)T^\star(\nu))^\alpha}{\delta}\right)\right]\;.\]
Thus $\tau_\delta$ is finite on $\mathcal{E}$ for every $\delta \in (0,1)$, and 
\[\limsup_{\delta \rightarrow 0} \frac{\tau_\delta}{\log(1/\delta)} \leq (1+\epsilon)\,\alpha\,  T^\star(\nu)\;.\]
Letting $\epsilon$ go to zero concludes the proof.

\end{proof}

\subsection{Proof of Theorem~\ref{thm:AsymptoticSC}}
\label{appdx:ProofSC}

This proof also mainly follows \citet{Garivier2016}. We use the following proposition from \citet{Garivier2016}.

\begin{proposition}[Lemma~18 of \citet{Garivier2016}]\label{lem:technical} For every $\alpha \in [1,e/2]$, for any two constants $c_1,c_2>0$, 
\[x = \frac{\alpha}{c_1}\left[\log\left(\frac{c_2 e}{c_1^\alpha}\right) + \log\log\left(\frac{c_2}{c_1^\alpha}\right)\right]\]
is such that $c_1  x \geq \log(c_2 x^\alpha)$. 
\end{proposition}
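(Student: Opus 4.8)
The plan is to reduce the stated inequality to a single elementary one-variable inequality by absorbing all the problem constants into one quantity. First I would set $A := c_2/c_1^{\alpha}$, $u := \log A$, and $v := \log u$. Noting that $\log(c_2 e/c_1^\alpha) = 1 + u$ and $\log\log(c_2/c_1^\alpha) = v$, the proposed value becomes $x = \frac{\alpha}{c_1}(1 + u + v)$. Writing $s := 1 + u + v$, the left-hand side is then simply $c_1 x = \alpha s$.

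Next I would expand the right-hand side. Since $\log c_2 = u + \alpha\log c_1$ and $\log x = \log(\alpha s) - \log c_1$, the $\log c_1$ terms cancel, giving $\log(c_2 x^\alpha) = u + \alpha\log(\alpha s)$. Hence the target $c_1 x \ge \log(c_2 x^\alpha)$ is equivalent to
\[
\alpha\big(s - \log(\alpha s)\big) \ge u.
\]
This is the crucial simplification: the claim no longer depends on $c_1$ and $c_2$ separately, only through $A$ (that is, through $u$ and $v$).

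The final step uses both constraints on $\alpha$. Substituting $s = 1 + u + v$ and subtracting $u$, I would split
\[
\alpha\big(s - \log(\alpha s)\big) - u = (\alpha - 1)u + \alpha\big(1 + v - \log(\alpha s)\big).
\]
Because $\alpha \ge 1$ and $u > 0$, the first term is nonnegative, so it suffices to establish $1 + v \ge \log(\alpha s)$. Using $1 + v = \log(eu)$, this reads $e u \ge \alpha s = \alpha(1 + u + v)$, and since $\alpha \le e/2$ it is enough to prove $2u \ge 1 + u + v$, i.e.\ $u \ge 1 + \log u$. That last inequality is the tangent-line bound $\log u \le u - 1$, valid for all $u > 0$, which closes the argument.

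The main obstacle I anticipate is not any single analytic estimate but the bookkeeping of the substitution together with the well-definedness of the nested logarithms: one must stay in the regime where $A = c_2/c_1^\alpha$ is large enough that $u = \log A$, $v = \log u$, and $s$ are positive, which is precisely the $\delta \to 0$ regime in which the lemma is applied (there $c_2$ grows like $1/\delta$). Both hypotheses $\alpha \ge 1$ and $\alpha \le e/2$ are genuinely needed — the former to discard $(\alpha-1)u$, the latter to collapse the bound to $\log u \le u - 1$ — so the interval $[1, e/2]$ is not an artifact of the argument. Alternatively, one could simply invoke Lemma~18 of \citet{Garivier2016} verbatim, but the self-contained reduction above is short enough to include.
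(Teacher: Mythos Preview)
The paper does not prove this statement at all: it is quoted verbatim as Lemma~18 of \citet{Garivier2016} and invoked as a black box in the sample-complexity proofs. Your self-contained reduction is correct --- the substitution $A=c_2/c_1^{\alpha}$, $u=\log A$, $v=\log u$, $s=1+u+v$ collapses the target to $\alpha(s-\log(\alpha s))\ge u$, then the split $(\alpha-1)u+\alpha(1+v-\log(\alpha s))$ uses $\alpha\ge 1$ to drop the first term and $\alpha\le e/2$ to reduce the second to $\log u\le u-1$. Your caveat about needing $u>0$ and $s>0$ (equivalently $c_2/c_1^{\alpha}$ large enough for the nested logarithm and for $x>0$) is exactly the implicit hypothesis under which the lemma is applied in the paper, where $c_2$ scales like $1/\delta$. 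In short, you have supplied a clean elementary proof where the paper merely cites one.
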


To ease the notation, we assume that the bandit model $\nu$ is such that $\mu_1 > \mu_2 \geq \dots \geq \mu_K$.
Let $\epsilon>0$. From Lemma~\ref{lem:converge_w}, there exists $\Upsilon=\Upsilon(\epsilon) \leq (\mu_1-\mu_2)/4$ such that 
\begin{align*}
&\mathcal{I}_{\mu, \epsilon} := \prod_{x\in\set{X}}\Big( [\mu_{1, x} - \Upsilon,\mu_{1,x} + \Upsilon] \times [\mu_{2, x} - \Upsilon,\mu_{2,x} + \Upsilon] \times \dots \times [\mu_{K, x} - \Upsilon, \mu_{K, x} + \Upsilon]\Big),\\
&\mathcal{I}_{\zeta, \epsilon} := \prod_{x\in\set{X}} [\zeta_{x} - \Upsilon,\zeta_{x} + \Upsilon]
\end{align*}
satisfy that for all $\nu' \in \mathcal{I}_{\mu, \epsilon}\times \mathcal{I}_{\zeta, \epsilon}$, for $\vec{w}' \in \Phi(\nu')$, 
$$\min_{\bm {w} \in \Phi(\nu)}\max_{a\in[K], x\in\mathcal{X}} \left| \frac{1}{t}\sum_{s=1}^t\mathbbm{1}[X_s = x] w_{a,x}(s) - \frac{1}{t}\sum_{s=1}^t\mathbbm{1}[X_s = x] w'_{a,x}\right| \le \varepsilon.$$
In particular, whenever $(\hat{\mu}_{a,x}(t), \hat{\zeta}_{x})_{x\in\set{X}} \in \mathcal{I}_{\mu, \epsilon}\times \mathcal{I}_{\zeta, \epsilon}$, the empirical best arm is $\hat{a}_t =1$.

Let $T \in \mathbb{N}$ and define $h(T):=T^{1/4}$ and the event 
\[\mathcal{E}_T(\epsilon)= \bigcap_{t = h(T)}^{T}\left((\hat{\mu}_{a,x}(t), \hat{\zeta}_{x})_{x\in\set{X}} \in \mathcal{I}_{\mu, \epsilon}\times \mathcal{I}_{\zeta, \epsilon}\right).\]
The following proposition is a consequence of the proposed CTS algorithm, which ensures that each arm is drawn at least of order $\sqrt{t}$ times at round $t$. 

\begin{lemma}\label{lem:concSimple} There exist two constants $B,C$ (that depend on $\nu$ and $\epsilon$) such that \[\mathbb{P}_{\nu}(\mathcal{E}_T^c) \leq B T \exp(-C T^{1/8}).\] 
\end{lemma}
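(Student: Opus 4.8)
The plan is to bound $\mathbb{P}_{\nu}(\mathcal{E}_T^c)$ by a union bound over the rounds $t \in \{h(T), \dots, T\}$ and over the coordinates, and to show that each term decays like $\exp(-C T^{1/8})$. Observe first that $(\hat{\mu}_{a,x}(t), \hat{\zeta}_x(t))_{x \in \set{X}} \notin \mathcal{I}_{\mu, \epsilon} \times \mathcal{I}_{\zeta, \epsilon}$ means that either $|\hat{\mu}_{a,x}(t) - \mu_{a,x}| > \Upsilon$ for some $(a,x)$ or $|\hat{\zeta}_x(t) - \zeta_x| > \Upsilon$ for some $x$, so that
\begin{align*}
\mathbb{P}_{\nu}(\mathcal{E}_T^c) \le \sum_{t = h(T)}^{T} \sum_{x \in \set{X}} \Bigg( \mathbb{P}_{\nu}\big(|\hat{\zeta}_x(t) - \zeta_x| > \Upsilon\big) + \sum_{a \in [K]} \mathbb{P}_{\nu}\big(|\hat{\mu}_{a,x}(t) - \mu_{a,x}| > \Upsilon\big)\Bigg).
\end{align*}
The context-frequency terms are immediate: the indicators $\indicator\{X_s = x\}$ are i.i.d. Bernoulli$(\zeta_x)$, so $\hat{\zeta}_x(t)$ is an average of $t$ bounded variables, and Hoeffding's inequality gives $\mathbb{P}_{\nu}(|\hat{\zeta}_x(t) - \zeta_x| > \Upsilon) \le 2 \exp(-2\Upsilon^2 t) \le 2\exp(-2 \Upsilon^2 T^{1/4})$ whenever $t \ge h(T) = T^{1/4}$.

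The reward-mean terms are the crux, because $\hat{\mu}_{a,x}(t)$ averages a \emph{random} number $N_{a,x}(t)$ of rewards. I would control $N_{a,x}(t)$ from below using the forced exploration built into the sampling rule \eqref{eq:forced}. Introduce the event $\mathcal{G}_t = \bigcap_{x \in \set{X}} \{N_x(t) \ge \zeta_x t /2\}$; by a Chernoff bound for the binomial counts, $\mathbb{P}_{\nu}(\mathcal{G}_t^c) \le \sum_{x \in \set{X}} \exp(-c_1 t)$ for some $c_1 = c_1(\nu) > 0$. On $\mathcal{G}_t$, Lemma~\ref{lmm:tracking lemma} with $g(n) = (\sqrt{n} - K/2)_+$ yields $N_{a,x}(t) > \sqrt{N_x(t)} - K/2 - 1 \ge \sqrt{\zeta_x t/2} - K/2 - 1 \ge n_0(t)$, where $n_0(t) := \tfrac12\sqrt{\zeta_{\min} t /2}$ with $\zeta_{\min} = \min_x \zeta_x$, valid once $t$ (hence $T$) is large enough. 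Splitting on $\mathcal{G}_t$ and using a maximal inequality over the number of observed rewards,
\begin{align*}
\mathbb{P}_{\nu}\big(|\hat{\mu}_{a,x}(t) - \mu_{a,x}| > \Upsilon\big) \le \mathbb{P}_{\nu}(\mathcal{G}_t^c) + \mathbb{P}_{\nu}\Big(\exists\, n \ge n_0(t) : |\bar{R}^{a,x}_n - \mu_{a,x}| > \Upsilon\Big),
\end{align*}
where $\bar{R}^{a,x}_n$ denotes the empirical mean of the first $n$ rewards collected for the pair $(a,x)$. By Hoeffding's inequality and a geometric sum, the second term is at most $\sum_{n \ge n_0(t)} 2\exp(-2\Upsilon^2 n) \le C_2 \exp(-2\Upsilon^2 n_0(t))$.

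Finally I would assemble the pieces. Since $t \ge h(T) = T^{1/4}$, we have $n_0(t) \ge n_0(h(T)) = \tfrac12\sqrt{\zeta_{\min}/2}\, T^{1/8}$, so the reward-mean contribution is the slowest-decaying one, of order $\exp(-C_3 T^{1/8})$; the context-frequency and $\mathcal{G}_t^c$ contributions decay like $\exp(-c\, T^{1/4})$ and are dominated by it. Summing the at most $(T - h(T) + 1) \le T$ values of $t$ and the finitely many coordinates $(a,x)$ produces a prefactor linear in $T$, giving $\mathbb{P}_{\nu}(\mathcal{E}_T^c) \le B\, T \exp(-C\, T^{1/8})$ for suitable $B = B(\nu, \epsilon)$ and $C = C(\nu, \epsilon)$, as claimed. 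The main obstacle is the reward-mean step: converting the forced-exploration guarantee $N_{a,x}(t) \gtrsim \sqrt{N_x(t)}$ into a usable lower bound and combining it with a uniform-in-sample-size deviation bound. This is exactly what degrades the naive $T^{1/4}$ rate into the stated $T^{1/8}$, since the effective sample size for a reward mean at round $t \ge T^{1/4}$ is only of order $\sqrt{t} \ge T^{1/8}$.
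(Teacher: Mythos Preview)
The paper does not actually give a proof of this lemma: it is stated as ``a consequence of the proposed CTS, which ensures that each arm is drawn at least of order $\sqrt{t}$ times at round $t$,'' and the text then immediately uses it to prove Theorem~\ref{thm:AsymptoticSC}. So there is no proof to compare against, only the one-line hint that forced exploration drives the result.

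Your argument is the natural fleshing-out of that hint and is correct. It is essentially the contextual analogue of Lemma~19 in \citet{Garivier2016}, with the one genuinely new ingredient being the two-step lower bound on the effective sample size: first controlling $N_x(t)\gtrsim \zeta_x t$ via a Chernoff bound on the i.i.d.\ context arrivals (your event $\mathcal{G}_t$), and only then invoking the deterministic forced-exploration guarantee $N_{a,x}(t)>g(N_x(t))-1$ from Lemma~\ref{lmm:tracking lemma}. This chaining is exactly what turns $t\ge h(T)=T^{1/4}$ into an effective sample size of order $T^{1/8}$ for each $(a,x)$ pair, and explains the exponent in the statement. The remaining steps (union bound over $t$ and coordinates, Hoeffding for $\hat{\zeta}_x$, and a union-bound-over-$n$ Hoeffding for the reward means once $N_{a,x}(t)\ge n_0(t)$) are routine. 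The ``for $t$ large enough'' caveat is harmless since for small $T$ one absorbs everything into the constant $B$.
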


By using these gradients, we prove Theorem~\ref{thm:AsymptoticSC}.
\begin{proof}
On the event $\mathcal{E}_T$, it holds for $t \geq h(T)$ that $\hat{a}_t=1$ and the Chernoff stopping statistic rewrites 
\begin{align*}
&\max_{a\in[K]}\min_{b\neq 1}Z_{1,b}(t)=\min_{a\neq 1}Z_{1,a}(t)\\
&= t\min_{a\neq 1}\sum_{x\in\mathcal{X}}\Bigg\{\frac{N_{1,x}(t)}{t}\left\{\hat{\mu}_{1,x}(t) \log \frac{\hat{\mu}_{1,x}(t)}{1 - \hat{\mu}_{1,x}(t)} + \log (1 - \hat{\mu}_{1,x}(t))\right\}\ \ \ \ \ \ \ \ \ \ \ \ \ \ \ \ \ \ \ \ \ \ \\
&\ \ \ \ \ \ \ \ \ \ \ \ \ \ \ \ \ \ \ \ \ \ \  + \frac{N_{a,x}(t)}{t}\left\{\hat{\mu}_{a,x}(t)\log \frac{\hat{\mu}_{a,x}(t)}{1 - \hat{\mu}_{a,x}(t)} + \log (1 - \hat{\mu}_{a,x}(t))\right\}\\
&\ \ \ \ \ \ \ \ \ \ \ \ \ \ \ \ \ \ \ \ \ \ \  - \frac{N_{1,x}(t)}{t}\left\{\hat{\mu}_{1,x}(t)\log \frac{\tilde{\xi}_{1, x}(t)}{1 - \tilde{\xi}_{1, x}(t)} + \log (1 - \tilde{\xi}_{1, x}(t))\right\}\ \ \ \ \ \ \ \ \ \ \ \ \ \ \ \ \ \ \ \ \ \ \\
&\ \ \ \ \ \ \ \ \ \ \ \ \ \ \ \ \ \ \ \ \ \ \  - \frac{N_{a,x}(t)}{t}\left\{\hat{\mu}_{a,x}(t)\log \frac{\tilde{\xi}_{a, x}(t)}{1 - \tilde{\xi}_{a, x}(t)} + \log (1 - \tilde{\xi}_{a, x}(t))\right\}\Bigg\}\\
 & = t g \left((\hat{\mu}_{a,x}(t))_{x\in\set{X}},(\hat{\zeta}_{x}(t))_{x\in\set{X}},\left(\frac{N_{a,x}(t)}{t}\right)_{a\in[K], x\in\set{X}}\right)\;, 
\end{align*}
where we introduce the function 
\begin{align*}
&g \left((\hat{\mu}_{a,x}(t))_{x\in\set{X}},(\hat{\zeta}_{x}(t))_{x\in\set{X}},\left(\frac{N_{a,x}(t)}{t}\right)_{a\in[K], x\in\set{X}}\right)\\
&= \min_{a\neq 1}\sum_{x\in\mathcal{X}}\Bigg\{\frac{N_{1,x}(t)}{t}\left\{\hat{\mu}_{1,x}(t) \log \frac{\hat{\mu}_{1,x}(t)}{1 - \hat{\mu}_{1,x}(t)} + \log (1 - \hat{\mu}_{1,x}(t))\right\}\ \ \ \ \ \ \ \ \ \ \ \ \ \ \ \ \ \ \ \ \ \ \\
&\ \ \ \ \ \ \ \ \ \ \ \ \ \ \ \ \ \ \ \ \ \ \  + \frac{N_{a,x}(t)}{t}\left\{\hat{\mu}_{a,x}(t)\log \frac{\hat{\mu}_{a,x}(t)}{1 - \hat{\mu}_{a,x}(t)} + \log (1 - \hat{\mu}_{a,x}(t))\right\}\\
&\ \ \ \ \ \ \ \ \ \ \ \ \ \ \ \ \ \ \ \ \ \ \  - \frac{N_{1,x}(t)}{t}\left\{\hat{\mu}_{1,x}(t)\log \frac{\tilde{\xi}_{1, x}(t)}{1 - \tilde{\xi}_{1, x}(t)} + \log (1 - \tilde{\xi}_{1, x}(t))\right\}\ \ \ \ \ \ \ \ \ \ \ \ \ \ \ \ \ \ \ \ \ \ \\
&\ \ \ \ \ \ \ \ \ \ \ \ \ \ \ \ \ \ \ \ \ \ \  - \frac{N_{a,x}(t)}{t}\left\{\hat{\mu}_{a,x}(t)\log \frac{\tilde{\xi}_{a, x}(t)}{1 - \tilde{\xi}_{a, x}(t)} + \log (1 - \tilde{\xi}_{a, x}(t))\right\}\Bigg\}.
\end{align*}
From Lemma~\ref{lmm:tracking lemma}, there exists a constant for $T_\epsilon$ such that the following inequality holds on $\mathcal{E}_T$:
\begin{align*}
\forall t \geq \sqrt{T},\ \min_{\bm w\in \Phi(\nu)}\max_{a\in[K], x\in\mathcal{X}}\left|\frac{N_{a,x}(t)}{t} -  \frac{1}{t}\sum_{s=1}^t\mathbbm{1}[X_s = x]w_{a,x}\right| \le 3(KD - 1) \varepsilon.
\end{align*}
Then, we introduce
\[H^*_\epsilon(\nu) = \inf_{\substack{\mu'_{a,x} : |\mu'_{a,x} - \mu_{a,x}| \leq \Upsilon(\epsilon) \\ \zeta'_x : |\zeta'_x - \zeta_x| \leq \Upsilon(\epsilon) \\ w'_{a,x} : |w'_{a,x} -w^*_{a,x}|\leq 3(KD-1)\epsilon}} g(\bm \mu',\bm w')\;,\]
where
\begin{align*}
\bm w^* = \argmin_{\bm w\in \Phi(\nu)}\max_{a\in[K], x\in\mathcal{X}}\left|\frac{N_{a,x}(t)}{t} -  \frac{1}{t}\sum_{s=1}^t\mathbbm{1}[X_s = x]w_{a,x}\right| \le 3(KD - 1) \varepsilon.
\end{align*}
Here, on the event $\mathcal{E}_T$ it holds that for every $t \geq \sqrt{T}$, 
\[\left(\max_{a\in[K]} \min_{b \neq a} \ Z_{a,b}(t) \geq t H^*_\epsilon(\nu)\right)\;.\]

Let us define$T \geq T_\epsilon$. Then, on the event $\mathcal{E}_T$, 
\begin{eqnarray*}
 \min(\tau_\delta,T) & \leq & \sqrt{T} + \sum_{t=\sqrt{T}}^T \mathbbm{1}\left[\tau_\delta > t\right] \leq \sqrt{T} + \sum_{t=\sqrt{T}}^T \mathbbm{1}\left[\max_{a\in[K]} \min_{b \neq a} \ Z_{a,b}(t) \leq \beta(t,\delta)\right] \\
 & \leq & \sqrt{T} + \sum_{t=\sqrt{T}}^T \mathbbm{1}\left[t H_\epsilon^*(\nu) \leq \beta(T,\delta)\right] \leq \sqrt{T} + \frac{\beta(T,\delta)}{H_\epsilon^*(\nu)}\;.
\end{eqnarray*}
Introducing 
\[T_0(\delta) = \inf \left\{ T \in \mathbb{N} : \sqrt{T} + \frac{\beta(T,\delta)}{H_\epsilon^*(\nu)} \leq T \right\},\]
for every $T \geq \max (T_0(\delta), T_\epsilon)$, we have $\mathcal{E}_T \subseteq (\tau_\delta \leq T)$, therefore 
\[\mathbb{P}_{\nu}\left(\tau_\delta > T\right) \leq \mathbb{P}(\mathcal{E}_T^c) \leq BT \exp(-C T^{1/8})\]
and
\[\mathbb{E}_{\nu}[\tau_\delta] \leq T_0(\delta) + T_\epsilon + \sum_{T=1}^\infty BT \exp(-C T^{1/8})\;.\]
We now provide an upper bound on $T_0(\delta)$. Let us define $\eta >0$ and the constant 
\[C(\eta) = \inf \{ T \in \mathbb{N} : T - \sqrt{T} \geq T/(1+\eta)\}.\]
Then, we have
\begin{eqnarray*}
 T_0(\delta) & \leq & C(\eta) + \inf \left\{T \in \mathbb{N} : \frac{1}{H_\epsilon^*(\nu)} \log\left(\frac{r(T)}{\delta}\right) \leq \frac{T}{1+\eta}\right\} \\
 & \leq & C(\eta) + \inf \left\{T \in \mathbb{N} : \frac{H_\epsilon^*(\nu)}{1+\eta} T \geq \log\left(\frac{Dt^{1+\alpha}}{\delta}\right) \right\},
\end{eqnarray*}
where the constant $D$ is such that $r(T) \leq D T^\alpha$. By using Proposition~\ref{lem:technical}, we obtain, for $\alpha \in [1,e/2]$, 
\[T_0(\delta) \leq C(\eta) + \frac{\alpha(1+\eta)}{H^*_\epsilon(\nu)}\left[\log \left(\frac{De(1+\eta)^\alpha}{\delta (H_\epsilon^*(\nu))^\alpha}\right) +  \log\log \left(\frac{D(1+\eta)^\alpha}{\delta (H_\epsilon^*(\nu))^\alpha}\right) \right].\]
The last upper bound yields, for every $\eta>0$ and $\epsilon>0$, 
\[\liminf_{\delta \rightarrow 0} \frac{\mathbb{E}_{\nu}[\tau_\delta]}{\log(1/\delta)} \leq \frac{\alpha (1+\eta)}{H_\epsilon^*(\nu)}.\]
As $\eta$ and $\epsilon$ go to zero, by continuity of $g$ and by definition of $w^*$,
\[\lim_{\epsilon \rightarrow 0} H_\epsilon^*(\nu) = T^\star(\nu)^{-1}.\]
This yields 
\[\liminf_{\delta \rightarrow 0} \frac{\mathbb{E}_{\nu}[\tau_\delta]}{\log(1/\delta)} \leq \alpha T^\star(\nu)\;.\]
\end{proof}

\section{Details of Experiments}
\label{appdx:exp}

\subsection{Calculation of an Optimal Weight}
To update the allocation $w(t)$, we need to solve minimax optimization problem defined as \eqref{eq:inner_opt}. Unlike \citet{Garivier2016}, we do not have an analytical solution for this problem. Therefore, we solve this problem numerically, using sequential quadratic programming. In our experiments, we use the sequential least squares programming (SLSQP) algorithm implemented in the {\texttt{optimize.minimize}} method of scipy, which is a Python library. Note that \citet{Garivier2016} only used the bisection method for the numerical optimization from the help of the analytical solution of the inner optimization in $L_{a,b}(\cdot)$. Unlike \citet{Garivier2016}, in our case, errors of optimization affect the results more. 

\subsection{Environment of Experiments}
All experiments were conducted on a MacBook Pro with a 2.8GHz quad-core Intel Core i7. We use Python language. The version of Python is 3.7.5, and that of SciPy is 1.4.1. To reduce the computational load, $\bm w$ is updated once every $10$ trial. This is an asymptotically negligible heuristic.

\subsection{Experimental Settings and Additional Results with Bernoulli bandit models}
In all experiments with Bernoulli bandit models, we assume that there exist two contexts $X_t \in\{1,2\}$ and each context is drawn with probability $0.5$. 

We conduct three additional experiments with different settings from the one in Section~\ref{sec:experiments}. For the Bernoulli bandit model, we consider a situation where the marginalized mean rewards are $\{\mu_1, \mu_2, \mu_3, \mu_4\} = \{0.5, 0.45, 0.43, 0.4\}$, which is the same as one of the scenarios used in \citet{Garivier2016}. Suppose that for each context, the conditional mean rewards are given as $\{\mu_{1,1}, \mu_{2,1}, \mu_{3,1}, \mu_{4,1}\} = \{0.5, 0.01, 0.4, 0.01\}$ and $\{\mu_{1,2}, \mu_{2,2}, \mu_{3,2}, \mu_{4,2}\} = \{0.5, 0.89, 0.46, 0.79\}$. We show the evolutions of the GLRT statistic in Figure~\ref{fig:res_track2}. As well as the result shown in Section~\ref{sec:experiments}, the CTS algorithm achieves a smaller sample complexity than TS. However, the variance is larger than the case discussed in Section~\ref{sec:experiments}. We believe that this is due to the gaps between the mean rewards are smaller than in the previous case and to the errors of the estimation/optimization affect the results more.

Next, we consider another scenario: $\{\mu_1, \mu_2, \mu_3, \mu_4\} = \{0.3, 0.21, 0.2, 0.19, 0.18\}$ and $\{\mu_1, \mu_2, \mu_3, \mu_4\} = \{0.5, 0.45, 0.43, 0.4\}$, which are the same as \citet{Garivier2016} and our previous experiments. 
For each setting, we use the same conditional mean rewards as $\{\mu_{1,1}, \mu_{2,1}, \mu_{3,1}, \mu_{4,1}\} = \{0.5, 0.2, 0.2, 0.1\}$. The counterparts and $\{\mu_{1,2}, \mu_{2,2}, \mu_{3,2}, \mu_{4,2}\}$ for $\{\mu_1, \mu_2, \mu_3, \mu_4\} = \{0.3, 0.21, 0.2, 0.19, 0.18\}$ and $\{\mu_1, \mu_2, \mu_3, \mu_4\} = \{0.5, 0.45, 0.43, 0.4\}$ are $\{\mu_{1,2}, \mu_{2,2}, \mu_{3,2}, \mu_{4,2}\} = \{0.1, 0.22, 0.2, 0.28\}$ and $\{\mu_{1,2}, \mu_{2,2}, \mu_{3,2}, \mu_{4,2}\} = \{0.5, 0.7, 0.66, 0.7\}$, respectively. Compared to these cases, the previous experiments take more extreme values of the conditional mean rewards. Therefore, in the current setting, we expect the difference between the results of track-and-stop and contextual track-and-stop to be less than in the previous ones. We show the value of the GLRT statistic in Figure~\ref{fig:res_track3}. As we expect, improvement is limited in this case.

\begin{figure}[t]
\vspace{-0.5cm}
  \begin{center}
    \includegraphics[width=100mm]{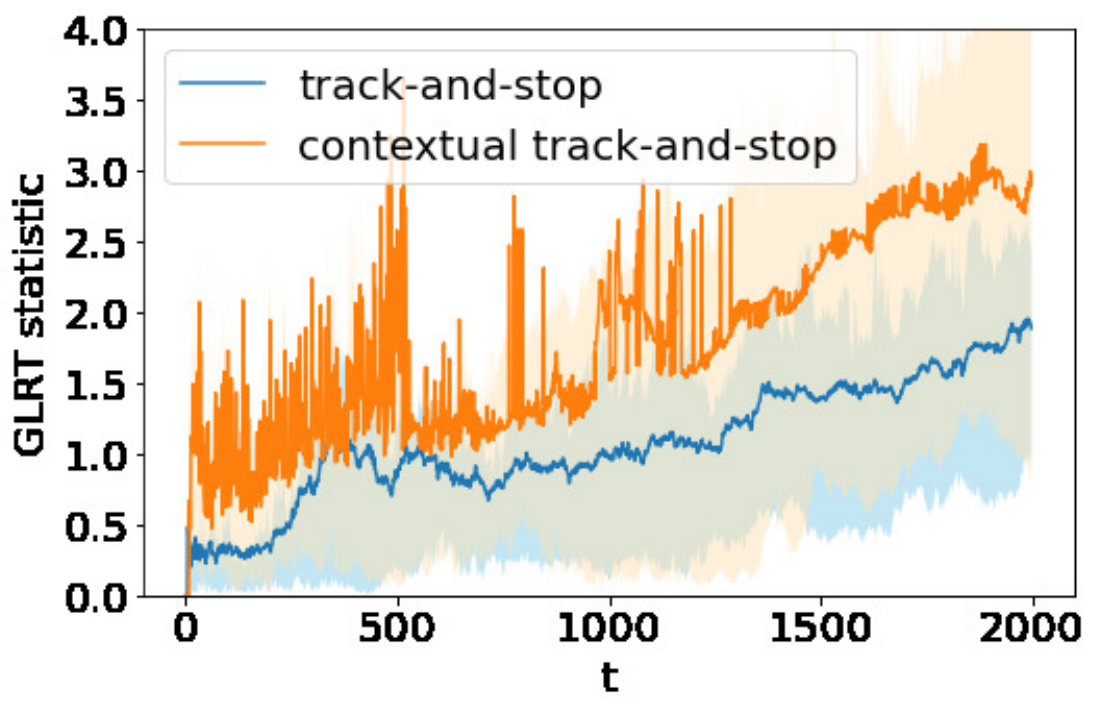}
  \end{center}
  \vspace{-0.5cm}
  \caption{This graph illustrates the maximum GLRT statistic $\max_{a\in[K]}\min_{b\in[K]\backslash\{a\}}Z_{a,b}(t)$. The solid line represents the averaged value over $20$ trials, and the light-colored area shows the values between the first and third quartiles.
  }
  \vspace{-0.2cm}
  \label{fig:res_track2}
  
  \begin{center}
    \includegraphics[width=140mm]{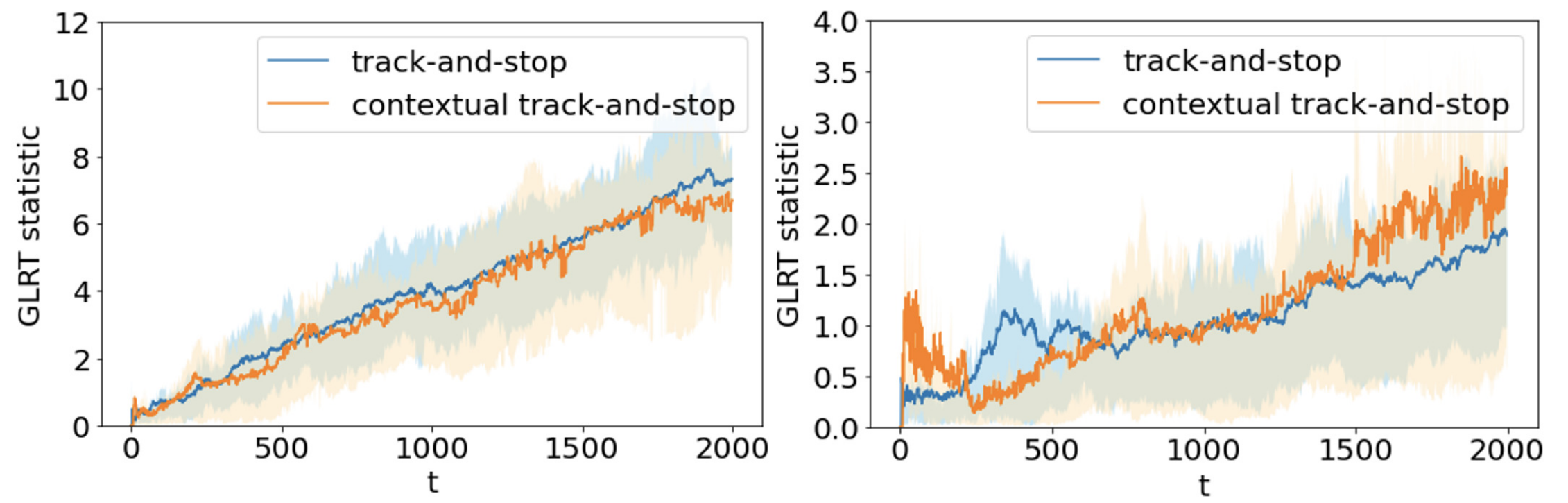}
  \end{center}
  \vspace{-0.5cm}
  \caption{This graph illustrates the maximum GLRT statistic $\max_{a\in{k}}\min_{b\in[K]\backslash\{a\}}Z_{a,b}(t)$. The left figure shows when $\{\mu_1, \mu_2, \mu_3, \mu_4\} = \{0.3, 0.21, 0.2, 0.19, 0.18\}$ is given. The right figure shows the results when $\{\mu_1, \mu_2, \mu_3, \mu_4\} = \{0.5, 0.45, 0.43, 0.4\}$ is given. The solid line represents the averaged value over $20$ trials, and the light-colored area shows the values between the first and third quartiles.
  }
  \vspace{-0.2cm}
  \label{fig:res_track3}
\end{figure}

\end{document}